\documentclass[twoside]{article}

\usepackage[accepted]{aistats2026}
\usepackage{fancyhdr}
\usepackage{algorithm}
\usepackage{amsfonts}
\usepackage{amsmath, amssymb}
\usepackage{amsthm}
\usepackage{algpseudocode}
\usepackage{float}
\usepackage{comment}
\usepackage{color}
\usepackage{booktabs}
\usepackage{multirow}

\raggedbottom

\theoremstyle{plain}
\newtheorem{theorem}{Theorem}[section]
\newtheorem{proposition}[theorem]{Proposition}
\newtheorem{corollary}[theorem]{Corollary}
\newtheorem{lemma}[theorem]{Lemma}

\theoremstyle{remark}

\theoremstyle{definition}
\newtheorem{definition}[theorem]{Definition}
\newtheorem{assumption}[theorem]{Assumption}

\newcommand{\R}{\mathbb{R}}
\newcommand{\by}{\mathbf{y}}
\newcommand{\bx}{\mathbf{x}}
\newcommand{\bX}{\mathbf{X}}
\newcommand{\bS}{\mathbf{S}}
\newcommand{\bK}{\mathbf{K}}
\newcommand{\bM}{\mathbf{M}}
\newcommand{\bI}{\mathbf{I}}

\newcommand{\bP}{\mathbf{P}}
\newcommand{\bone}{\mathbf{1}}
\newcommand{\bbeta}{\boldsymbol{\beta}}
\newcommand{\bb}{\mathbf{b}}
\newcommand{\bB}{\mathbf{B}}
\newcommand{\bt}{\mathbf{t}}
\newcommand{\bs}{\mathbf{s}}
\newcommand{\mK}{\mathcal{K}}
\newcommand{\jmat}{\mathbf{J}_n}

\newcommand{\bH}{\mathbf{H}}
\newcommand{\bE}{\mathbf{E}}
\newcommand{\bh}{\mathbf{h}}

\newcommand{\bD}{\mathbf{D}}
\newcommand{\bZ}{\mathbf{Z}}

\newcommand{\bv}{\mathbf{v}}
\newcommand{\f}{\mathbf{f}}
\newcommand{\E}{\mathbb{E}}
\newcommand{\norm}[1]{\left\| #1 \right\|}
\newcommand{\Smatk}{\mathbf{S}^{(k)}}
\newcommand{\Kmatk}{\mathbf{K}^{(k)}}
\newcommand{\Kmat}{\mathbf{K}}
\newcommand{\Smat}{\mathbf{S}}
\newcommand{\sveck}{\mathbf{s}^{(k)}}
\newcommand{\kveck}{\mathbf{k}^{(k)}}
\newcommand{\rveck}{\mathbf{r}^{(k)}}

\usepackage{bbm}
% If your paper is accepted, change the options for the package
% aistats2022 as follows:
%
% \usepackage[accepted]{aistats2024}
%
% This option will print headings for the title of your paper and
% headings for the authors names, plus a copyright note at the end of
% the first column of the first page.

% If you set papersize explicitly, activate the following three lines:
%\special{papersize = 8.5in, 11in}
%\setlength{\pdfpageheight}{11in}
%\setlength{\pdfpagewidth}{8.5in}

% If you use natbib package, activate the following three lines:
\usepackage[round]{natbib}
\usepackage[skins,theorems]{tcolorbox}
\usepackage[colorlinks=true,allcolors=blue]{hyperref}

% If you use BibTeX in apalike style, activate the following line:

\begin{document}
	
	% If your paper is accepted and the title of your paper is very long,
	% the style will print as headings an error message. Use the following
	% command to supply a shorter title of your paper so that it can be
	% used as headings.
	%
	%\runningtitle{I use this title instead because the last one was very long}
	
	% If your paper is accepted and the number of authors is large, the
	% style will print as headings an error message. Use the following
	% command to supply a shorter version of the authors names so that
	% they can be used as headings (for example, use only the surnames)
	%
	%\runningauthor{Surname 1, Surname 2, Surname 3, ...., Surname n}
	
	% Supplementary material: To improve readability, you must use a single-column format for the supplementary material.
	
	\twocolumn[

    \aistatstitle{Statistical Inference for Explainable Boosting Machines}
    
    \aistatsauthor{
      Haimo Fang$^{1}$ \And
      Kevin Tan$^{2}$ \And
      Jonathan Pipping-Gam\'{o}n$^{2}$ \And
      Giles Hooker$^{2}$
    }
    
    \aistatsaddress{
      $^{1}$School of Economics, Fudan University \\
      $^{2}$Department of Statistics and Data Science, The Wharton School, University of Pennsylvania
    }

    ]

	\begin{abstract}
		Explainable boosting machines (EBMs) are popular ``glass-box'' models that learn a set of univariate functions using boosting trees. These achieve explainability through visualizations of each feature’s effect. However, unlike linear model coefficients, uncertainty quantification for the learned univariate functions requires computationally intensive bootstrapping, making it hard to know which features truly matter. We provide an alternative using recent advances in statistical inference for gradient boosting, deriving methods for statistical inference as well as end-to-end theoretical guarantees. Using a moving average instead of a sum of trees (Boulevard regularization) allows the boosting process to converge to a feature-wise kernel ridge regression. This produces asymptotically normal predictions that achieve the minimax-optimal MSE for fitting Lipschitz GAMs with $p$ features of $O(p\, n^{-2/3})$, successfully avoiding the curse of dimensionality. We then construct prediction intervals for the response and confidence intervals for each learned univariate function with a runtime independent of the number of datapoints, enabling further explainability within EBMs. Code is available at \href{https://github.com/hetankevin/ebm-inference}{github.com/hetankevin/ebm-inference}.

	\end{abstract}
	
	\section{Introduction}
	Modern machine learning methods have proven to be highly successful, achieving impressively high performance even for complicated prediction tasks. Still, this comes at a cost. These methods have been critiqued for their lack of transparency, often described as ``black-box'' models. As such, much work has been done in an attempt to provide explanations for their behavior.\footnote{Popular methods include LIME \cite{ribeiro2016should}, SHAP \cite{lundberg2017unified}, ALE \cite{apley2020visualizing} and partial dependence \cite{friedman2001greedy} plots.}Another line of work attempts to convince practitioners to instead use inherently transparent models that are algebraically simple enough to be inspected by humans \cite{rudin2019stop}. Neither is optimal. Explaining black-box models is a difficult problem \cite{hooker2021unrestrictedpermutationforcesextrapolation, fryer2021shapleyvaluesfeatureselection}, and inherently interpretable models can sacrifice performance. 
	
	In light of this, Explainable Boosting Machines (EBM) \cite{yin2012intelligible, nori2019interpretmlunifiedframeworkmachine} have emerged as a popular ``glass-box'' model to strike a balance between both. An EBM learns a Generalized Additive Model (GAM) \cite{hastie1986gams}: $\f(\bx) = \sum_{k=1}^{p}\f^{(k)}(\bx^{(k)})$, where each $\f^{(k)}$ is assumed to be a univariate function of the $k$-th feature. However, instead of fitting splines, it instead fits a set of univariate gradient boosting regression trees. The tree-based boosting procedure attempts to recover the expressiveness of state-of-the-art algorithms \cite{chen2016xgboost, ke2017lightgbm}, while the decomposition into additive components allows the measurement of the marginal contribution of each feature like linear models.  While EBMs impose strong structural constraints on the resulting prediction, their ability to reproduce nonlinearities, along with appropriate choices of additive structure, provides comparable performance to SOTA boosting \cite{nori2019interpretmlunifiedframeworkmachine}.

	\begin{figure*}[ht!]
    \vspace{-2ex}
		\centering
		\includegraphics[width=\linewidth]{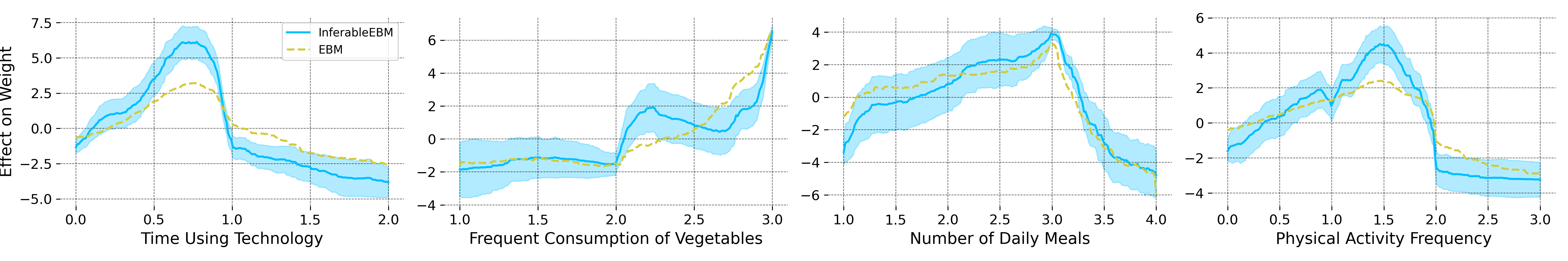}
		\vspace{-4ex}
		\caption{Example of the feature-specific confidence intervals generated by Algorithm \ref{alg:brebm_ident}, compared to the point estimates from EBM \cite{nori2019interpretmlunifiedframeworkmachine}. This visualizes centered feature effects on the UCI Machine Learning Reposity Obesity \cite{estimation_of_obesity_levels_based_on_eating_habits_and_physical_condition__544} dataset, when predicting weight as the response. }
		\label{fig:ebm-effects-wide}
        \vspace{-2ex}
	\end{figure*}
	
	While EBMs provide a clear graphical representation for the effects of each feature, these are still point estimates. The central question we ask for uncertainty quantification can be phrased as this: \textit{Given a model design, if we collect a new dataset and retrain the model, how different would our new prediction be?} In response to this question, vanilla EBMs \cite{yin2012intelligible} compute variance estimates via bootstrapping, though this is computationally intensive and only heuristically justified. Our primary objective is hence \emph{well-calibrated frequentist uncertainty quantification} for EBMs while retaining their interpretability and scalability.

	A canonical alternative to bootstrapping is to construct confidence and prediction intervals using the asymptotic properties of the model. This is the case with GAMs fitted via the backfitting (but not boosting) algorithm, which enjoy a rich asymptotic theory that enables the construction of confidence and prediction intervals for them \cite{2f43af29-e27b-3b6f-9c3a-d7c3bd53adac}. Another widely used alternative for prediction intervals is conformal prediction, which provides finite-sample marginal coverage with minimal assumptions. However, conformal methods do not directly yield feature-wise confidence intervals for the GAM components, which we aim to explore in this work.
	
	Unfortunately, the asymptotic behavior of most gradient boosting algorithms (of which EBMs are one), is not known. \cite{ustimenko2023gradientboostingperformsgaussian} show that a randomized version of boosting converges to a Gaussian process. On the other hand, \cite{zhou2019boulevardregularizedstochasticgradient} show that taking an average instead of a sum of trees (in a process which they call Boulevard regularization) converges to a kernel ridge regression, yielding asymptotic normality. This was recently leveraged by \cite{fang2025gradient} to construct confidence intervals for the underlying nonparametric function $\f$ and prediction intervals for the labels, with extensions to random dropout and parallel boosting.

    \paragraph{Our contributions.}
     Unfortunately, the underlying nonparametric function itself can be highly complicated and uninterpretable in general. Restricting this to the class of GAMs, as with EBMs, yields interpretability. However, the existing Boulevard theory \cite{zhou2019boulevardregularizedstochasticgradient} does not directly apply to the class of algorithms we consider here. These results need to be extended to allow different structure matrices (resulting in different kernels) for each feature, along with enforcing the GAM identifiability conditions. While we sketch an algorithm that can be analyzed largely within the framework of \cite{fang2025gradient}, further algorithmic variants that either employ backfitting directly or employ parallelization methods result in different algorithmic limits requiring separate analysis. Unlike prior Boulevard analyses, EBMs induce \emph{feature-specific} kernels. This makes the theory substantially more delicate: one must control their interaction under centering and GAM identifiability, and establish asymptotic normality for each additive component, not just the overall predictor. To handle these additional challenges, we make the following contributions:
     
	\vspace{-2ex}
	\begin{itemize}
		\item \textbf{Asymptotic Theory.} We establish central limit theorems for three variations of EBMs (parallelized, backfitting-like parallelized, and sequential), showing that the Boulevard-regularized ensemble converges to a feature-wise kernel ridge regression. The resulting predictions are asymptotically normal and achieve the minimax-optimal MSE of $O(pn^{-2/3})$ for fitting Lipschitz $p$-dimensional GAMs \cite{yuan2015minimaxoptimalratesestimation}. This avoids the curse of dimensionality, significantly improving upon the rates of \cite{zhou2019boulevardregularizedstochasticgradient, fang2025gradient}.
		
		\item \textbf{Frequentist Inference.} By leveraging our proposed CLT for baseline vector $\bbeta$ and feature-wise predictions $\widehat \f^{(k)}$, we extend the heuristic bootstrap to a set of inference toolkits. We provide confidence intervals for each underlying function $\f^{(k)}$ (see Figure \ref{fig:ebm-effects-wide} for an example), and prediction intervals for the response. This also allows a direct test of variable importance following the methods of \cite{mentch2015quantifyinguncertaintyrandomforests}. We round out those propositions by numerical experiments to back up the validity of our algorithms.
		
		\item \textbf{Computational Efficiency.} The derived procedures above rely on a kernel ridge regression, with a prohibitively expensive $O(n^3)$ runtime. \cite{fang2025gradient} employ the Nyström approximation to produce intervals in $O(np^2)$ kernel precomputation and $O(p^2)$ query time, but the recursive implementation of \cite{musco2017recursivesamplingnystrommethod} can be slow in practice. Given that EBMs utilize histogram trees, we instead work out the computations in histogram bin-space.
        This bin-based implementation runs in an additional lower-order $O(pm^2)$ time per training round, $O(pm^3)$ kernel precomputation time, and $O(pm^2)$ interval query time, where $p$ is the number of features and $m$ the maximum number of bins per-feature. As the number of bins is usually capped at $255$ or $511$, this is independent of the number of samples $n$!
	\end{itemize}
	\begin{figure*}[ht!]
    \vspace{-3ex}
    \centering
     \includegraphics[width=0.8\linewidth]{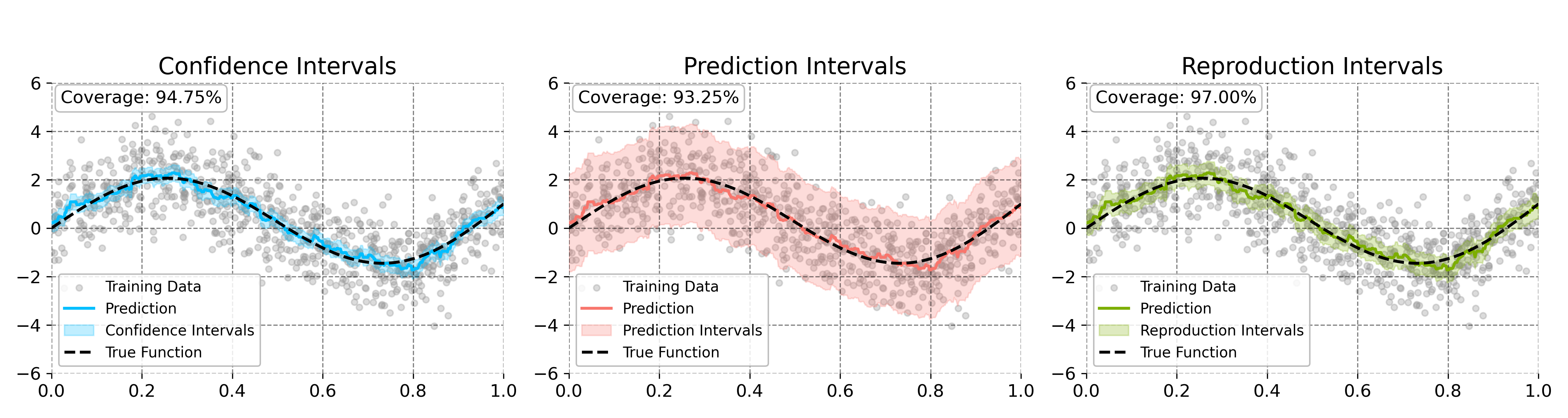}
        \vspace{-2ex}
	    \caption{Example of confidence and prediction intervals on a simple 1D function $y=2\sin(2\pi x) + x^2$.}
	    \label{fig:coverage-1d} \vspace{-2ex}
	\end{figure*}
	
	% {\bf exploitation of basis representation?}
	
	% {\bf need to sketch out results and simulations}
	\section{Setup and Notation}
	\label{sec:setup}
	
	%    \kevin{Introduce plot on overfitting, and refer the reader to Fig 2 of our last paper for evidence that Boulevard avoids overfitting and allows the user to ignore the need for early stopping. The figure we will provide (overfit\_mse.png) shows that the EBM and InferableEBM frameworks are robust to this. EBM does so through a really small learning rate and random split sampling, we do so via Boulevard regularization. }
	
	Let $(\bX_n, \by_n) = (\bx_i,y_i)_{i=1}^n$ are i.i.d.\ samples with $\bx_i=(\bx_i^{(1)},\ldots,\bx_i^{(p)})^\top$. Let $\jmat=\bI_n-\frac{1}{n}\bone\bone^\top$ denote the centering projector. We adopt the standard GAM identifiability convention that each component is centered on the training set: $\sum_{i=1}^n \f^{(k)}(\bx_i^{(k)})=0$ for all $k$, so the intercept equals the sample mean. Then:
	\begin{assumption}[Additive Lipschitz Nonparametric Regression]
		\label{aspt:truth}
		Assume that the response satisfies
		$$
		y \;=\; \bbeta \;+\; \sum_{k=1}^p \mathbf{f}^{(k)}(\mathbf{x}^{(k)}) \;+\; \epsilon : = \f(\bx) + \epsilon,
		$$
		where $y, \bbeta \in\mathbb{R}$, $\mathbf{x}=(\bx^{(1)},\dots,\bx^{(p)})^\top\in\mathbb{R}^p$, and $\varepsilon$ are i.i.d.\ sub–Gaussian errors with mean zero and variance $\sigma^2<\infty$.  
		Assume the covariates have a bounded density on their support: $0<c_1\leq \mu(\mathbf{x}) \leq c_2<\infty$.  Each component function $\f^{(k)}:\mathbb{R}\to\mathbb{R}$ is mean–zero with respect to the marginal of $\bx^{(k)}$, and is $L$-Lipschitz. The intercept $\bbeta$ is defined as the unconditional mean of the response: $\bbeta := \E_{\bx, \epsilon}[y] = \int y \, d\mu_{\bx}(x)\, d\mu_\epsilon(\epsilon).$
	\end{assumption}
	
	\paragraph{Boulevard Regularization.}\cite{zhou2019boulevardregularizedstochasticgradient} replace the usual additive update with a moving average. Instead of adding the next tree: $\widehat{\f}_b \leftarrow \f_{b-1}+\lambda\bt_b$, they average over all previous trees: $\widehat{\f}_b \leftarrow \frac{b-1}{b}\f_{b-1}+\frac{\lambda}{b}\bt_b$. This idea of shrinking the previous ensemble as also been adopted by \cite{ustimenko2022sglbstochasticgradientlangevin, ustimenko2023gradientboostingperformsgaussian}. This ensures convergence to a kernel ridge regression in the limit of infinite boosting rounds, and that the resulting predictions are asymptotically normal.\footnote{A particular advantage of Boulevard regularization is that it is robust to overfitting \cite{fang2025gradient}. Figure \ref{fig:overfit_mse} shows this behavior for Algorithm \ref{alg:brebm_ident}. EBM achieves this here via tiny learning rates when cycling through features in sequence. We do this while fitting to features in parallel.}

	\section{Algorithms} \label{sec:algorithms}
    
	\begin{algorithm}[ht!]
		\caption{Inferable EBM}
		\label{alg:brebm_ident}
		\begin{algorithmic}[1]
			\State \textbf{Input:} $\mathbf{X}\in\mathbb{R}^{n\times p}$, $\mathbf{y}\in\mathbb{R}^n$, subsample rate $\xi$, learning rate $\lambda$, rounds $B$, truncation level $M>0$.
			\State \textbf{Init:} $\widehat{\bbeta}_0 \gets \frac{1}{n}\sum_{i=1}^n \by_i$;\quad $\f_k^{(0)} \gets 0$ for all $k$.
			\For{$b=1$ \textbf{ to } $B$}
			\For{$k=1$ \textbf{ to } $p$ \textbf{ in parallel}}
			\State Sample $G_{b,k}\subset\{1,\dots,n\}$ i.i.d. w.p. $\xi$.
			\State $r_{i,k} \gets \by_i - (\widehat{\bbeta}_{b-1} + \sum_{\ell} \f_{b-1}^{(\ell)}(\bx_i^{(\ell)}))$.
			\State Fit tree $\bt^{(b,k)}$ to $\{(\bx_i^{(k)}, r_{i,k})\}_{i\in G_{b,k}}$.
			\State $\mu_{b,k} \gets \frac{1}{n}\sum_{i=1}^n \bt^{(b,k)}(\bx_i^{(k)})$.
			\State $\widetilde \bt_b^{(k)}(\bx) \gets t_b^{(k)}(\bx) - \mu_{b,k}$.
			\State $\widetilde \bt_b^{(k)}(x) \gets \max\{-M,\;\min\{\widetilde \bt_b^{(k)}(\bx),M\}\}$.
			\State $\widehat{\bbeta}_b \gets \widehat{\bbeta}_{b-1} + \mu_{b,k}$.
			\State $\f_b^{(k)} \gets \frac{b-1}{b} \f_{b-1}^{(k)} + \frac{\lambda}{b}\,\widetilde \bt_b^{(k)}$.
			\EndFor
			\EndFor
			\State \textbf{Output:} $\widehat \f(x) \gets \widehat{\bbeta}_B + \frac{1+\lambda}{\lambda}\sum_{k=1}^p \f_k^{(B)}(\bx^{(k)})$.
		\end{algorithmic}
	\end{algorithm}

    We adapt the above regularized boosting procedure to fitting EBMs below in Algorithm \ref{alg:brebm_ident}, in the hope that doing so allows us to show similar theoretical guarantees for the resulting procedure. 
    
    \paragraph{Parallelization and  inference.} Algorithm \ref{alg:brebm_ident} is a minimal parallelized variation on EBMs. At each round $b$, the new set of $p$ univariate trees $(\bt_b^{(k)})_{k=1}^p$ are fitted in parallel by regressing the current residual $r_{i,k}$ on each of the $p$ features $\bx_i^{(k)}$. The trees are then centered and clipped, before being introduced into the ensemble as part of the average: $\mathbf{f}_b^{(k)} \gets \frac{b-1}{b} \mathbf{f}_{b-1}^{(k)}+\frac{1}{b} \widetilde{\mathbf{t}}_b^{(k)}$.
    
    Averaging yields a kernel ridge regression limit and a CLT, but (as in Boulevard) the fitted signal is asymptotically shrunk, requiring rescaling by $\frac{1+\lambda}{\lambda}$. In Section \ref{sec:finite-sample-convergence}, we show that the prediction made by Algorithm \ref{alg:brebm_ident} converge to a kernel-ridge-regression. Informally:
	$$
	\widehat{\f}_b(\bx) \underset{b\to\infty}{\overset{a.s.}{\longrightarrow}} \bar{\by}+ \sum_{k=1}^{p}\E[\sveck(\bx)]^\top \jmat[\bI + \jmat\bK]^{-1}\by.
	$$
	The down-scaling stems from the matrix $[\bI + \jmat\bK]^{-1}$: When the aggregated kernel $\bK = \sum_{k=1}^{p} \Kmatk$ becomes nearly the identity (i.e. the aggregated decision boundaries are fine enough), we see a $\frac{\lambda}{1+\lambda}$ shrinkage.

    \paragraph{Difficulty in parallelization.} 
    The parallelization in Algorithm \ref{alg:brebm_ident} does not come for free. One has to either use a small learning rate of $1/p$, or make a rather strong assumption on the feature kernels (see Assumption \ref{aspt:proj-orth}) for Algorithm \ref{alg:brebm_ident} to provably converge.
    
    Though in practice this is not an issue, we explore alternatives below for the sake of completeness. One is to abandon parallelism, and randomly choose a single feature to fit a tree to during each boosting round. This is presented in Algorithm \ref{alg:brebm_random_cyclic} (see Appendix \ref{app:algorithms}), which converges to the same fixed point as Algorithm \ref{alg:brebm_ident}, but without the computational gains. 
	
	\paragraph{Leave-one-out procedure.} One problem remains. Algorithms \ref{alg:brebm_ident} and \ref{alg:brebm_random_cyclic} converges to $\frac{\lambda}{1+\lambda} \f \leq \f/2$, and so require rescaling by $\frac{1+\lambda}{\lambda}$. 
    We provide an alternative inspired by the backfitting-like procedure of \cite{fang2025gradient} in Algorithm \ref{alg:ebm_bratp} that does not require rescaling. 
    
    This hybrid backfitting-boosting procedure amounts to not using predictions from trees trained on feature $\bx^{(k)}$ when fitting a new tree $\bt_{b}^{(k)}$ for it\footnote{This can be interpreted as treating other functions $\f^{(\ell)} \neq \f^{(k)}$ as nuisances and learning them via boosting, while learning $\f^{(k)}$ as a random forest.} -- instead using  residuals $\by_i - (\widehat{\bbeta}_b+\sum_{\ell\neq k}\f_{b-1}^{(\ell)}(\bx_i^{(\ell)}))$.
    Although this does not require rescaling, it produces complicated confidence intervals that are harder to compute efficiently. Informally, Algorithm \ref{alg:ebm_bratp} converges to:
	$$
	\bar{\by}+ \sum_{k=1}^{p}\E[\sveck(\bx)]^\top (\bI - \E\Smatk)^{-1} \jmat\Big[\bI-(\bI+\mK)^{-1}\mK\Big]\by,
	$$
    where $\mathcal{K}:=\sum_{k=1}^p(\mathbf{I}-\mathbb{E} \mathbf{S}^{(k)})^{-1} \mathbf{J}_n \mathbb{E} \mathbf{S}^{(k)}$ is a somewhat more complicated combined kernel that unlike Algorithms \ref{alg:brebm_ident} and \ref{alg:brebm_random_cyclic}, does not decompose into the sum of the individual feature kernels. 
	\paragraph{Comparison to vanilla EBMs.} Our algorithms possess advantages over vanilla EBMs. The convergence to a kernel ridge regression allows us to show a central limit theorem for each function $\f^{(k)}$:
    $$\lVert\rveck(\bx)\rVert_2^{-1}\left(c_E\cdot\widehat{\f}^{(k)}(\bx) - \f^{(k)}(\bx)\right)\stackrel{d}{\longrightarrow}\mathcal{N}(0, \sigma^2),$$
    where $c_E$ is a scaling factor that is $\frac{\lambda}{1+\lambda}$ for Algorithms \ref{alg:brebm_ident} and \ref{alg:brebm_random_cyclic}, and $1$ for Algorithm \ref{alg:ebm_bratp}. $\rveck$ are the kernel weights for the kernel ridge regression predictions ${\rveck}^\top\by$. For instance, Algorithms \ref{alg:brebm_ident} and \ref{alg:brebm_random_cyclic} converge to ${\rveck}^\top\by := \mathbb{E}\left[\mathbf{s}^{(k)}(\mathbf{x})\right]^{\top} \mathbf{J}_n\left[\mathbf{I}+\mathbf{J}_n \mathbf{K}\right]^{-1} \by.$
    
    This asymptotic normality yields confidence intervals for each $\f^{(k)}$ and prediction intervals for the response $\by$ -- as long as we can compute $\lVert\rveck(\bx)\rVert_2$ efficiently. It also allows us to prove guarantees on the generalization error of each algorithm. We later show that $\lVert\rveck(\bx)\rVert_2 = \Theta(n^{-1/3})$, allowing us to show that our algorithms achieve the minimax-optimal MSE of $O(p\,n^{-2/3})$ for fitting Lipschitz GAMs. In comparison, vanilla EBMs hold \emph{no such guarantees}.
    
    % By incorporating Boulevard regularization, one replace the heuristic error band with asymptotic variance $\sigma_{\widehat{\f}}^2 = \norm{\rveck}\sigma^2$, where $\rveck$ is the kernel-ridge-regression style operator we introduce later in section \ref{sec:theory}. Following the same intuition in \cite{zhou2019boulevardregularizedstochasticgradient}, the ultimate prediction is down-scaled by $\frac{\lambda}{1+\lambda}$. As a complement, Algorithm \ref{alg:ebm_bratp} is proposed to avoid such scaling.

    \paragraph{Comparison to Boulevard.} The original gradient boosting algorithm with Boulevard regularization of \cite{zhou2019boulevardregularizedstochasticgradient} converges to a kernel ridge regression of the form $\E[\bs(\bx)]^\top(\lambda^{-1}\bI + \E[\bS])^{-1}\E[\bS]\by$. This is similar to our result for Algorithms \ref{alg:brebm_ident} and \ref{alg:brebm_random_cyclic}, but we achieve convergence for each centered univariate function while they achieve convergence only for the overall function $\f$. As a consequence, their method has an MSE of $O(n^{-\frac{1}{d+1}})$ that grows exponentially in dimension. This is not even minimax-optimal for the nonparametric Lipschitz regression setting they are in.
    
    The variants of \cite{fang2025gradient} incorporating dropout and parallel training converge to slightly different fixed points, but ultimately achieve the same asymptotic MSE rates exponential in dimension.
    
    % By way of comparison, in standard Boulevard trees have access to all features at once. \cite{fang2025gradient} added a dropout mechanism to the method first described in \cite{zhou2019boulevardregularizedstochasticgradient} in which trees are fit to residuals obtained after randomly dropping $(1-q)\%$ of the current ensemble resulting in a limit given by 
    % $\E \mathbf{s}(\bx) ^T (\lambda^{-1}\bI + q \E \Smat)^{-1} \E \Smat \by$
    % for a single expected tree structure matrix $\E \Smat$ which is approximately downscaled by $\lambda/(1+\lambda q)$.  A structured dropout algorithm that effectively divided the ensemble into $K$ groups, but with the same distribution over structure matrices, obtained a limit of $\E \mathbf{s}(\bx)^T(I + (K-1)\E \Smat)^{-1} K \E \Smat \by$, obviating the need to rescale. The analysis of this limit is simpler in comparison to those given above in having a common structure matrix and removing the identifiability concerns that motivate recentering. 
    
%    \kevin{Compare fixed points with Boulevard fixed points.}
	
    \section{Theory}
    \label{sec:theory}
    We now work towards proving the central limit theorem outlined above. To begin, we introduce a few definitions. First, we use regression trees as base learners. These can be understood as a linear transformation from the observed signal to the predictions via a smoothing tree kernel, formally defined below:
	\begin{definition}[Regression trees]
		\label{def:regression-trees}
		A regression tree $\bt_n$ trained on $n$ datapoints segments the covariate space $[0,1]^d$ into a partition of hyper-rectangles $\{A_i\}_{i=1}^m$. When $A(\bx)$ is the rectangle containing $\bx$, and $s_{n,j}(\bx)$ is the frequency at which training point $\bx_j$ and test point $\bx$ share a leaf, ${s}_{n,j}(\bx) = \frac{\mathbbm{1}(\bx_j \in A(\bx))}{\sum_{k=1}^n \mathbbm{1}(\bx_k \in A(\bx))},$ the regression tree $\bt_n$ predicts $\bt_n(\bx) = \sum_{j=1}^n s_{n,j}(\bx)\cdot y_j$.
	\end{definition}
	This suggests that ${s}_{n,j}(\bx)$, the frequency at which training point $\bx_j$ and test point $\bx$ share a leaf, defines a similarity measure. This observation is insightful. In fact, the frequency at which training points $\bx_i, \bx_j$ share a leaf, $s_{n,j}(\bx_i)$, yields a kernel!\footnote{Theorem 3 of \cite{zhou2019boulevardregularizedstochasticgradient} shows that $\E\bS_n$ is p.s.d. with spectral norm no more than 1.}
	\begin{definition}[Structure vectors and matrices]
		\label{def:structure-vec-mat}
		Let $\bt_n$ be a tree. $\bs_n(\bx) = (s_{n,1}(\bx),...,s_{n,n}(\bx))^\top$ be the structure vector of $\bx$. The matrix $\bS_n = \left(s_{n,j}(\bx_i)\right)_{i,j=1}^n$ with rows $\bs_n(\bx_i)^\top$ is the structure matrix of $\bt_n$.
	\end{definition}
	Following \cite{zhou2019boulevardregularizedstochasticgradient, fang2025gradient}, we require that the tree \emph{structures} be separated from the leaf  \emph{values}, and stabilize after a burn-in period:
	\begin{assumption}[Integrity]
		\label{aspt:integrity}
		Assume
        \vspace{-2ex}
		\begin{enumerate}
			\item (Structure-Value Isolation) The density for $\bs_b(\bx)$ is independent from the terminal leaf value $\by\in\mathbb{R}$.
			\item (Non-adaptivity) For each feature $k$, there exists a set of probability measures $\mathcal{Q}^{(k)}_n$ for tree structures such that all tree structures $\Smatk_b$ are independent draws from $\mathcal{Q}^{(k)}_n$ after some finite $b'$.
		\end{enumerate}
	\end{assumption}
	
	The former can be satisfied by refitting leaf values on a hold-out dataset \cite{wager2017estimationinferenceheterogeneoustreatment}. Non-adaptivity may occur naturally, but can also be enforced by sampling from previous tree structures after a burn-in period. In practice, our algorithms tend to perform well even in the absence of these assumptions.
	
	\paragraph{Tree Geometry}
	We control the geometry and size of leaves to ensure locality and stable averages. These assumptions control the MSE of the final estimate -- \cite{zhou2019boulevardregularizedstochasticgradient} and \cite{fang2025gradient} make spiritually similar assumptions, but with  different scalings in dimension that do not avoid the curse of dimensionality. Again, we avoid this problem due to the GAM structure of our problem setup:
	\begin{assumption}[Bounded Leaf Diameter]
		\label{aspt:leaf-diameter}
		Write $\mathsf{diam}(A)=\sup_{\bx_1,\bx_2\in A}\norm{\bx_1-\bx_2}$. For any leaf $A$ in a tree with structure $\Pi\in Q_n^{(k)}, \forall k = 1, \cdots, p$, we need $\sup_{A \in \Pi} \mathsf{diam}(A) =  O(d_n) = O(n^{-{1}/{3}})=o(\frac{1}{\log n})$.
	\end{assumption}
    Assumption \ref{aspt:ub-minimal-leaf-size} forces trees to perform only \emph{local} averaging by keeping leaves well-balanced. Assumption \ref{aspt:truth} allows depths as large as $\Omega(\log n)$, which is natural for balanced binary trees.
	
	\vspace{-1ex}
	\begin{assumption}[Increased Minimal Leaf Size]
		\label{aspt:ub-minimal-leaf-size}
		For any $\nu >0$, the leaf geometric volume $v_n$ is bounded such that \(v_n =n^{-\frac{2}{3}+\nu}<n^{-\frac{1}{3}}=O(d_n)\).
	\end{assumption}

    This is natural: the leaf count cannot exceed the effective histogram resolution. Enforcing at least $n^{2/3-\nu}$ points per leaf rules out overly fine, high-variance partitions.

	\begin{assumption}[Restricted Tree Support]
		\label{aspt:restricted-tree-space}
		For a tree constructed using feature $k$, the cardinality of the tree space $Q_n^{(k)}$ is bounded by $O\Big(n^{-1/3}\exp(n^{2/3-\epsilon_n})\Big)$, for some small $\epsilon_n \to 0$.
	\end{assumption}

    This is a technically convenient assumption for us to generalize it in random design case. It bounds the effective complexity of the tree class so we can use uniform LLNs and contraction arguments. The assumption still permits an super-polynomial large class in $n$, which is far looser than practice: vanilla EBMs use very shallow trees (typically depth 1--2) on a fixed histogram grid with at most a few hundred bins per feature, so the effective class is much smaller (roughly polynomial in $n$).
    
	Unlike \cite{zhou2019boulevardregularizedstochasticgradient} and \cite{fang2025gradient}, the assumptions above are dimension-free. They enforce asymptotic locality of the partition, allowing for control over bias and variance. Assumption \ref{aspt:ub-minimal-leaf-size} ensures stability of the structure matrix spectrum, allowing us to recover the minimax-optimal rate later.

    \paragraph{Warmup: Theory for Algorithm 
    \ref{alg:brebm_random_cyclic}.}

    As mentioned earlier, the theory for Algorithm \ref{alg:brebm_random_cyclic} is  the easiest to prove. With the following observation, this amounts to a straightforward extension of \cite{zhou2019boulevardregularizedstochasticgradient}. The learner updates a feature's kernel at random each round, so the tree kernel chosen by Algorithm \ref{alg:brebm_random_cyclic} amounts to $\E\bS = \frac{1}{p}\sum_{k=1}^p\E[\Smatk]$. 
    
    This observation allows us to apply the theory of \cite{zhou2019boulevardregularizedstochasticgradient} almost verbatim. As such, Algorithm \ref{alg:brebm_random_cyclic} gives a spiritually similar fixed point iteration $\tilde{\by}^*:=\sum_{k}\hat{\by}_k^*=\lambda\jmat\E\bS(\by - \tilde{\by}^*)$\footnote{We drop intercepts as they are orthogonal to centering.}, modulo centering. This leads to a ensemble-wise fixed point of $\tilde{\by}^* = (\bI + \jmat\E\bS)^{-1}\jmat\E\bS\by$ and the subsequent feature-wise fixed points $\hat{\by}_k^* = \lambda\jmat\E\bS(\bI - (\bI + \jmat\E\bS)^{-1}\jmat\E\bS)\by=\lambda\jmat\E\Smatk(\bI + \jmat\E\bS)^{-1}\by$. Finite sample convergence and asymptotic normality holds following the same recipe discussed in \cite{zhou2019boulevardregularizedstochasticgradient}.

    However, this sequential random update does not parallelize over features. 
    To address these concerns we return to  Algorithm \ref{alg:brebm_ident} and analyze its convergence, which we will discuss in Section \ref{subsec:alg.1}.

    \paragraph{Feature-specific (specialized) kernels.}
	Still, the above intuition is instructive. Unlike \citet{zhou2019boulevardregularizedstochasticgradient} and \citet{fang2025gradient}, which consider a single expected kernel $\E\bS$, EBMs induce \emph{feature-specific} kernels $\Kmatk:=\E[\Smatk]$.  This is a double-edged sword. On one hand, one has to control the way these kernels interact in our algorithms. On the other, it enables a clearer characterization of partial dependence, while allowing us to demonstrate feature-wise convergence, which in turn yields feature-wise confidence intervals. 

    \paragraph{Theory for Algorithm \ref{alg:brebm_ident}.}
    \label{subsec:alg.1}
    
    As mentioned earlier in Section \ref{sec:algorithms}, introducing parallelization introduces theoretical challenges. One way to prove that Algorithm \ref{alg:brebm_ident} converges is to use a learning rate of $\lambda =\frac{1}{p}$. However, this yields no speedup from Algorithm \ref{alg:brebm_random_cyclic} on average, as sampling with probability $1/p$ and averaging across $p$ updates are the same in expectation. 

    Another way, that allows us to use a learning rate of $\lambda \approx 1$, is to make the following assumption:
    
	\begin{assumption}[Projector-like feature kernels and conditional orthogonality]
		\label{aspt:proj-orth}
		Let $\Kmatk=\E[\Smatk]$ and $\jmat:=\bI_n-\frac{1}{n}\bone\bone^\top$. 
		For each feature $k$, let
		$$
		\mathcal H_k \;:=\; \big\{\,\bv\in\R^n:\ \exists\,g:\R\!\to\!\R\text, \bv_i=g(\bx_i^{(k)}),\ \bone^\top \bv=0\,\big\}
		$$
		with $P_k$ the orthogonal projector onto $\mathcal{H}_k$. Assume:
		\begin{itemize}
			\item \textbf{Projector-like approximation.} There exists $\varepsilon_{n,k}\to 0$ with
			$\big\|\, \jmat\,\Kmatk\,\jmat - P_k \,\big\|_{\mathrm{op}} \;\le\; \varepsilon_{n,k}\,.
			$
			\item \textbf{Conditional orthogonality.} For $a\neq k$, the additive subspaces are orthogonal:
			$
			P_a\,P_k \;=\; \mathbf{0}\,.
			$
		\end{itemize}
	\end{assumption}
	Assumption~\ref{aspt:proj-orth} is technically convenient but admittedly strong, especially the conditional orthogonality requirement. We therefore emphasize that it is used only to analyze Algorithm~\ref{alg:brebm_ident}. As a fallback, Algorithm~\ref{alg:ebm_bratp} removes this requirement and needs only Assumption~\ref{aspt:kernel-spec}. Intuitively, the projector-like condition says the centered expected tree kernel for feature $k$ behaves like the orthogonal projector onto mean-zero functions of $\bx^{(k)}$, while conditional orthogonality asserts these per-feature subspaces have negligible overlap across $k$ (similar in spirit to “no concurvity” conditions for uniqueness in additive models, e.g., \citealp{hastie1986gams}); this can hold exactly under feature independence and can be a reasonable approximation under weak dependence when partitions are shallow and not aligned across features.

    Now we discuss the finite-sample convergence for Algorithm \ref{alg:brebm_ident} as promised. A way forward is to first establish almost surely convergence as we keep boosting with $b\to \infty$, then seek asymptotic normality when sample size grows. The first part is usually handled by showing that the difference to a set of postulated fixed points is a stochastic contraction mapping, decaying to 0. However, by the design of Algorithm \ref{alg:brebm_ident}, stepwise mean convergence can be fragile when $\lambda \approx 1$, hence we enforce Assumption \ref{aspt:proj-orth} to ensure stability. Formally, the convergence behavior can be characterized in Theorem \ref{thm:finite-sample-convergence-1}, stated below.
    
    \begin{theorem}
		\label{thm:finite-sample-convergence-1}
		Define $\widehat{\mathbf{y}}_{b}:=\widehat{\bbeta}_{b}+\sum_{a=1}^{p}\widehat{\mathbf{y}}^{(a)}_{b},
		\jmat=\bI-\tfrac{1}{n}\bone\bone^\top.$
		The fixed points are
		\[
		\widetilde{\mathbf{y}}_k^* \;=\; \jmat\,\Kmatk\,[\lambda^{-1}\bI+\jmat\,\bK]^{-1}\mathbf{y},
		\qquad
        \bK\;:=\;\sum_{k=1}^{p}\Kmatk,
        \]
		and $\widehat{\mathbf{y}}^*\;=\;\widehat{\bbeta}^*\bone+\sum_{k=1}^{p}\widetilde{\mathbf{y}}_k^*$,
		with $\widehat{\bbeta}^*=\bar{\mathbf{y}}$. For each $k$,
        $$
        \widehat{\mathbf{y}}^{(k)}_{b}\;\stackrel{a.s.}{\longrightarrow}\; \widetilde{\mathbf{y}}^{(k)\,*},
		\qquad
		\widehat{\bbeta}_{b}\;\stackrel{a.s.}{\longrightarrow}\;\bar{\mathbf{y}}=\tfrac{1}{n}\bone^\top\mathbf{y},
		$$
		and hence $\widehat{\mathbf{y}}_{b}\stackrel{a.s.}{\to} \widehat{\mathbf{y}}^*$ almost surely.
	\end{theorem}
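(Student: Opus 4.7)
The plan is to cast the per-feature update as a linear stochastic-approximation scheme with step size $\lambda/b$, identify the claimed fixed point as the unique zero of the associated mean vector field under Assumption \ref{aspt:proj-orth}, and then upgrade mean contraction to almost-sure convergence through a Robbins--Siegmund-type argument.

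First I would vectorize: writing $\widehat{\mathbf{y}}_b^{(a)}:=(\f_b^{(a)}(\bx_i^{(a)}))_{i=1}^n$, the update in Algorithm \ref{alg:brebm_ident} becomes
\[
\widehat{\mathbf{y}}_b^{(a)} \;=\; \tfrac{b-1}{b}\,\widehat{\mathbf{y}}_{b-1}^{(a)} \;+\; \tfrac{\lambda}{b}\,\jmat\,\Smatk_b\,\br_{b-1}, \qquad \br_{b-1} \;:=\; \by - \widehat{\bbeta}_{b-1}\bone - \sum_\ell \widehat{\mathbf{y}}_{b-1}^{(\ell)}.
\]
The centering $\jmat$ is natural because $\Smatk_b\bone=\bone$ annihilates the intercept, and the clipping at $\pm M$ makes each increment uniformly bounded. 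Under Assumption \ref{aspt:integrity}, after the burn-in $b\ge b'$ one has $\E[\Smatk_b\mid\mathcal{F}_{b-1}]=\Kmatk$, so I would decompose $\jmat\Smatk_b\br_{b-1}=\jmat\Kmatk\br_{b-1}+\bZ_b^{(a)}$ with $\bZ_b^{(a)}$ a bounded $\mathcal{F}_b$-martingale difference. The intercept recursion $\widehat{\bbeta}_b=\widehat{\bbeta}_{b-1}+\mu_{b,k}$ telescopes into a Ces\`aro average of the per-round leaf means, which together with the eventual stabilization of the centered components below yields $\widehat{\bbeta}_b\to\bar{\by}$ a.s.\ via a standard strong-law argument.

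Stacking the centered blocks into $\widehat{\mathbf{Y}}_b\in\R^{np}$, the error $\mathbf{e}_b=\widehat{\mathbf{Y}}_b-\widetilde{\mathbf{Y}}^*$ has drift $\E[\mathbf{e}_b\mid\mathcal{F}_{b-1}]=(\bI-\tfrac{1}{b}\bI+\tfrac{\lambda}{b}\mathbf{H})\mathbf{e}_{b-1}$, where $\mathbf{H}$ acts block-wise as $(\mathbf{H}\widehat{\mathbf{Y}})^{(k)}=-\jmat\Kmatk\sum_\ell\widehat{\mathbf{y}}^{(\ell)}$. Assumption \ref{aspt:proj-orth} gives $\jmat\Kmatk\jmat=P_k+O(\varepsilon_{n,k})$ in operator norm with the $\{P_k\}$ mutually orthogonal projectors, so $\sum_kP_k$ is itself an orthogonal projector and the fixed-point equation $\mathbf{H}\widetilde{\mathbf{Y}}^*+\mathbf{c}=\mathbf{0}$ decouples on $\oplus_k\mathrm{range}(P_k)$, reproducing the stated closed form for $\widetilde{\mathbf{y}}_k^*$. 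In that same basis $\mathbf{H}$ has spectrum in $[-1,0]$ bounded away from zero, which yields $\|\E[\mathbf{e}_b\mid\mathcal{F}_{b-1}]\|_2\le(1-c/b)\|\mathbf{e}_{b-1}\|_2$ for some $c>0$ and $b$ large. Since $\sum_b(\lambda/b)^2<\infty$ and $\bZ_b^{(a)}$ is a bounded martingale-difference sequence, Robbins--Siegmund then yields $\|\mathbf{e}_b\|_2\to 0$ almost surely, hence each $\widehat{\mathbf{y}}_b^{(k)}\to\widetilde{\mathbf{y}}_k^*$ a.s.\ and $\widehat{\mathbf{y}}_b\to\widehat{\mathbf{y}}^*$.

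The main obstacle is verifying that $\bI+\lambda\mathbf{H}$ is a strict contraction on the relevant subspace when $\lambda$ is close to $1$. In the single-kernel Boulevard setting of \citet{zhou2019boulevardregularizedstochasticgradient}, contraction is automatic from $\|\jmat\E\Smat\|_{\mathrm{op}}<1$; here, naively summing $p$ per-feature kernels could balloon the operator norm up to $\lambda p$. It is precisely the orthogonality $P_aP_k=\mathbf{0}$ in Assumption \ref{aspt:proj-orth} that keeps $\|\sum_kP_k\|_{\mathrm{op}}\le 1$ and rescues contraction. The slacks $\max_k\varepsilon_{n,k}$ must be bookkept carefully so that they neither accumulate across the $p$ features nor interact badly with the $1/b$ step size, but once this Lyapunov decay is in hand the almost-sure convergence of both the per-feature components and the intercept follows by standard stochastic-approximation machinery.
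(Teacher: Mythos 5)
Your proposal follows essentially the same route as the paper's proof: write the parallel update as a $1/b$-step recursion, subtract the postulated fixed points, stack the per-feature errors, establish a conditional-mean contraction of order $1-c/b$ using Assumption \ref{aspt:proj-orth}, note that the clipped increments give martingale noise with summable squared norms (the paper bounds them by $2M\sqrt{p}/(b+1)$), and conclude by a stochastic-approximation convergence theorem (the paper's Theorem \ref{thm:stochastic-contraction-mapping} plays exactly the role you assign to Robbins--Siegmund), with the intercept handled separately via the centering constraint. Two remarks, one of which is a real gap in your write-up. Minor: the closed form of $\widetilde{\mathbf{y}}_k^*$ does not need Assumption \ref{aspt:proj-orth} at all -- it follows from plain algebra by summing the stationarity relations over $k$, solving for the aggregate, and substituting back (this is how the paper derives it); the orthogonality assumption is needed only for the contraction step.

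The load-bearing step that fails as written is the claim that your block operator $\mathbf{H}$ ``has spectrum in $[-1,0]$ bounded away from zero,'' from which you deduce $\|\E[\mathbf{e}_b\mid\mathcal{F}_{b-1}]\|_2\le(1-c/b)\|\mathbf{e}_{b-1}\|_2$. First, $\mathbf{H}$ is highly non-normal (its $k$-th block row repeats $-\jmat\Kmatk$ across all $p$ block columns), so the location of its spectrum does not control $\bigl\|(1-\tfrac1b)\bI+\tfrac{\lambda}{b}\mathbf{H}\bigr\|_{\mathrm{op}}$; indeed $\|\mathbf{H}\|_{\mathrm{op}}$ is of order $\sqrt{p}$, so crude norm bounds cannot rescue $\lambda\approx1$. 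Second, $0$ is an eigenvalue of $\mathbf{H}$ of multiplicity at least $n(p-1)$ (any stacked vector whose blocks sum to zero is annihilated), so there is no spectral gap -- though none is needed, since on $\ker\mathbf{H}$ the drift is $(1-\tfrac1b)\bI$ and $\sum_b 1/b=\infty$ already forces decay. What must actually be ruled out is expansion from the cross-feature coupling, and this is where the paper's explicit computation enters: because each error block lies (up to $\varepsilon_{n,k}$) in $\mathcal H_k$ and $P_aP_k=\mathbf{0}$, one has $\sum_k\|P_k S\|_2^2=\|S\|_2^2$ for $S=\sum_a\mathbf{e}^{(a)}$ in the span of the $\mathcal H_k$, whence $\langle\mathbf{e},\mathbf{H}\mathbf{e}\rangle\approx-\|\mathbf{e}\|_2^2\le0$ and the drift operator has norm at most $1-\delta_n/(b+1)$ on the relevant subspace. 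Your closing remark correctly identifies orthogonality as the rescue, but the quantity you invoke, $\|\sum_k P_k\|_{\mathrm{op}}\le1$, is not sufficient by itself; you need the block-level identity above together with the invariance of the per-feature subspaces under the iteration, which is the computation the paper actually carries out.
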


    This fixed point carries two different relevant and practical perspectives. When setting $\lambda = \frac{1}{p}$, one essentially implements a coordinate descent. It also recovers the vanilla boulevard randomized tree kernel by assigning uniform mass on featurewise kernels. Thus convergence can be shown without additional assumptions. Taking $\lambda=1$, however, requires a more delicate characterization of tree behavior specified in Assumption \ref{aspt:proj-orth}. This condition implies that featurewise signals do not overlap with each other asymptotically, removing the need for averaging the final feature functions.\footnote{This can be thought of as parallel coordinate descent. Averaging is conservative, so when can we not do so?}

    \paragraph{Theory for Algorithm \ref{alg:ebm_bratp}.}
    As discussed before, one may argue that Assumption \ref{aspt:proj-orth} is unrealistic. If one is not willing to use $\lambda=1/p$ then, Algorithm \ref{alg:ebm_bratp} poses a remedy. This does not rely on Assumption \ref{aspt:proj-orth}, and only depends on a weaker assumption below.
    \begin{assumption}[Feature-kernel distinctness]
		\label{aspt:kernel-spec}
		Let $\Kmatk=\E[\Smatk]$. There exists $c_3<1$ such that, for all $a\ne k$,
			$
			\big\|\, \E[\mathbf{S}^{(a)}]\,\E[\Smatk] \,\big\|_{\mathrm{op}} \;\le\; c_3,
			\quad
			(p-1)\,c_3 \;<\; 1.
			$
	\end{assumption}
	
    Assumption~\ref{aspt:kernel-spec} is a weaker version of Assumption~\ref{aspt:proj-orth}: instead of requiring (approximate) orthogonality of feature-wise subspaces, it only rules out the degenerate case where different features induce nearly identical smoothing operators in expectation. We verified this assumption empirically with plausible results. See Table \ref{tab:kernel_distinctness}. This milder assumption is sufficient for establishing a similar finite-sample convergence result below.

    \begin{theorem}
		\label{thm:finite-sample-convergence-2}
		Denote the prediction using feature $k$ at boosting round $b$ as $\widehat{\mathbf{y}}^{(k)}_b = \frac{1}{b}\sum_{s=1}^bt^{(s, k)}(\mathbf{x}^{(k)})$.
		Define the aggregated kernel $\mK:=\sum_{k=1}^{p}(\bI-\E\Smatk)^{-1}\jmat\E\Smatk$. Define feature-wise fixed points
		\[
		\widetilde{\mathbf{y}}_k^* = (\bI-\E\Smatk)^{-1} \jmat\E\Smatk\left[\bI - \left(I+\mK\right)^{-1}\mK\right]\mathbf{y}.
		\]
		Then, $\widehat{\mathbf{y}}^{(k)}_b \overset{a.s.}{\to}\widetilde{\mathbf{y}}_k^*$, $\widehat{\mathbf{y}}_b\overset{a.s.}{\to}\widehat{\mathbf{y}}^*$, and $\widehat{\bbeta}^* \overset{a.s.}{\to}\bar{\mathbf{y}}:=\frac{1}{n}\mathbf{1}^\top\mathbf{y}.$
		
	\end{theorem}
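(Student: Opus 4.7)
The plan is to cast the backfitting-like Boulevard iteration as a coupled Robbins--Monro scheme, identify its deterministic fixed point by linear algebra, and then control the error through (i) contraction of the mean dynamics under Assumption \ref{aspt:kernel-spec} and (ii) a bounded-increment martingale argument. By Assumption \ref{aspt:integrity}, once past the burn-in the centered tree step for feature $k$ satisfies $\E[\widetilde{\bt}_b^{(k)}\mid\mathcal{F}_{b-1}] = \jmat\,\E[\Smatk]\,\br_k^{(b)}$, where $\br_k^{(b)} = \by - \widehat{\bbeta}_{b-1}\bone - \sum_{\ell\neq k}\widehat{\by}_{b-1}^{(\ell)}$ is the leave-feature-$k$-out residual. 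Matching the Cesaro average $\widehat{\by}_b^{(k)} = \tfrac{b-1}{b}\widehat{\by}_{b-1}^{(k)} + \tfrac{1}{b}\widetilde{\bt}_b^{(k)}$ to a self-consistent limit yields $\widetilde{\by}_k^* = \jmat\,\E[\Smatk]\bigl(\by - \sum_{\ell\neq k}\widetilde{\by}_\ell^*\bigr)$. Rearranging $(\bI-\jmat\E\Smatk)\widetilde{\by}_k^* = \jmat\E\Smatk(\by - \widetilde{\by}_{\mathrm{tot}}^*)$ with $\widetilde{\by}_{\mathrm{tot}}^* := \sum_\ell\widetilde{\by}_\ell^*$ and summing over $k$ gives $\widetilde{\by}_{\mathrm{tot}}^* = (\bI+\mK)^{-1}\mK\by$, and substituting back produces the claimed closed form. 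Since $\E[\Smatk]$ is symmetric with $\bone$ a unit eigenvector, $\E[\Smatk]$ preserves the centered subspace, so $(\bI-\jmat\E\Smatk)^{-1}$ agrees with $(\bI-\E\Smatk)^{-1}$ on that subspace, which is where all the $\widetilde{\by}_k^*$ live.

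Setting $\Delta_b^{(k)} := \widehat{\by}_b^{(k)} - \widetilde{\by}_k^*$ and subtracting the fixed-point equation from the Cesaro update yields the coupled recursion
\[
\Delta_b^{(k)} = \tfrac{b-1}{b}\Delta_{b-1}^{(k)} - \tfrac{1}{b}\jmat\E[\Smatk]\!\sum_{\ell\neq k}\!\Delta_{b-1}^{(\ell)} + \tfrac{1}{b}\xi_b^{(k)},
\]
with martingale differences $\xi_b^{(k)} := (\jmat\Smatk_b - \jmat\E[\Smatk])\br_k^{(b)}$ relative to the natural filtration. Stacking $\boldsymbol{\Delta}_b := (\Delta_b^{(1)\top},\ldots,\Delta_b^{(p)\top})^\top$ and letting $\mathbf{A}$ be the block operator whose $(k,\ell)$-block is $\jmat\E[\Smatk]$ for $\ell\neq k$ and zero on the diagonal, the recursion reads $\boldsymbol{\Delta}_b = (\mathbf{I}-\tfrac{1}{b}(\mathbf{I}+\mathbf{A}))\boldsymbol{\Delta}_{b-1} + \tfrac{1}{b}\boldsymbol{\xi}_b$. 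On the centered subspace, Assumption \ref{aspt:kernel-spec} gives $\|\mathbf{A}^2\|_{\mathrm{op}}\le (p-1)c_3 <1$, so the spectral radius of $\mathbf{A}$ is strictly below one and the eigenvalues of $\mathbf{I}+\mathbf{A}$ have real parts at least $1-\sqrt{(p-1)c_3}>0$. A standard Abel-summation estimate then yields $\|\prod_{s=1}^b(\mathbf{I}-\tfrac{1}{s}(\mathbf{I}+\mathbf{A}))\|=O(b^{-\gamma})$ for some $\gamma>0$, so the deterministic component of the error decays.

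For the stochastic component, the truncation $\|\widetilde{\bt}_b^{(k)}\|_\infty\le M$ together with integrity give $\sup_b\E\|\xi_b^{(k)}\|^2<\infty$, hence $\sum_b b^{-2}\E\|\boldsymbol{\xi}_b\|^2<\infty$. Applying the standard Robbins--Monro convergence theorem for polynomial-rate contraction with summable noise yields $\boldsymbol{\Delta}_b\to 0$ almost surely, so $\widehat{\by}_b^{(k)}\to\widetilde{\by}_k^*$ and $\widehat{\by}_b\to\widehat{\by}^*$. The intercept convergence follows: the previous step implies the overall residual becomes asymptotically centered, so the per-round tree means $\mu_{b,k}$ vanish and the accumulated intercept update drives $\widehat{\bbeta}_b\to \bar{\by}$ a.s.

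The main obstacle is the block-spectral bound in the second paragraph: because $\mathbf{A}$ has zero diagonal but nonzero, non-symmetric off-diagonal blocks, one cannot simply bound $\|\mathbf{A}\|_{\mathrm{op}}$ by anything smaller than $p-1$, and a naive triangle-inequality bound on the per-feature recursion actually expands rather than contracts. The pairwise product bound $\|\E[\mathbf{S}^{(a)}]\E[\Smatk]\|\le c_3$ from Assumption \ref{aspt:kernel-spec} enters precisely through $\mathbf{A}^2$, and converting it into a uniform Robbins--Monro contraction on the full block operator is the principal technical departure from the single-kernel Boulevard analysis of \citep{zhou2019boulevardregularizedstochasticgradient}.
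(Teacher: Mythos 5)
Your overall architecture mirrors the paper's: you identify the fixed points by the same self-consistency algebra (leave-one-feature-out residual, rearrange, sum over $k$, solve for the total and substitute back, using the centered-subspace identification of $(\bI-\jmat\E\Smatk)^{-1}$ with $(\bI-\E\Smatk)^{-1}$), stack the per-feature errors into a block recursion with zero diagonal blocks, control the truncated-tree noise as a bounded, square-summable martingale perturbation, and finish with a stochastic-approximation argument (the paper invokes its stochastic contraction mapping result, Theorem \ref{thm:stochastic-contraction-mapping}, rather than a Robbins--Monro product estimate); the intercept step via mean-zero residuals at the fixed point is also the same.

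The genuine gap is exactly the block-spectral step you flag as the crux and then assert. The bound $\|\mathbf{A}^2\|_{\mathrm{op}}\le(p-1)c_3$ does not follow from Assumption \ref{aspt:kernel-spec}: the $(k,m)$ block of $\mathbf{A}^2$ is $\sum_{\ell\neq k,\,\ell\neq m}\jmat\E[\Smatk]\,\jmat\,\E[\mathbf{S}^{(\ell)}]$, a sum of up to $p-1$ pairwise products each of norm roughly $c_3$, and assembling the $p\times p$ block matrix costs a further factor of order $p$ (a Schur test or direct row-sum estimate gives a bound of order $p^2c_3$, not $(p-1)c_3$). So under your route the stated hypothesis $(p-1)c_3<1$ does not give spectral radius of $\mathbf{A}$ below one, and the downstream steps (real parts of eigenvalues of the non-normal $\mathbf{I}+\mathbf{A}$, the $O(b^{-\gamma})$ product bound, which itself needs control beyond eigenvalues for non-normal matrices) are not in place. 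The paper does not square the operator: it reads Assumption \ref{aspt:kernel-spec} as saying that each off-diagonal block of the stacked operator $\mathbf{L}$, acting on the feature-$a$-shaped error it multiplies, has norm at most $c_3$, and then a Schur-type bound gives $\|\mathbf{L}\|_{\mathrm{op}}\le(p-1)c_3<1$ directly, so the conditional-mean update contracts as $\|\E[\mathbf{D}_{b+1}\mid\mathcal F_b]\|\le\frac{b+\|\mathbf{L}\|}{b+1}\|\mathbf{D}_b\|$ and Theorem \ref{thm:stochastic-contraction-mapping} applies with no spectral-radius or Jordan-structure considerations. To repair your version you would either need to argue that the per-feature errors lie (asymptotically) in the corresponding feature subspaces so that the single-kernel off-diagonal blocks already contract by $c_3$ (the reading the paper uses), or strengthen the hypothesis to something on the order of $p^2c_3<1$.
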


    The operator $(\bI-\Smatk)^{-1} (\bI-\frac{1}{n}\mathbf{1}\mathbf{1}^\top)\Smatk$, which we can interpret as a feature-wise KRR, is stacked with the fixed point of the residual created on the whole ensemble. In this product, $(\bI-\Smatk)^{-1} (\bI-\frac{1}{n}\mathbf{1}\mathbf{1}^\top)\Smatk$ measures the marginal contribution of a particular feature $k$ contribute to the ensemble prediction. 

    \paragraph{A Central Limit Theorem.} We are now in a position to provide central limit theorems for all three algorithms. Unlike \cite{zhou2019boulevardregularizedstochasticgradient} and \cite{fang2025gradient}, the GAM structure of our problem allows us to relax the rates within the assumptions on tree geometries. While the following lemma allows us to check the Lindeberg-Feller conditions, it also allows us to recover the optimal minimax rate $n^{-2/3}$ for univariate non-parametric Lipschitz regression \cite{yuan2015minimaxoptimalratesestimation}. Below, we use the subscript $E=\{A,B\}$ to label relevant variables in Algorithm \ref{alg:brebm_ident} and \ref{alg:ebm_bratp}. 

    \begin{lemma}[Rate of Convergence]
		\label{lem:rateofr}
		Let $\mathbf{B}^{(k)}_n := \{i  : \norm{\bx^{(k)}-\bx_i^{(k)}}\leq d_n\}$ be the points within distance $d_n$ from test point $\bx$ on feature $k$. Under Assumption \ref{aspt:restricted-tree-space}, if $\left| \mathbf{B}_n \right|  = \Omega( n \, d_n)$ and $\inf_{A\in \Pi\in Q_n^{(k)}}\sum_{i}\bI(\bx_i\in A) = \Omega(n^{\frac{2}{3}})$, then
		$
		\norm{\kveck}_2,
		\norm{\rveck_E}_2 = \Theta(n^{-\frac{1}{3}})
		$ almost surely.
	\end{lemma}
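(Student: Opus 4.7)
The plan is to derive the $\Theta(n^{-1/3})$ rate directly from tree-kernel structure for $\kveck$, and then propagate it through the kernel-ridge operators to $\rveck_E$.

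First I would pin down $\norm{\sveck(\bx)}_2$ for an arbitrary tree realization. Since $s^{(k)}_{n,j}(\bx)$ equals either $0$ or $1/N(A(\bx))$ with $N(A(\bx))=\sum_l \mathbbm{1}(\bx_l^{(k)}\in A(\bx^{(k)}))$, one has $\norm{\sveck(\bx)}_2^{2}=1/N(A(\bx))$. By hypothesis $N(A(\bx))=\Omega(n^{2/3})$, while $N(A(\bx))\leq|\bB_n^{(k)}|=O(nd_n)=O(n^{2/3})$ from Assumption \ref{aspt:leaf-diameter} plus the bounded density in Assumption \ref{aspt:truth}, yielding $\norm{\sveck(\bx)}_2=\Theta(n^{-1/3})$ almost surely. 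For $\kveck=\E[\sveck(\bx)]$, Jensen supplies the upper bound $\norm{\kveck}_2^2\leq\E\norm{\sveck}_2^2=O(n^{-2/3})$. For the matching lower bound, note $\norm{\kveck}_1=\E[\bone^\top\sveck]=1$, and $\mathrm{supp}(\kveck)\subseteq\bB_n^{(k)}$ since any training point at distance more than $d_n$ from $\bx$ on coordinate $k$ has zero probability of lying in the same leaf by Assumption \ref{aspt:leaf-diameter}. Cauchy--Schwarz then yields $\norm{\kveck}_2\geq\norm{\kveck}_1/\sqrt{|\bB_n^{(k)}|}=\Omega(n^{-1/3})$.

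Next I would propagate this rate to $\rveck_E$. For Algorithm \ref{alg:brebm_ident}, transposing $\rveck^\top\by=\kveck^\top\jmat[\bI+\jmat\bK]^{-1}\by$ gives $\rveck=[\bI+\bK\jmat]^{-1}\jmat\kveck$. Writing $\mathbb{R}^n=\mathrm{range}(\jmat)\oplus\mathrm{span}(\bone)$, the operator $[\bI+\bK\jmat]$ is block lower-triangular with diagonal blocks $\bI+\jmat\bK\jmat$ (symmetric PSD with spectrum in $[1,1+p]$ using $\norm{\Kmatk}_{\mathrm{op}}\leq 1$) and $1$. Both $[\bI+\bK\jmat]$ and its inverse therefore have operator norm bounded by a constant depending on $p$ alone, so $\norm{\rveck}_2=\Theta(\norm{\jmat\kveck}_2)$, and a direct computation $\norm{\jmat\kveck}_2^2=\norm{\kveck}_2^2-1/n=\Theta(n^{-2/3})$ using $\bone^\top\kveck=1$ finishes that case. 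For Algorithm \ref{alg:ebm_bratp}, the identity $\bI-\mK(\bI+\mK)^{-1}=(\bI+\mK)^{-1}$ reduces the weights to a composition of $(\bI-\E\Smatk)^{-1}$, $\jmat$, and $(\bI+\mK)^{-1}$ (all transposed) applied to $\kveck$. Assumption \ref{aspt:kernel-spec} bounds $\norm{\mK}_{\mathrm{op}}$ and provides a uniform spectral gap for $\E\Smatk$ on $\mathrm{range}(\jmat)$, so every factor and its inverse have operator norm of order $1$ on the centered subspace, and the same $\Theta(n^{-1/3})$ rate follows.

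The main obstacle will be making the ``almost surely'' clause rigorous: the random quantities $N(A(\bx))$ and $|\bB_n^{(k)}|$ must satisfy the stated bounds simultaneously across every tree structure in $Q_n^{(k)}$. The natural plan is to use Assumption \ref{aspt:restricted-tree-space}, which caps $|Q_n^{(k)}|$ at $O(n^{-1/3}\exp(n^{2/3-\epsilon_n}))$, combined with Chernoff/Bernstein concentration for leaf populations under the bounded density of Assumption \ref{aspt:truth}, giving per-leaf failure probability of order $\exp(-cn^{2/3})$; a union bound over $Q_n^{(k)}$ absorbs the tree enumeration, and Borel--Cantelli upgrades this to an almost sure statement. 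A secondary subtlety is securing the uniform spectral gap of $\E\Smatk$ on $\mathrm{range}(\jmat)$ needed in Algorithm \ref{alg:ebm_bratp}; a Perron-type argument using the row-stochasticity of $\E\Smatk$ together with Assumption \ref{aspt:kernel-spec} looks like the cleanest path.
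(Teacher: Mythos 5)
Your proposal is correct and follows essentially the same route as the paper: bound $\norm{\kveck}$ above and below using the minimal leaf occupancy $\Omega(n^{2/3})$ together with the $\Theta(n\,d_n)$ neighborhood count (your Jensen/Cauchy--Schwarz steps are just a cosmetic variant of the paper's $\ell_1$--$\ell_\infty$ interpolation and entrywise lower bound), then transfer the rate to $\rveck_E$ via the bounded spectra of the KRR operators $\bP_1,\bP_2$, with the almost-sure qualifier handled by Chernoff concentration, a union bound over the restricted tree space, and Borel--Cantelli. The only small discrepancy is attribution: the uniform spectral control needed for Algorithm \ref{alg:ebm_bratp} comes in the paper from Lemma \ref{lem:spec-of-K} and Lemma \ref{lem:spec-P2} (non-degeneracy of the partition distribution), not from Assumption \ref{aspt:kernel-spec}, and your extra care with centering ($\norm{\jmat\kveck}_2^2=\norm{\kveck}_2^2-1/n$) is if anything slightly more explicit than the paper's treatment.
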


	The rate of the boulevard vector enables a conditional CLT, informally $\frac{\rveck_E(\bx)^\top\f(\bX)-\f^{(k)}(\bx)}{\norm{\rveck_E(\bx)}}\overset{d}{\to}\mathcal{N}(0, \sigma^2)$. Working with either Assumption \ref{aspt:proj-orth} or Assumption \ref{aspt:kernel-spec}, we analyze the form of the combined kernel, and conclude with the following central limit theorem.

    \begin{theorem}[Asymptotic Normality]
		\label{thm:unconditional-clt}
		Define $c_A = \frac{\lambda}{1+\lambda}, c_B=1 $. For all $k = 1, \cdots, p$, we have that:
		$$\norm{\rveck_E(\bx)}^{-1}\left(\widehat{\f}_E^{(k)}(\bx)-c_E\f^{(k)}(\bx)\right) \overset{d}{\rightarrow} \mathcal{N}(0, \sigma^2)
		$$
	\end{theorem}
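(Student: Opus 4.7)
The plan is to reduce the theorem to a Lindeberg--Feller CLT for the sub-Gaussian errors plus a deterministic bias bound. Passing $b\to\infty$ and invoking Theorems \ref{thm:finite-sample-convergence-1} and \ref{thm:finite-sample-convergence-2} identifies the almost-sure limit of $\widehat{\f}_E^{(k)}(\bx)$ with the deterministic linear functional $\rveck_E(\bx)^\top\by$. Writing the vector of truths as $\f(\bX)\in\R^n$ with entries $\f(\bx_i)$ and substituting $\by=\f(\bX)+\boldsymbol{\epsilon}$ yields
\begin{equation*}
\widehat{\f}_E^{(k)}(\bx) - c_E\,\f^{(k)}(\bx)
\;=\; \underbrace{\rveck_E(\bx)^\top\f(\bX) - c_E\,\f^{(k)}(\bx)}_{=:\,\mathrm{bias}_n}
\;+\; \underbrace{\rveck_E(\bx)^\top\boldsymbol{\epsilon}}_{=:\,\mathrm{noise}_n}.
\end{equation*}
By Lemma \ref{lem:rateofr}, $\|\rveck_E(\bx)\|_2=\Theta(n^{-1/3})$, so it suffices to show $\mathrm{bias}_n=o(n^{-1/3})$ and $\|\rveck_E\|_2^{-1}\mathrm{noise}_n\Rightarrow\mathcal{N}(0,\sigma^2)$.

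For the noise term, a weighted sum of i.i.d.\ sub-Gaussians, I would verify the Lindeberg condition $\max_i (r_{E,i}^{(k)}(\bx))^2/\|\rveck_E(\bx)\|_2^2\to 0$. Each $\rveck_E$ has the form $\E[\sveck(\bx)]^\top\jmat\,M_E$ for a bounded-operator-norm matrix $M_E$ (the resolvent supplied by Theorem \ref{thm:finite-sample-convergence-1} or \ref{thm:finite-sample-convergence-2}), and by Assumptions \ref{aspt:leaf-diameter}--\ref{aspt:restricted-tree-space} the expected structure vector is supported on a local neighborhood of $n v_n=\Theta(n^{1/3+\nu})$ training points with approximately uniform weight. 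Consequently every entry of $\rveck_E$ is of strictly smaller order than $\|\rveck_E\|_2$, so Lindeberg--Feller applies conditionally on the tree-structure measures $\{Q_n^{(k)}\}$; unconditioning follows by dominated convergence on the characteristic function, mirroring the route taken in \citep{fang2025gradient}.

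For the bias under Algorithm \ref{alg:brebm_ident} we have $\rveck_A(\bx)^\top=\E[\sveck(\bx)]^\top\jmat[\lambda\bI+\jmat\bK]^{-1}$. Row-stochasticity of each $\Kmatk$ gives $\jmat\bK=\jmat\bK\jmat$, and Assumption \ref{aspt:proj-orth} yields $\jmat\bK\jmat=\sum_\ell P_\ell+O_{\mathrm{op}}(\varepsilon_n)$ with $P_aP_\ell=\delta_{a\ell}P_\ell$. Hence $[\lambda\bI+\jmat\bK]^{-1}$ acts as multiplication by $(\lambda+1)^{-1}$ on each $\mathrm{Im}(P_\ell)$ and as $\lambda^{-1}$ on their orthogonal complement, and identifiability centering places $\f^{(\ell)}(\bX)\in\mathrm{Im}(P_\ell)$, so $\rveck_A(\bx)^\top\f(\bX)=(\lambda+1)^{-1}\E[\sveck(\bx)]^\top P_k\f(\bX)+o(n^{-1/3})$. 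Local averaging against the $L$-Lipschitz $\f^{(k)}$ (Assumption \ref{aspt:leaf-diameter}) then yields $\E[\sveck(\bx)]^\top P_k\f(\bX)=\lambda\,\f^{(k)}(\bx)+o(n^{-1/3})$, and combining the two constants recovers $c_A\f^{(k)}(\bx)$. For Algorithm \ref{alg:ebm_bratp} the analogous computation expands $[\bI-(\bI+\mK)^{-1}\mK]$ as a Neumann series whose cross-feature terms are contractive by Assumption \ref{aspt:kernel-spec}; only the feature-$k$ diagonal survives, producing $c_B=1$.

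The main obstacle is converting the $O(\varepsilon_n)$ projector error in Assumption \ref{aspt:proj-orth} (and the $(p-1)c_3<1$ contraction in Assumption \ref{aspt:kernel-spec}) into an $o(n^{-1/3})$ pointwise bias after multiplication by the resolvent, since the resolvent's conditioning must be tracked against the centering projector. Equally delicate is controlling the local-smoother error $\E[\sveck(\bx)]^\top P_k\f(\bX)-\lambda\,\f^{(k)}(\bx)$: with leaf diameter only $O(n^{-1/3})$, Lipschitzness alone gives $O(n^{-1/3})$, so recovering the strict $o(n^{-1/3})$ rate needed for a centered CLT requires exploiting the randomization over tree structures $Q_n^{(k)}$ together with the $o(1/\log n)$ slack built into Assumption \ref{aspt:leaf-diameter}.
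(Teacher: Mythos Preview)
Your skeleton matches the paper exactly: pass to the kernel-ridge limit via Theorems~\ref{thm:finite-sample-convergence-1}/\ref{thm:finite-sample-convergence-2}, split $\widehat{\f}_E^{(k)}(\bx)-c_E\f^{(k)}(\bx)$ into a deterministic bias and a noise term $\rveck_E^\top\boldsymbol{\epsilon}$, check Lindeberg--Feller for the noise using the $\|\rveck_E\|_\infty/\|\rveck_E\|_2$ ratio from Lemma~\ref{lem:rateofr}, and close with Slutsky. That part is essentially identical.

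Where you diverge is in the \emph{cross-feature leakage} control. You propose to diagonalise $[\lambda\bI+\jmat\bK]^{-1}$ on the projector decomposition $\jmat\bK\jmat=\sum_\ell P_\ell+O_{\mathrm{op}}(\varepsilon_n)$ and kill $\rveck_A^\top\f^{(a)}(\bX^{(a)})$ for $a\neq k$ via $P_aP_k=0$. The paper instead never pushes the projector error through the resolvent: it splits $\rveck_E=\rveck_E|_{D_n}+\rveck_E|_{D_n^c}$ along a feature-$k$ neighbourhood of radius $l_nd_n$ with $l_n\asymp\log n$, handles the inner rim by Lipschitz continuity, and bounds the peripheral piece using the Demko--Moss--Smith exponential decay (Proposition~\ref{prop:DMS}) for the entries of $(\bI+\jmat\bK)^{-1}$ (resp.\ $(\bI+\mK)^{-1}$). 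This DMS route is what lets the paper treat Algorithm~\ref{alg:ebm_bratp} under the weaker Assumption~\ref{aspt:kernel-spec}, where no projector structure is available---your Neumann-series sketch for that case would need additional work to reach a pointwise (entrywise) bound rather than just an operator-norm contraction. Conversely, your projector route is shorter for Algorithm~\ref{alg:brebm_ident} but, as you correctly flag, it requires a quantitative rate on $\varepsilon_n$ (namely $\varepsilon_n=o(n^{-1/3})$) that Assumption~\ref{aspt:proj-orth} does not supply.

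A second, smaller difference: for the passage from conditional to unconditional normality you invoke dominated convergence on characteristic functions, whereas the paper uses the restricted-tree-space cardinality bound (Assumption~\ref{aspt:restricted-tree-space}) together with Borel--Cantelli (Lemma~\ref{lem:tree-space}) to make the rate conditions on $|\mathbf{B}_n|$ and minimal leaf occupancy hold almost surely. Both are legitimate, but the paper's route is what actually consumes Assumption~\ref{aspt:restricted-tree-space}. Finally, the concern you raise in your last paragraph---that Lipschitzness with $d_n=O(n^{-1/3})$ gives only an $O(n^{-1/3})$ smoother error, not the $o(n^{-1/3})$ needed after dividing by $\|\rveck_E\|$---is real and is precisely why the paper leans on the $d_n=o(1/\log n)$ slack in Assumption~\ref{aspt:leaf-diameter} together with the DMS near/far split rather than a bare projector-plus-Lipschitz argument.
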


    Normality of the intercept term $\hat{\bbeta}$ is also established and discussed in Lemma \ref{thm:baseline-clt}. Similar to \cite{fang2025gradient}, our CLTs yields a subsequent risk bound:
    \begin{corollary}
    \label{cor:risk-bound}
        $\E[(\frac{1}{c_E}\hat{\f}_E^{(k)}(\bx)-\f^{(k)}(\bx))^2] \lesssim \frac{\sigma^2}{c_E^2}p\,n^{-\frac{2}{3}}$.
    \end{corollary}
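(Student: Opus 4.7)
The plan is to derive the $L^2$ bound by combining the conditional CLT in Theorem \ref{thm:unconditional-clt} with the norm control on the boulevard weights in Lemma \ref{lem:rateofr}, upgrading convergence in distribution to convergence in mean square via a uniform-integrability argument. Writing $\widehat{\f}_E^{(k)}(\bx) \approx \rveck_E(\bx)^\top \by$ (the fixed point from Theorems \ref{thm:finite-sample-convergence-1}--\ref{thm:finite-sample-convergence-2}, up to an $o_P(\norm{\rveck_E})$ remainder), the natural decomposition is
\[
\frac{1}{c_E}\widehat{\f}_E^{(k)}(\bx) - \f^{(k)}(\bx) \;=\; \frac{1}{c_E}\rveck_E(\bx)^\top \epsilon \;+\; \Bigl(\frac{1}{c_E}\rveck_E(\bx)^\top \f(\bX) - \f^{(k)}(\bx)\Bigr),
\]
using $\by=\f(\bX)+\epsilon$ and the structure-value isolation in Assumption \ref{aspt:integrity} to treat $\rveck_E$ as deterministic conditional on the tree structures.

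For the stochastic term, I would use sub-Gaussianity of $\epsilon$: conditional on $\bX$ and on the non-adaptive tree structures, $\E[(\rveck_E(\bx)^\top \epsilon)^2 \mid \bX] = \sigma^2 \norm{\rveck_E(\bx)}^2$, which by Lemma \ref{lem:rateofr} equals $\sigma^2 \cdot \Theta(n^{-2/3})$ almost surely. Dividing by $c_E^2$ and marginalizing over $\bX$ contributes $O\!\bigl(\tfrac{\sigma^2}{c_E^2}\,n^{-2/3}\bigr)$ to the MSE.

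For the bias term I would invoke the projector-like structure (Assumption \ref{aspt:proj-orth} for Algorithm \ref{alg:brebm_ident}) or the kernel-distinctness bound (Assumption \ref{aspt:kernel-spec} for Algorithm \ref{alg:ebm_bratp}), together with the leaf-geometry Assumptions \ref{aspt:leaf-diameter}--\ref{aspt:ub-minimal-leaf-size}. The goal is to write $\tfrac{1}{c_E}\rveck_E(\bx)^\top \f(\bX) = \f^{(k)}(\bx) + R_k(\bx) + \sum_{\ell \neq k} L_{k,\ell}(\bx)$, where $R_k$ is a within-leaf $L$-Lipschitz remainder of size $L\,d_n = O(n^{-1/3})$ (Assumption \ref{aspt:leaf-diameter}), and each $L_{k,\ell}$ is an off-diagonal leakage controlled by $\varepsilon_{n,k}\to 0$ (projector slippage) or by the contraction $(p-1)c_3<1$. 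Summing the $p-1$ leakage terms with the diagonal Lipschitz error gives a bias of order $\sqrt{p}\,L\,n^{-1/3}$, so its square contributes $O(p\,n^{-2/3})/c_E^2$. Combining with the variance through $(a+b)^2\le 2a^2+2b^2$ yields the claimed $\tfrac{\sigma^2}{c_E^2}\,p\,n^{-2/3}$ bound.

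The main obstacle is the bias accumulation: the variance step is essentially a direct invocation of Lemma \ref{lem:rateofr}, but bounding the cross-feature leakage through the non-local inverses $[\bI+\jmat\bK]^{-1}$ or $[\bI-(\bI+\mK)^{-1}\mK]$ requires carefully exploiting projector orthogonality (respectively strict contraction of the kernel products) to prevent the $p$ coupled kernels from inflating the residual beyond $\sqrt{p}$. A secondary but standard technicality is upgrading the distributional CLT to the $L^2$ bound: this is handled by the truncation at level $M$ in the algorithms, which makes $(\widehat{\f}_E^{(k)}(\bx) - c_E\f^{(k)}(\bx))/\norm{\rveck_E(\bx)}$ uniformly bounded and hence uniformly integrable.
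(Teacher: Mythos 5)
Your proposal is correct and follows essentially the same route the paper takes: the corollary is obtained directly from the CLT decomposition (noise term with conditional variance $\sigma^2\norm{\rveck_E(\bx)}^2=\Theta(n^{-2/3})$ by Lemma \ref{lem:rateofr}, plus the baseline, same-feature, and cross-feature leakage biases controlled by the lemmas in Appendix \ref{sec:limiting-distribution}), which is exactly your decomposition. Your added remarks on uniform integrability via the truncation level $M$ and on preventing the $p$ coupled kernels from inflating the leakage are details the paper leaves implicit, but they do not change the argument.
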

    This result shows that our algorithms achieve the minimax-optimal MSE of $O(pn^{-2/3})$ for fitting Lipschitz $p$-dimensional GAMs \cite{yuan2015minimaxoptimalratesestimation}. This avoids the curse of dimensionality, significantly improving upon the rates of \cite{zhou2019boulevardregularizedstochasticgradient, fang2025gradient}.

    \begin{figure*}[ht!]
     \vspace{-2ex}\centering\includegraphics[width=0.7\linewidth]{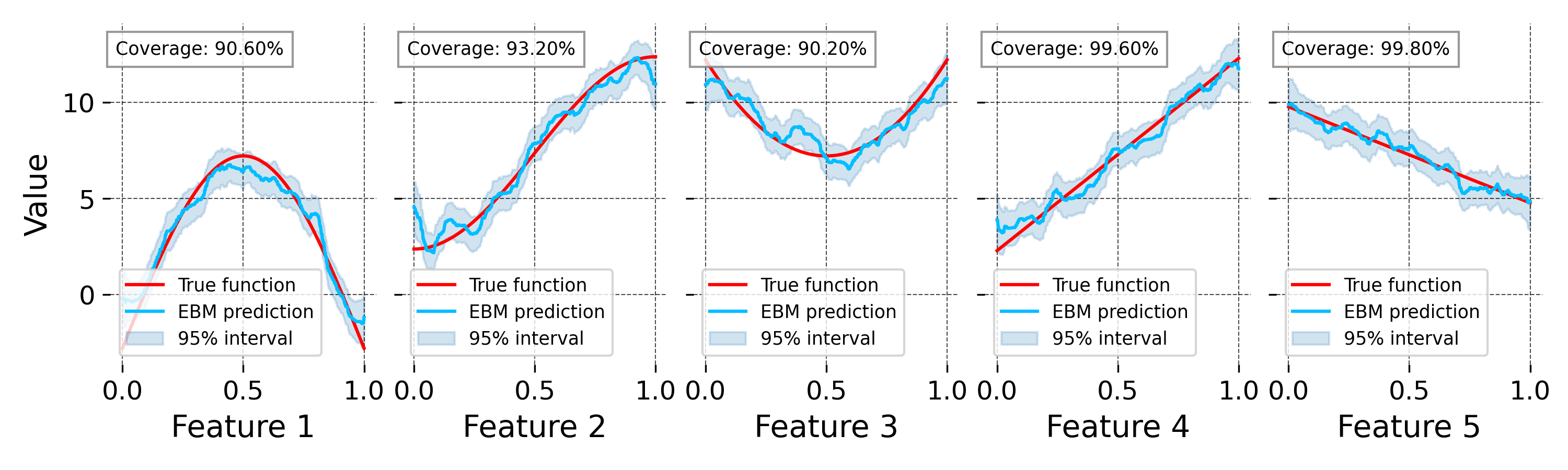}\vspace{-2ex}
        \caption{Per-feature CIs and coverages for $f(x)=-5+10\sin(\pi x^{(1)}) + 5\cos(\pi x^{(2)}) + 20(x^{(2)}-0.5)^2  +10x^{(3)} - 5x^{(4)}$.}
        \label{fig:feature-coverage}
        \vspace{-2ex}
    \end{figure*}

    % \begin{corollary}[PAC Guarantee]
    % \label{cor:pac-updated}
    % If $n \;\ge\; C_1\!\left\{\Big(\tfrac{L}{\varepsilon}\Big)^{\!3}\;\vee\;\Big(\tfrac{\sigma^2\log(2/\delta)}{\varepsilon^2}\Big)^{\!\!3/2}\right\}$ and $b \;\ge\; C_2\;\frac{\lambda^2 M^2\, n^{1/3}}{\varepsilon^2\,\delta},$ then with probability at least $1-\delta$,
    % \(
    % \big|\hat f^{(b)}_n(x)-f(x)\big|\le \varepsilon.
    % \)
    % \end{corollary}
    
	\section{Inference and Computation}
    We can now derive confidence and prediction intervals from the result on asymptotic normality above. Here, we construct practical intervals and tests, and shows how to compute the required standard errors in \emph{bin space} so that inference time is independent of $n$. Throughout the section, we estimate the noise variance with a consistent variance estimates $\hat{\sigma}^2$ via in-sample or out-of-bag estimation.

    \paragraph{Confidence Intervals.}
    We construct $(1-\alpha)$ confidence intervals for each of \{individual feature effects, the intercept, the values of a prediction at a new $\bx$\}.
    \vspace{-2ex}
    \begin{itemize}
        \item $c_E^{-1}\widehat{\f}^{(k)}_E(\bx)\ \pm\ z_{1-\alpha/2}\,c_E^{-1}\hat{\sigma}\,\lVert\rveck_E(\bx)\rVert$ yields a $(1-\alpha)$-confidence interval for the feature effect. By similar arguments in \cite{fang2025gradient}, the coverage rate will converge to $(1-\alpha)$ almost surely. This is validated by an example in Figure  \ref{fig:feature-coverage}.
        \item For the intercept term, by Lemma~\ref{thm:baseline-clt}, $\widehat{\bbeta}$ is asymptotically normal with variance $\sigma^2/n$, yielding the usual interval $\widehat{\bbeta}\pm z_{1-\alpha/2}\,\hat{\sigma}/\sqrt{n}$
        \item For the overall response, write $\widehat{f}_E(\bx)=\widehat{\bbeta}+c_E^{-1}\sum_{k=1}^p \widehat{\f}^{(k)}_E(\bx)$. A $(1-\alpha)$ confidence interval is
    $\widehat{f}_E(\bx)\ \pm\ z_{1-\alpha/2}c_E^{-1}\hat{\sigma}\sqrt{\frac{1}{n}+\|\sum_{k=1}^p \rveck_E(\bx)\|_2^2\ }.
    $
    \end{itemize}
    \vspace{-2ex}
    Recall that $\rveck_E(\bx)$ denotes the per-feature influence vector. Under Assumption~\ref{aspt:proj-orth}, $\lVert\sum_{k=1}^p \rveck_E(\bx)\rVert_2^2
    =\sum_{k=1}^p \lVert{\rveck_E(\bx)}\rVert_2^2,$ so the variance contribution is the sum of feature-wise terms. Without Assumption~\ref{aspt:proj-orth}, this replacement is conservative by Cauchy–Schwarz.

    \cite{fang2025gradient} examines two further variants that are immediately implementable:
    \vspace{-1ex}
    \begin{itemize}
    \item {\bf Reproduction Intervals.} All intervals above can be modified to be {\em reproduction} intervals for the predictions of a new model trained on an independent dataset by scaling their widths by $\sqrt{2}$. %\cite[see][]{zhou2019boulevardregularizedstochasticgradient}. This has the interpretation that 
    %they cover a replication of the estimate on new data with probability $(1-\alpha)$. 

%    To get reproduction intervals, scale the bandwidth of CIs by $\sqrt{2}$. The subsequent intercept intervals and response intervals can be constructed as shown in the previous paragraph.

 %   \paragraph{Prediction and Reproduction Intervals} 
 %   As discussed by \cite{fang2025gradient}, we can construct the prediction intervals and reproduction intervals for featurewise predictions and intercepts, with almost sure convergence of coverage rates. 
    
    \item {\bf Prediction Intervals.} We can construct  a prediction interval for a new $y$ given $\bx$ by computing ${c_E}^{-1}{\widehat{\f}^{(k)}_E(\bx)}\ \pm\ z_{1-\alpha/2}\,{c_E}^{-1}{\hat{\sigma}\sqrt{1+\lVert{\rveck_E(\bx)}\rVert_2^2}}$. 
\end{itemize}

    \paragraph{Binning. } The current implementation of EBMs \cite{nori2019interpretmlunifiedframeworkmachine} utilizes histogram trees. As we worked off this implementation, it was natural to construct intervals with the binned data, instead of working with the original data. A more detailed discussion of the discretization error induced by binning is deferred to Appendix~\ref{append:what_we_lose_by_binning}.
    
    As such, we compress all $n$ samples along feature $k$ into $m_k\!\ll\!n$ histogram bins and work entirely in \emph{bin space}. In Appendix~\ref{app:binning}, we construct a decomposition of the structure matrix that allows us to show that computing the norm of $\rveck(\bx)$ reduces to solving a $m_k\times m_k$ system of linear equations, and computing two $m_k$-length dot products. 
    
    After this compression, the dependence on $n$ disappears. Denote $m$ as the maximum number of bins among all $k$ binnings. Caching $\bM$ takes $O(pm^3)$, while each per-point inference query takes only $O(m^2)$ time. As bin numbers are often capped to 255 or 511 (independent of $n$), we can compute our intervals in time independent of the number of samples. This outperforms \cite{fang2025gradient}'s runtime of $O(nd_{\text{eff}}^2)$.

    \paragraph{Practical Guidance.} In practice, Algorithm \ref{alg:brebm_ident} relies on conditional orthogonality; we recommend monitoring simple convergence diagnostics, and if convergence appears unstable, falling back to Algorithm \ref{alg:ebm_bratp}. In our experiments, the number of boosting rounds $B$ in the range 100--300 worked well. A smaller learning rate $\lambda$ typically improves finite-sample stability but can yield slightly wider intervals. Subsample rates in $[0.5,0.8]$ were robust. For binning, we found that using either the usual EBM defaults (e.g., 64 or 255 bins) or scaling the bin count as $m\asymp n^{1/3}$ works well; the latter matches our assumptions.

    \addtocounter{figure}{1}
    \begin{figure*}[ht!]
        \centering\includegraphics[width=0.85\linewidth]{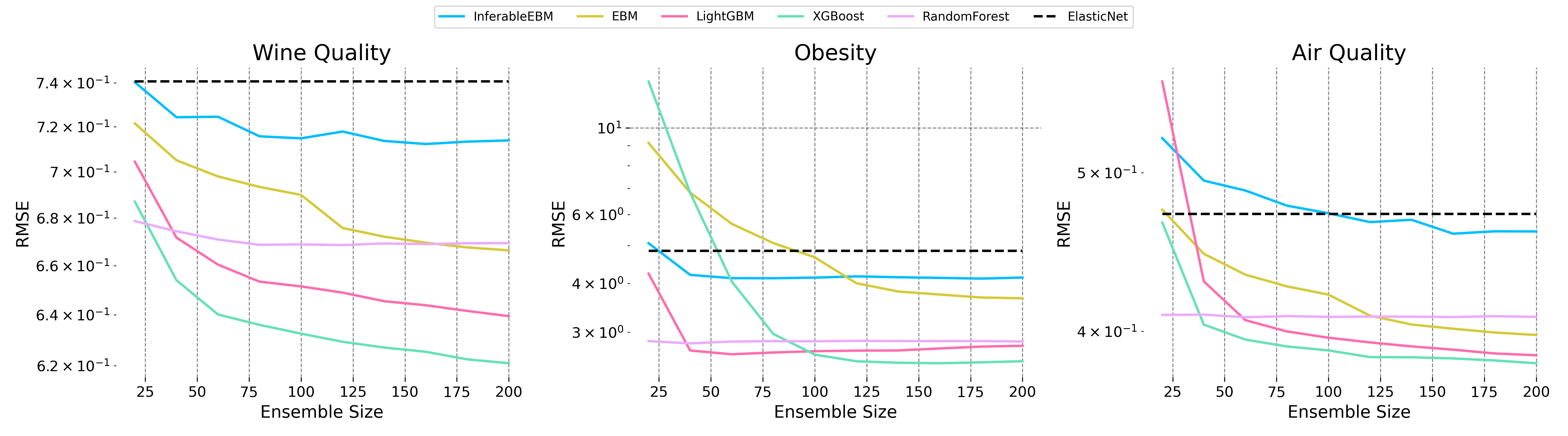}\vspace{-2ex}
        \caption{MSE horse races on UCI machine learning repository datasets. All hyperparameters tuned by Optuna.}
        \label{fig:mse-horse-races}
    \end{figure*}
    \addtocounter{figure}{-1}
    
	\section{Numerical Experiments}
    \label{sec:experiments}
    \paragraph{Predictive Accuracy} We compare Algorithm \ref{alg:brebm_ident} to other benchmark models on real world data from the UCI Machine Learning Repository in Figure \ref{fig:mse-horse-races}. Algorithm \ref{alg:brebm_ident} is competitive with EBMs across the Wine Quality, Obesity and Air Quality datasets, while converging much faster with parallelization. However, these datasets exhibit higher order interactions. Algorithm \ref{alg:brebm_ident} and EBMs ultimately underperform random forests and gradient boosting, though their added flexibility allows them to outperform a linear model. 
    
    However, the above setup had hyperparameters extensively tuned by Optuna over 100 trials. When this is not feasible, gradient boosting can be susceptible to overfitting. Figure \ref{fig:overfit_mse} depicts two examples where this is the case, on $f(x) = \sin(4\pi x)$ and the diabetes dataset from \cite{Efron_2004}. Here, the performances are flipped. Algorithm \ref{alg:brebm_ident} and EBMs perform the best with no need for early stopping, while the gradient boosting methods quickly overfit. EBMs achieve this via a tiny learning rate during cyclic boosting, while our implementation parallelizes over features for faster runtime.
    
    \paragraph{Interval Coverage Rates} To validate the proposed central limit theorems, we first construct feature-wise confidence intervals via binning and compute their coverage rates. Figure \ref{fig:feature-coverage} shows good coverage and width. In combination with Figure \ref{fig:coverage-1d}, we can qualitatively argue that their performance appears desirable.

    Figure \ref{fig:ebm-effects-wide} and Figure \ref{fig:obesity-feature-effects} yield an example on real-world data from the obesity dataset of \cite{estimation_of_obesity_levels_based_on_eating_habits_and_physical_condition__544}, regressing weight on other covariates. Our intervals offer much-needed uncertainty estimates for each effect.
    
    Figure \ref{fig:coverage-rates} compares the coverage and width of the CIs and PIs for the overall response, and RIs for another realization. We use the same conformal correction that \cite{fang2025gradient} use for the PIs only. Coverage is close to nominal for the CIs and PIs, while the RIs exhibit some overcoverage. 

    We further evaluate scalability in higher-dimensional, correlated synthetic additive models for Algorithm~\ref{alg:ebm_bratp} under the realistic Assumption~\ref{aspt:kernel-spec}. In Figure~\ref{tab:vanilla_vs_inferable_gam}, bootstrap EBMs achieve low RMSE but severely under-cover and are slow due to repeated refits, while Algorithm~\ref{alg:ebm_bratp} (and BRATD) attain near-nominal coverage; moreover, Algorithm~\ref{alg:ebm_bratp} yields better RMSE and tighter intervals than BRATD at comparable or lower runtime. Compared with \cite{fang2025gradient}, our intervals are also faster and better-behaved, since we avoid the curse of dimensionality and our runtime is independent of $n$.

    Figure \ref{fig:highdim-real-table} shows that on real datasets with $p\approx 110$--$130$, Algorithm~\ref{alg:ebm_bratp} maintains near-nominal $95\%$ coverage ($0.948$--$0.956$), achieves RMSE comparable to the baselines, and is substantially faster than vanilla EBM. Relative to BRATD, it attains similar RMSE with lower training time on three of the four datasets, especially on the largest example.

    \addtocounter{figure}{-1}
	\begin{figure}[H]
		\hspace{-3ex}\includegraphics[width=1.1\linewidth]{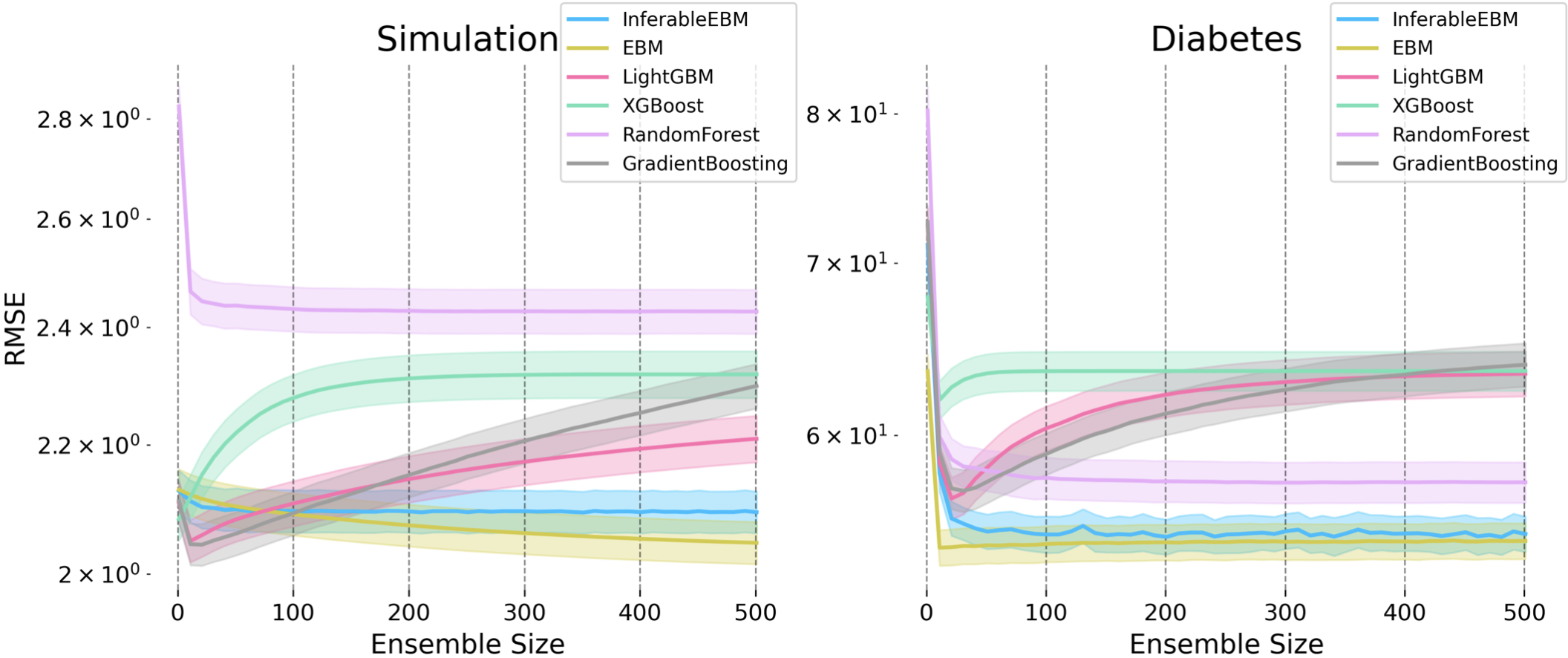}
		\caption{RMSE for Algorithm \ref{alg:brebm_ident} and benchmarks with 2 s.e. bands over 50 trials, on $f(x) = \sin(4\pi x)$ and the diabetes dataset from \cite{Efron_2004}. Both EBM and Algorithm \ref{alg:brebm_ident} are highly resistant to overfitting.}
		\label{fig:overfit_mse}
	\end{figure}
    \addtocounter{figure}{1}

    \section{Conclusion}
	Explainable boosting machines have been an important tool for producing glass-box models, used both to establish insight into the underlying patterns that generate data, and to diagnose potential non-causal behavior prior to deployment. The understanding presented by such methods must be qualified by appropriate uncertainty quantification. Current UQ for EBMs relies on bootstrapping which is both computationally intensive and lacks theoretical backing.
    
    \begin{figure}[H]
        \hspace{-3ex}\includegraphics[width=1.0\linewidth]{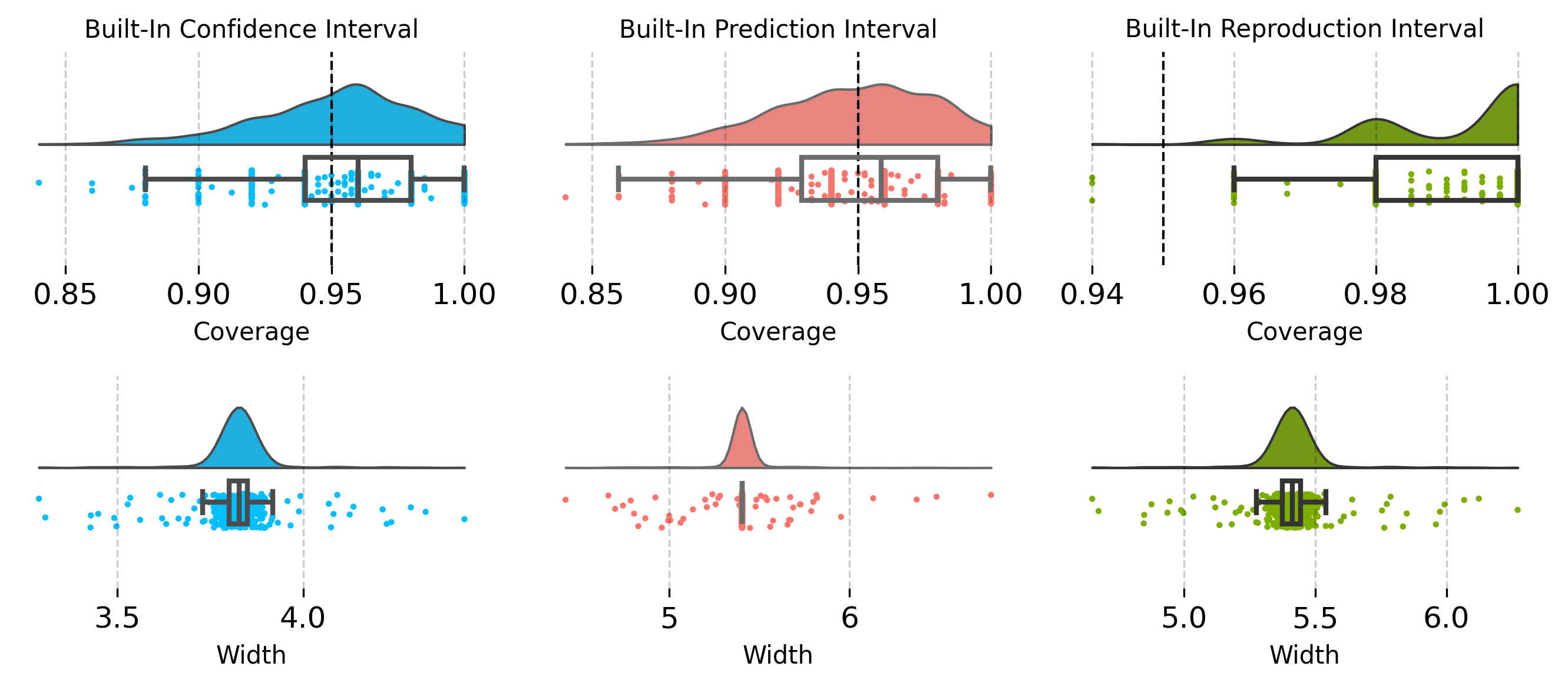}
        \caption{Coverage and width of our intervals over $50$ trials, $n=2000$, on the same function as Figure \ref{fig:feature-coverage}.}
        \label{fig:coverage-rates}
    \end{figure}

    The Boulevard regularization method that we adopt to EBM here both removes the need to choose stopping times (see Figure \ref{fig:overfit_mse}), and generates a limit that can be characterized as a kernel ridge regression. This enables us to directly derive a central limit theorem for predictions of the resulting model and their additive components, and to thereby both derive consistent estimates of their asymptotic variance and provide confidence intervals and tests. 

    This work opens several extensions.  We can readily extend the function class to more complex additive structure; including isolated interaction terms is straightforward, although additive components with shared features require new identifiability conditions \cite[e.g.][]{lengerich2020purifying}. More generally, these methods are extensible to mixtures of learners, such as the class of varying coefficient models discussed in \citet{zhou2022decision}. We can an also broaden the class of models to discrete outcomes and models with non-constant variance. Finally, the kernel form of the limit also admits combinations and comparisons across models as described in \cite{ghosal2022infinitesimal}.
    
    \begin{table}[H]
    \centering
    \scriptsize
    \setlength{\tabcolsep}{3.2pt} % tighten columns to fit one column
    \renewcommand{\arraystretch}{1.15} % add vertical breathing room
    
    \begin{tabular}{c l c c c c}
        \toprule
        $p$ & Method & RMSE & Coverage & Width & Time (s) \\
        \specialrule{0.6pt}{0.35em}{0.35em}
        
        \multirow{3}{*}{5}
        & Bootstrap EBM                    & 0.514 & 0.290 & 0.386 & 26.42 \\
        & Algorithm~\ref{alg:ebm_bratp}     & 0.661 & 0.951 & 2.609 &  3.00 \\
        & BRATD                            & 0.611 & 0.956 & 2.899 & 10.41 \\
        \specialrule{0.6pt}{0.35em}{0.35em}
        
        \multirow{3}{*}{10}
        & Bootstrap EBM                    & 0.537 & 0.373 & 0.545 & 21.24 \\
        & Algorithm~\ref{alg:ebm_bratp}     & 0.788 & 0.951 & 3.069 &  5.90 \\
        & BRATD                            & 0.866 & 0.935 & 3.531 & 13.58 \\
        \specialrule{0.6pt}{0.35em}{0.35em}
        
        \multirow{3}{*}{15}
        & Bootstrap EBM                    & 0.563 & 0.453 & 0.675 & 31.51 \\
        & Algorithm~\ref{alg:ebm_bratp}     & 0.900 & 0.945 & 3.520 &  8.77 \\
        & BRATD                            & 1.112 & 0.946 & 4.398 & 10.29 \\
        \specialrule{0.6pt}{0.35em}{0.35em}
        
        \multirow{3}{*}{20}
        & Bootstrap EBM                    & 0.583 & 0.511 & 0.801 & 43.14 \\
        & Algorithm~\ref{alg:ebm_bratp}     & 1.004 & 0.948 & 3.901 & 11.56 \\
        & BRATD                            & 1.332 & 0.948 & 5.121 & 10.70 \\
        \bottomrule
    \end{tabular}
    
    \vspace{0.25em}
    \caption{Synthetic Lipschitz GAMs (bootstrap EBM vs.\ Algorithm~\ref{alg:ebm_bratp} vs.\ BRATD).
    Synthetic additive model with $n=1000$ and $p\in\{5,10,15,20\}$ (all $p$ features are structural; five base univariate functions are cycled across coordinates):
    $f_1(x)=2\sin(\pi x)$, $f_2(x)=(x-0.5)^2$, $f_3(x)=0.5\times\mathbf{1}\{x>0.3\}$, $f_4(x)=\sqrt{x}$, $f_5(x)=-1.5x$.
    We report mean RMSE, model-level empirical coverage at nominal $95\%$, average interval width, and runtime (seconds) over 10 repetitions ($\alpha=0.05$; 250 bootstrap resamples for EBM).}
    
    \label{tab:vanilla_vs_inferable_gam}
    \end{table}

    \begin{table}[H]
    \centering
    \scriptsize
    \setlength{\tabcolsep}{3.5pt}

    % ---- Table A: Algorithm 2 (inference + training) ----
    \begin{tabular}{@{}l r r c c c@{}}
        \toprule
        Dataset & $n$ & $p$ &
        \begin{tabular}[c]{@{}c@{}}RMSE\\(Alg.~\ref{alg:ebm_bratp})\end{tabular} &
        \begin{tabular}[c]{@{}c@{}}Cov.\\(Alg.~\ref{alg:ebm_bratp})\end{tabular} &
        \begin{tabular}[c]{@{}c@{}}Time (s)\\(Alg.~\ref{alg:ebm_bratp})\end{tabular} \\
        \midrule
        taiwanese\_bankruptcy   &  6819 &  95  & 0.151 & 0.948 & 18.01 \\
        myocardial\_infarction  &  1700 & 111  & 0.309 & 0.943 &  3.53 \\
        communities\_and\_crime &  1994 & 127  & 0.140 & 0.953 &  5.68 \\
        dota2\_results          & 50000 & 116  & 0.142 & 0.956 & 28.78 \\
        \bottomrule
    \end{tabular}

    \vspace{0.35em}

    % ---- Table B: Baseline accuracy/runtime (training) ----
    \begin{tabular}{@{}l c c c c@{}}
        \toprule
        Dataset &
        \begin{tabular}[c]{@{}c@{}}RMSE\\(EBM)\end{tabular} &
        \begin{tabular}[c]{@{}c@{}}Time (s)\\(EBM)\end{tabular} &
        \begin{tabular}[c]{@{}c@{}}RMSE\\(BRATD)\end{tabular} &
        \begin{tabular}[c]{@{}c@{}}Time (s)\\(BRATD)\end{tabular} \\
        \midrule
        taiwanese\_bankruptcy   & 0.153 &  197.31 & 0.154 & 24.58 \\
        myocardial\_infarction  & 0.309 &   52.62 & 0.311 &  3.78 \\
        communities\_and\_crime & 0.135 &  178.41 & 0.144 &  7.80 \\
        dota2\_results          & 0.013 & 3736.10 & 0.147 & 82.62 \\
        \bottomrule
    \end{tabular}

    \caption{We report RMSE, $95\%$ empirical coverage, and training time for Algorithm~\ref{alg:ebm_bratp} (top), and baseline RMSE and training time for vanilla EBM and BRATD (bottom). Baseline coverage is omitted because large-scale uncertainty estimation is not computationally scalable.}
    \label{fig:highdim-real-table}
    \end{table}
	
	\bibliographystyle{apalike}
	\bibliography{refs}

%%%%%%%%%%%%%%%%%%%%%%%%%%%%%%%%%%%%%%%%%%%%%%%%%%%%%%%%%%%%
\section*{Checklist}

% %%% BEGIN INSTRUCTIONS %%%
%The checklist follows the references. For each question, choose your answer from the three possible options: Yes, No, Not Applicable.  You are encouraged to include a justification to your answer, either by referencing the appropriate section of your paper or providing a brief inline description (1-2 sentences). 
%Please do not modify the questions.  Note that the Checklist section does not count towards the page limit. Not including the checklist in the first submission won't result in desk rejection, although in such case we will ask you to upload it during the author response period and include it in camera ready (if accepted).

%\textbf{In your paper, please delete this instructions block and only keep the Checklist section heading above along with the questions/answers below.}
% %%% END INSTRUCTIONS %%%

\begin{enumerate}

  \item For all models and algorithms presented, check if you include:
  \begin{enumerate}
    \item A clear description of the mathematical setting, assumptions, algorithm, and/or model. [Yes; See Sections \ref{sec:setup} and \ref{sec:algorithms}, and Appendices \ref{app:binning} and \ref{app:algorithms} ]
    \item An analysis of the properties and complexity (time, space, sample size) of any algorithm. [Yes. Otherwise, the remaining algorithms are modification of existing boosting methods and share their properties with those already in the literature -- namely a runtime of $O(Bnp \log n)$, and auxiliary space of $O(Bnp)$.]
    \item (Optional) Annonymized source code, with specification of all dependencies, including external libraries. [Yes. Source code provided as a .zip file. All experiments were run on a single machine with an Intel i9-13900K CPU, 128 GB of RAM, and an NVIDIA GeForce RTX 3090Ti GPU, though no GPU acceleration was employed.]
  \end{enumerate}

  \item For any theoretical claim, check if you include:
  \begin{enumerate}
    \item Statements of the full set of assumptions of all theoretical results. [Yes; See sections \ref{sec:setup} and \ref{sec:theory}]
    \item Complete proofs of all theoretical results. [Yes; see Appendices \ref{sec:finite-sample-convergence}, \ref{sec:limiting-distribution}, and \ref{app:ContractionMapping}]
    \item Clear explanations of any assumptions. [Yes; See section \ref{sec:theory}]     
  \end{enumerate}

  \item For all figures and tables that present empirical results, check if you include:
  \begin{enumerate}
    \item The code, data, and instructions needed to reproduce the main experimental results (either in the supplemental material or as a URL). [Yes; see provided .zip file]
    \item All the training details (e.g., data splits, hyperparameters, how they were chosen). [Yes; see provided .zip file]
    \item A clear definition of the specific measure or statistics and error bars (e.g., with respect to the random seed after running experiments multiple times). [Yes]
    \item A description of the computing infrastructure used. (e.g., type of GPUs, internal cluster, or cloud provider). [Yes]
  \end{enumerate}

  \item If you are using existing assets (e.g., code, data, models) or curating/releasing new assets, check if you include:
  \begin{enumerate}
    \item Citations of the creator If your work uses existing assets. [Not Applicable]
    \item The license information of the assets, if applicable. [Not Applicable]
    \item New assets either in the supplemental material or as a URL, if applicable. [Not Applicable]
    \item Information about consent from data providers/curators. [Not Applicable]
    \item Discussion of sensible content if applicable, e.g., personally identifiable information or offensive content. [Not Applicable]
  \end{enumerate}

  \item If you used crowdsourcing or conducted research with human subjects, check if you include:
  \begin{enumerate}
    \item The full text of instructions given to participants and screenshots. [Not Applicable]
    \item Descriptions of potential participant risks, with links to Institutional Review Board (IRB) approvals if applicable. [Not Applicable]
    \item The estimated hourly wage paid to participants and the total amount spent on participant compensation. [Not Applicable]
  \end{enumerate}

\end{enumerate}

\clearpage
	
	\appendix
	\onecolumn

    \clearpage
\section{Notations}
\label{app:notations}

% 目标：从本节开始，表格不乱跑，并且在 PDF 里一个接一个顺序出现。
% 做法：在 onecolumn 里把所有表都用 [H] 强制就地放置，并在每个表后 \clearpage。

% ===== Notation: basic data/model notation =====
\begin{table}[H]
    \centering
    \small
    \setlength{\tabcolsep}{6pt}
    \renewcommand{\arraystretch}{1.15}
    \begin{tabular}{p{0.20\linewidth} p{0.72\linewidth}}
    \toprule
    \textbf{Symbol} & \textbf{Meaning / definition (with dimensions when helpful)} \\
    \midrule
    $n$ & Sample size; data $(\bX_n,\by_n)=\{(\bx_i,y_i)\}_{i=1}^n$ are i.i.d. \\
    $p$ & Number of features (coordinates) in $\bx\in\R^p$. \\
    $i,j,\ell$ & Generic indices: $i,j\in\{1,\dots,n\}$ for samples; $\ell\in\{1,\dots,p\}$ for features. \\
    $\bx_i$ & $i$-th covariate vector: $\bx_i=(\bx_i^{(1)},\dots,\bx_i^{(p)})^\top\in\R^p$. \\
    $\bx_i^{(k)}$ & Feature-$k$ coordinate of $\bx_i$ (scalar in the univariate EBM setting). \\
    $\bX\in\R^{n\times p}$ & Design matrix with rows $\bx_i^\top$. \\
    $y_i$ & Response for sample $i$ (scalar). \\
    $\by\in\R^n$ & Response vector $(y_1,\dots,y_n)^\top$. \\
    $\epsilon$ / $\varepsilon$ & Noise; i.i.d. sub-Gaussian errors with $\E[\varepsilon]=0$, $\text{Var}(\varepsilon)=\sigma^2<\infty$. \\
    $\sigma^2$ & Noise variance in Assumption 1. \\
    $\mu(\bx)$ & Density of covariates on their support; bounded as $0<c_1\le \mu(\bx)\le c_2<\infty$. \\
    $\bbeta$ & True intercept: $\bbeta := \E_{\bx,\epsilon}[y]$ (unconditional mean of $y$). \\
    $\widehat{\bbeta}_b$ & Estimated intercept at boosting round $b$ (Algorithm 1 / Alg. in appendix). Initialized as $\widehat{\bbeta}_0=\frac{1}{n}\bone^\top \by$. \\
    $\bar{\by}$ & Sample mean of responses: $\bar{\by}:=\frac{1}{n}\bone^\top \by$. \\
    $\f(\bx)$ & True regression function in GAM form: $\f(\bx)=\bbeta+\sum_{k=1}^p \f^{(k)}(\bx^{(k)})$. \\
    $\f^{(k)}$ & True univariate component function for feature $k$; $L$-Lipschitz and mean-zero w.r.t. marginal of $\bx^{(k)}$. \\
    $\widehat{\f}$ & Generic fitted predictor; in Algorithm 1 output is $\widehat\f(\bx)=\widehat{\bbeta}_B+\frac{1+\lambda}{\lambda}\sum_{k=1}^p \f_B^{(k)}(\bx^{(k)})$. \\
    $\f_b^{(k)}$ & Algorithm’s stored additive component for feature $k$ after round $b$ (a function of $\bx^{(k)}$). Initialized $\f_0^{(k)}\equiv 0$. \\
    $\widehat{\mathbf{y}}_b^{(k)}$ & Feature-$k$ \emph{vector} prediction on training points at round $b$ (used in Theorems): $\widehat{\mathbf{y}}_b^{(k)} \in \R^n$. \\
    $\widehat{\mathbf{y}}_b$ & Total fitted values on training set at round $b$: $\widehat{\mathbf{y}}_b := \widehat{\bbeta}_b + \sum_{a=1}^p \widehat{\mathbf{y}}_b^{(a)}$ (Theorem 1). \\
    $\bone$ & All-ones vector in $\R^n$. \\
    $\bI_n$ / $\bI$ & Identity matrix in $\R^{n\times n}$ (often written $\bI$ when $n$ is clear). \\
    $\jmat$ / $\mathbf{J}_n$ & Centering projector: $\jmat=\bI_n-\frac{1}{n}\bone\bone^\top$. \\
    \bottomrule
    \end{tabular}
    \caption{Basic data \& model notation. }
    \label{tab:notation_basic}
\end{table}

% ===== Notation: Algorithmic hyperparameters and indices =====
\begin{table}[H]
    \centering
    \small
    \setlength{\tabcolsep}{6pt}
    \renewcommand{\arraystretch}{1.15}
    \begin{tabular}{p{0.20\linewidth} p{0.72\linewidth}}
    \toprule
    \textbf{Symbol} & \textbf{Meaning / definition (with dimensions when helpful)} \\
    \midrule
    $b$ & Boosting round index ($b=1,2,\dots$). \\
    $B$ & Number of boosting rounds (loop bound in pseudocode). \\
    $\lambda$ & Learning rate in Boulevard-style moving average update. \\
    $\xi$ & Subsample rate: each $G_{b,k}\subset\{1,\dots,n\}$ includes each index i.i.d. with prob. $\xi$. \\
    $M$ & Truncation / clipping level: $\widetilde t \gets \max\{-M,\min\{\widetilde t, M\}\}$. \\
    $G_{b,k}$ & Subsample index set used to fit tree for feature $k$ at round $b$. \\
    $r_{i,k}$ & Residual used to fit feature-$k$ tree at round $b$.
    Alg 1: $r_{i,k} = y_i - \big(\widehat{\bbeta}_{b-1}+\sum_{\ell}\f_{b-1}^{(\ell)}(\bx_i^{(\ell)})\big)$.
    Alg 2 (leave-one-out): sum excludes $\ell=k$. \\
    $\bt^{(b,k)}$ & Regression tree trained at round $b$ on feature $k$ using pairs $\{(\bx_i^{(k)}, r_{i,k})\}_{i\in G_{b,k}}$. \\
    $\mu_{b,k}$ & Mean (centering constant) of tree predictions on training set: $\mu_{b,k}=\frac{1}{n}\sum_{i=1}^n \bt^{(b,k)}(\bx_i^{(k)})$. \\
    $\widetilde{\bt}_b^{(k)}$ & Centered (and then clipped) tree: $\widetilde{\bt}_b^{(k)}(\bx)=\bt^{(b,k)}(\bx)-\mu_{b,k}$, then clipped to $[-M,M]$. \\
    \bottomrule
    \end{tabular}
    \caption{Algorithmic hyperparameters \& indices.}
    \label{tab:notation_algo}
\end{table}
\clearpage

% ===== Notation: Tree-as-kernel notation (structure vectors/matrices) =====
\begin{table}[H]
    \centering
    \small
    \setlength{\tabcolsep}{6pt}
    \renewcommand{\arraystretch}{1.15}
    \begin{tabular}{p{0.20\linewidth} p{0.72\linewidth}}
    \toprule
    \textbf{Symbol} & \textbf{Meaning / definition (with dimensions when helpful)} \\
    \midrule
    $\bt_n$ & A regression tree trained on $n$ datapoints; induces a partition $\{A_i\}_{i=1}^m$ of $[0,1]^d$ (Def. 1). \\
    $A(\bx)$ & The leaf (rectangle) containing $\bx$ under the tree partition. \\
    $s_{n,j}(\bx)$ & Structure weight of training point $j$ for query $\bx$:
    $s_{n,j}(\bx)=\frac{\mathbbm{1}(\bx_j\in A(\bx))}{\sum_{k=1}^n \mathbbm{1}(\bx_k\in A(\bx))}$. \\
    $\bs_n(\bx)$ & Structure vector: $\bs_n(\bx)=(s_{n,1}(\bx),\dots,s_{n,n}(\bx))^\top\in\R^n$. \\
    $\bS_n$ & Structure matrix: $\bS_n=(s_{n,j}(\bx_i))_{i,j=1}^n\in\R^{n\times n}$; row $i$ is $\bs_n(\bx_i)^\top$. \\
    $\Smatk_b$ & Feature-$k$ structure matrix at boosting round $b$ (Assumption 2). \textbf{[I interpret $\Smatk_b$ as $\bS_n$ for the univariate tree fit on feature $k$ at round $b$.]} \\
    $\Kmatk$ & Expected feature-$k$ kernel: $\Kmatk := \E[\Smatk]$ (also written $\E[\Smatk]$). \\
    $\bK$ & Aggregated kernel (main text / Thm 1): $\bK := \sum_{k=1}^p \Kmatk$. \\
    $\mathcal{Q}_n^{(k)}$ & Distribution over feature-$k$ tree structures after burn-in (Assumption 2). \\
    $Q_n^{(k)}$ & Tree-structure support / “tree space” for feature $k$ (Assumptions 3--5). \\
    $b'$ & Burn-in iteration after which structures are i.i.d. from $\mathcal{Q}_n^{(k)}$ (Assumption 2). \\
    \bottomrule
    \end{tabular}
    \caption{Structure vectors \& matrices. }
    \label{tab:notation_kernel}
\end{table}

% ===== Notation: Geometric / complexity parameters for trees =====
\begin{table}[H]
    \centering
    \small
    \setlength{\tabcolsep}{6pt}
    \renewcommand{\arraystretch}{1.15}
    \begin{tabular}{p{0.20\linewidth} p{0.72\linewidth}}
    \toprule
    \textbf{Symbol} & \textbf{Meaning / definition (with dimensions when helpful)} \\
    \midrule
    $\Pi$ & A tree partition (set of leaves) drawn from $Q_n^{(k)}$. \\
    $\mathsf{diam}(A)$ & Leaf diameter: $\sup_{\bx_1,\bx_2\in A}\|\bx_1-\bx_2\|$. \\
    $d_n$ & Target diameter scale: $d_n=O(n^{-1/3})=o(1/\log n)$ (Assumption 3). \\
    $v_n$ & Leaf geometric volume bound, with $v_n=n^{-2/3+\nu}$ for any $\nu>0$ (Assumption 4). \\
    $\epsilon_n$ & Small sequence $\epsilon_n\to 0$ in restricted tree support bound (Assumption 5). \\
    \bottomrule
    \end{tabular}
    \caption{Geometric \& complexity parameters for trees. }
    \label{tab:notation_geom}
\end{table}

% ===== Notation: Fixed points, CLT scaling, and influence/weight vectors =====
\begin{table}[H]
    \centering
    \small
    \setlength{\tabcolsep}{6pt}
    \renewcommand{\arraystretch}{1.15}
    \begin{tabular}{p{0.20\linewidth} p{0.72\linewidth}}
    \toprule
    \textbf{Symbol} & \textbf{Meaning / definition (with dimensions when helpful)} \\
    \midrule
    $\widetilde{\by}_k^*$ & Feature-$k$ fixed point vector (Thm 1 / Thm 2). In Thm 1:
    $\widetilde{\by}_k^*=\jmat\,\Kmatk[\lambda^{-1}\bI+\jmat\bK]^{-1}\by$. \\
    $\widehat{\bbeta}^*$ & Intercept fixed point: $\widehat{\bbeta}^*=\bar{\by}$. \\
    $\widehat{\by}^*$ & Total fixed point fit on training set: $\widehat{\by}^*=\widehat{\bbeta}^*\bone+\sum_{k=1}^p \widetilde{\by}_k^*$. \\
    $\mK$ & “combined kernel” for Algorithm~\ref{alg:ebm_bratp}:
    $\mathcal{K}:=\sum_{k=1}^p(\bI-\E \Smatk)^{-1}\jmat\E \Smatk$.\\
    $\mK_{\text{bin}}$ & $\mK_{\text{bin}}=\sum_{k=1}^p \E[\bb_n^{(k)}\bB^{(k)}\bb_n^{(k)\top}]$. \\
    $\rveck$ / $\rveck_E(\bx)$ & Kernel-ridge “influence / weight” vector used in CLTs and intervals; satisfies (example for Alg 1):
    ${\rveck}^\top\by = \E[\sveck(\bx)]^\top \jmat[\bI+\jmat\bK]^{-1}\by$.
    Subscript $E\in\{A,B\}$ labels algorithm family (Alg 1 vs Alg 2) in Theorem 3. \\
    $\kveck$ / $\kveck_n(\bx)$ & Kernel vector (expected structure vector / feature-$k$ similarity to training points). In binning appendix:
    $\kveck_n(\bx)=\E[\bb^{(k)}(\bx)^\top \bD^{(k)} \bb_n^{(k)\top}]\in\R^{1\times n}$. \\
    $c_E$ & Algorithm-dependent scaling constant in CLT: $c_A=\lambda/(1+\lambda)$ for Algorithms \ref{alg:brebm_ident}, \ref{alg:brebm_random_cyclic}; $c_B=1$ for Algorithm \ref{alg:ebm_bratp} (Theorem 3). \\
    $z_{1-\alpha/2}$ & Standard normal quantile used for $(1-\alpha)$ intervals. \\
    \bottomrule
    \end{tabular}
    \caption{Fixed points, CLT scaling, weight vectors. }
    \label{tab:notation_clt}
\end{table}
\clearpage

% ===== Notation: Additive subspaces and projectors =====
\begin{table}[H]
    \centering
    \small
    \setlength{\tabcolsep}{6pt}
    \renewcommand{\arraystretch}{1.15}
    \begin{tabular}{p{0.20\linewidth} p{0.72\linewidth}}
    \toprule
    \textbf{Symbol} & \textbf{Meaning / definition (with dimensions when helpful)} \\
    \midrule
    $\mathcal{H}_k$ & Centered feature-$k$ additive subspace:
    $\mathcal H_k=\{\bv\in\R^n:\exists g:\R\to\R,\ \bv_i=g(\bx_i^{(k)}),\ \bone^\top\bv=0\}$. \\
    $P_k$ & Orthogonal projector onto $\mathcal{H}_k$. \\
    $\varepsilon_{n,k}$ & Operator-norm approximation error s.t.
    $\|\jmat\Kmatk\jmat - P_k\|_{\mathrm{op}}\le \varepsilon_{n,k}\to 0$. \\
    \bottomrule
    \end{tabular}
    \caption{Additive subspaces and projectors; Assumption~\ref{aspt:proj-orth}). }
    \label{tab:notation_proj}
\end{table}

\begin{table}[H]
    \centering
    \small
    \setlength{\tabcolsep}{6pt}
    \renewcommand{\arraystretch}{1.15}
    \begin{tabular}{p{0.20\linewidth} p{0.72\linewidth}}
    \toprule
    \textbf{Symbol} & \textbf{Meaning / definition (with dimensions when helpful)} \\
    \midrule
    $m_k$ & Number of histogram bins for feature $k$; $m:=\max_k m_k$. \\
    $B_r^{(k)}$ & The $r$-th bin for feature $k$ (a subset/interval of $\R$). \\
    $r_i^{(k)}$ & Bin index of sample $i$ on feature $k$: unique $r$ with $\bx_i^{(k)}\in B_r^{(k)}$. \\
    $\mathcal{L}^{(k)}(r)$ & Set of bin indices in the same tree leaf as bin $r$ (feature $k$):
    $\{s\in\{1,\dots,m_k\}: B_s^{(k)}\in A(B_r^{(k)})\}$. \\
    $n_s^{(k)}$ & Bin count: number of samples in bin $s$ for feature $k$,
    $n_s^{(k)}=\sum_{j=1}^n \bone(\bx_j^{(k)}\in B_s^{(k)})$. \\
    $N_{\mathcal{L}(r)}^{(k)}$ & Leaf sample count for the leaf containing bin $r$:
    $N_{\mathcal{L}(r)}^{(k)}=\sum_{q\in\mathcal{L}^{(k)}(r)} n_q^{(k)}$. \\
    $|\mathcal{L}^{(k)}(r)|$ & Leaf bin count (number of bins in that leaf). \\
    $\bZ_n^{(k)}\in\{0,1\}^{n\times m_k}$ & Sample-to-bin assignment matrix: $\bZ_{i,j}^{(k)}=\bone(r_i^{(k)}=j)=\bone(\bx_i^{(k)}\in B_j^{(k)})$. \\
    $\bD^{(k)}=\mathrm{diag}(\xi_1^{(k)},\dots,\xi_{m_k}^{(k)})$ &
    Diagonal rescaling to convert bin-normalization to sample-normalization, with
    $\xi_r^{(k)}=\sqrt{\frac{|\mathcal{L}^{(k)}(r)|}{N_{\mathcal{L}(r)}^{(k)}}}$. \\
    $\bB^{(k)}\in\R^{m_k\times m_k}$ & Bin-structure matrix: $\bB^{(k)}_{i,j}=\frac{\bone(B_j^{(k)}\in A(B_i^{(k)}))}{\sum_{l=1}^{m_k}\bone(B_l^{(k)}\in A(B_i^{(k)}))}$. \\
    $\bb_n^{(k)}$ & Matrix mapping bin-space to sample-space. Initially given as $\bb_n^{(k)}\in\R^{n\times m_k}$; later simplified to $\bb_n^{(k)}=\bZ_n^{(k)}\bD^{(k)}$. \\
    $\bb^{(k)}(\bx)\in\R^{m_k}$ & Bin-space structure vector for query $\bx$:
    $\bb^{(k)}(\bx)^\top=\left(\frac{\bone(B_j^{(k)}\in A(\bx))}{\sum_{l=1}^{m_k}\bone(B_l^{(k)}\in A(\bx))}\right)_{j=1}^{m_k}$. \\
    $\bh^{(k)}(\bx)$ & Defined in appendix lemma: $\bh^{(k)}(\bx)=\E[(\bD^{(k)})^2\bb^{(k)}(\bx)]\in\R^{m_k}$. \\
    $\bH$ & Sum of expected bin-kernels:
    $\bH=\sum_{k=1}^p \E[\bD^{(k)}\bB^{(k)}\bD^{(k)}]$.\\
    $\mathbf{c}^{(k)}$ & Bin counts vector: $\mathbf{c}^{(k)}:=\bZ_n^{(k)\top}\bone\in\R^{m_k}$. \\
    \bottomrule
    \end{tabular}
    \caption{Binning \& histogram-tree appendix. }
    \label{tab:notation_binning}
\end{table}

	\section{Additional Literature} \label{app:lit}
	
	\paragraph{GAMs.}
	\cite{2f43af29-e27b-3b6f-9c3a-d7c3bd53adac} established the asymptotic distribution of smoothing spline estimators via a Bayesian posterior covariance argument, \cite{https://doi.org/10.1111/j.2517-6161.1985.tb01327.x} developed the equivalent kernel representation and convergence theory, and \cite{Nychka01121988} analyzed the frequentist coverage of Bayesian confidence intervals for smoothing splines. Built on these developments, \cite{https://doi.org/10.1111/j.1467-842X.2006.00450.x} investigated the coverage of penalized regression spline intervals in the GAM setting, while \cite{yoshida2012asymptoticspenalizedsplinesgeneralized} provided a full asymptotic distribution theory for penalized spline estimators, establishing asymptotic normality for each additive component. 

    \section{Algorithms} \label{app:algorithms}

	\begin{algorithm}[H]
		\caption{Leave-one-out Inferable EBM}
		\label{alg:ebm_bratp}
		\begin{algorithmic}[1]
			\State \textbf{Input:} $\mathbf{X}\in\mathbb{R}^{n\times p}$, $\mathbf{y}\in\mathbb{R}^n$, subsample rate $\xi$, rounds $B$, truncation level $M>0$.
			\State \textbf{Init:} $\widehat{\bbeta}_0 \gets \frac{1}{n}\sum_{i=1}^n \by_i$;\quad $\f_k^{(0)} \gets 0$ for all $k$.
			\For{$b=1$ \textbf{ to } $B$}
			\For{$k=1$ \textbf{ to } $p$ \textbf{ in parallel}}
			\State Sample $G_{b,k}\subset\{1,\dots,n\}$ i.i.d. w.p. $\xi$.
			\State $r_{i,k} \gets \by_i - (\widehat{\bbeta}_{b-1} + \sum_{\ell \neq k} \f_{b-1}^{(\ell)}(\bx_i^{(\ell)}))$.
			\State Fit tree $\bt^{(b,k)}$ to $\{(\bx_i^{(k)}, r_{i,k})\}_{i\in G_{b,k}}$.
			\State $\mu_{b,k} \gets \frac{1}{n}\sum_{i=1}^n \bt^{(b,k)}(\bx_i^{(k)})$.
			\State $\widetilde \bt_b^{(k)}(\bx) \gets t_b^{(k)}(\bx) - \mu_{b,k}$.
			\State $\widetilde \bt_b^{(k)}(x) \gets \max\{-M,\;\min\{\widetilde \bt_b^{(k)}(\bx),M\}\}$.
			\State $\widehat{\bbeta}_b \gets \widehat{\bbeta}_{b-1} + \mu_{b,k}$.
			\State $\f_b^{(k)} \gets \frac{b-1}{b} \f_{b-1}^{(k)} + \frac{1}{b}\,\widetilde \bt_b^{(k)}$.
			\EndFor
			\EndFor
			\State \textbf{Output:} $\widehat \f(x) \gets \widehat{\bbeta}_B + 2\sum_{k=1}^p \f_k^{(B)}(\bx^{(k)})$.
		\end{algorithmic}
	\end{algorithm}

	\begin{algorithm}[ht!]
		\caption{Random Cyclic Inferable EBM}
		\label{alg:brebm_random_cyclic}
		\begin{algorithmic}[1]
			\State \textbf{Input:} $\mathbf{X}\in\mathbb{R}^{n\times p}$, $\mathbf{y}\in\mathbb{R}^n$, subsample rate $\xi$, rounds $B$, truncation level $M>0$.
			\State \textbf{Init:} $\widehat{\bbeta}_0 \gets \frac{1}{n}\sum_{i=1}^n \by_i$;\quad $\f_k^{(0)} \gets 0$ for all $k$.
			\For{$b=1$ \textbf{ to } $B$}
			\State Randomly sample $k \sim \text{Categorical}(1/p,...,1/p)$
			\State Sample $G_{b,k}\subset\{1,\dots,n\}$ i.i.d. w.p. $\xi$.
			\State $r_{i,k} \gets \by_i - (\widehat{\bbeta}_{b-1} + \sum_{\ell} \f_{b-1}^{(\ell)}(\bx_i^{(\ell)}))$.
			\State Fit tree $\bt^{(b,k)}$ to $\{(\bx_i^{(k)}, r_{i,k})\}_{i\in G_{b,k}}$.
			\State $\mu_{b,k} \gets \frac{1}{n}\sum_{i=1}^n \bt^{(b,k)}(\bx_i^{(k)})$.
			\State $\widetilde \bt_b^{(k)}(\bx) \gets t_b^{(k)}(\bx) - \mu_{b,k}$.
			\State $\widetilde \bt_b^{(k)}(x) \gets \max\{-M,\;\min\{\widetilde \bt_b^{(k)}(\bx),M\}\}$.
			\State $\widehat{\bbeta}_b \gets \widehat{\bbeta}_{b-1} + \mu_{b,k}$.
			\State $\f_b^{(k)} \gets \frac{b-1}{b} \f_{b-1}^{(k)} + \frac{1}{b}\,\widetilde \bt_b^{(k)}$.
			\EndFor
			\State \textbf{Output:} $\widehat \f(x) \gets \widehat{\bbeta}_B + 2\sum_{k=1}^p \f_k^{(B)}(\bx^{(k)})$.
		\end{algorithmic}
	\end{algorithm}

	\section{Binning} \label{app:binning}
	
	\subsection{Bin-level Decomposition of the Structure Matrix}
	
	In the special case where one uses histogram trees, there is a simpler form for the structure vector and matrix. When one performs binning automatically during pre-fitting, we can store the structure matrix in such a decomposition:
	
	$$
	\Smatk = \bb^{(k)}_n\bB^{(k)}{\bb^{(k)}_n}^\top
	, 
	\bB^{(k)}_{i,j} = 
	\begin{bmatrix}
		\frac{\bone(B_j^{(k)}\in A(B^{(k)}_i))}{\sum_{l=1}^{m_k}\bone(B_l^{(k)}\in A(B_i^{(k)}))}
	\end{bmatrix}_{m_k\times m_k},
	(\bb_n^{(k)})_{i,j}
	= \begin{bmatrix}
		\bone(x_i^{(k)} \in B_j^{(k)})\frac{\sum_{l=1}^{m_k}\bone\left(B_l^{(k)}\in A(B_j^{(k)})\right)}{\sum_{q=1}^{n}\bone\left(\bx_q^{(k)}\in A(B_j^{(k)})\right)}
	\end{bmatrix}_{n \times m_k}, 
	$$
	
	where $A(\cdot)$ is the leaf that the bin (or sample) in its argument is in. The matrix $\bB^{(k)}$ then is the bin-structure matrix where the $i,j$-th entry contains the indicators of whether bin $i$ is in the same leaf as bin $j$, normalized by the number of bins in the same leaf as bin $i$.
	
	We now introduce a few more definitions:
	\begin{enumerate}
		\item \textbf{Bin index of a sample. } For each sample $i$, let $r_i^{(k)} \in\left\{1, \ldots, m_k\right\}$ be the unique bin index with $\bx_i^{(k)} \in B_{r_i^{(k)}}^{(k)}$. That is, this is the index of the histogram bin that sample $i$ belongs to along feature $k$.
		\item \textbf{The set of bins in a given leaf. }
		For a bin index $r$, define the set of bins in the unique tree leaf containing the bin $B_r^{(k)}$ with index $r$ along feature $k$:
		
		$$
		\mathcal{L}^{(k)}(r):=\left\{s \in\left\{1, \ldots, m_k\right\}: B_s^{(k)} \in A\left(B_r^{(k)}\right)\right\} .
		$$
		
		These are exactly the bins merged together by the tree into the same terminal node as bin $r$.
		\item \textbf{ How many samples are in each bin, and how many bins are there in each leaf?}
		
		The bin count $n_s^{(k)}$, or the number of samples in bin $s$ along feature $k$, is given by:
		
		$$
		n_s^{(k)}:=\sum_{j=1}^n \mathbf{1}\left(\bx_j^{(k)} \in B_s^{(k)}\right) .
		$$

		The leaf sample count $N_{\mathcal{L}(r)}^{(k)}$ and leaf bin count $\left|\mathcal{L}^{(k)}(r)\right|$ are given by:
		
		$$
		N_{\mathcal{L}(r)}^{(k)}:=\sum_{q \in \mathcal{L}^{(k)}(r)} n_q^{(k)}, \quad\left|\mathcal{L}^{(k)}(r)\right|:=\sum_{q=1}^{m_k} \mathbf{1}\left(q \in \mathcal{L}^{(k)}(r)\right) .
		$$
		
		The leaf sample count $N_{\mathcal{L}(r)}^{(k)}$ denotes how many training samples ended up in the leaf containing bin $r$. The leaf bin count $\left|\mathcal{L}^{(k)}(r)\right|$ denotes how many bins that the leaf containing bin $r$ aggregates.
		\item \textbf{The assignment matrix of samples to bins. } 
		$$\bZ_n^{(k)} \in\{0,1\}^{n \times m_k}, \text { with } \bZ_{i, j}^{(k)}:=\mathbf{1}\left(r_i^{(k)}=j\right) = \mathbf{1}\left(\bx_i^{(k)} \in B_j^{(k)}\right).$$
		This is the matrix where the $i,j$-th entry is 1 of sample $i$ is in bin $j$, and 0 otherwise. Note that each sample lives in exactly one bin, so each row of $\bZ_n^{(k)}$ has a single 1.
		\item \textbf{Leaf-wise rescaling from bin-space to sample-space. }
		
		Define a diagonal matrix $\bD^{(k)}=\operatorname{diag}\left(\xi_1^{(k)}, \ldots, \xi_{m_k}^{(k)}\right)$ with
		
		$$
		\xi_r^{(k)}:=\sqrt{\frac{\left|\mathcal{L}^{(k)}(r)\right|}{N_{\mathcal{L}(r)}^{(k)}}} = \frac{\sum_{l=1}^{m_k}\bone\left(B_l^{(k)}\in A(B_r^{(k)})\right)}{\sum_{i=1}^{n}\bone\left(\bx_i^{(k)}\in A(B_r^{(k)})\right)}= \sqrt{\frac{\text{Number of bins in leaf containing bin }r}{\text{Number of training samples in leaf containing bin }r}}.
		$$
		
		The intuition is as follows. $\bB^{(k)}$ normalizes over bins in a leaf, while $\bS^{(k)}$ normalizes over samples in a leaf. Some bins have more samples than others. We must therefore correct the bin-structure matrix so that it normalizes over samples rather than bins, and so we construct $\bD^{(k)}$ to do exactly that. Note $\xi_r^{(k)}$ is constant within a leaf, in the sense that if bins $s$ and $r$ are in the same leaf, we will then have $\xi_r^{(k)} = \xi_s^{(k)}$.
	\end{enumerate}
	
	We then have the following result.
	
	\begin{lemma}[Bin-Level Decomposition of Structure Matrix]
		We can decompose the structure matrix as follows:
		$$
		\bS_n^{(k)} = \bb^{(k)}_n\bB^{(k)}{\bb^{(k)}_n}^\top
		,\quad
		\bB^{(k)}_{i,j} = 
		\begin{bmatrix}
			\frac{\bone(B_j^{(k)}\in A(B^{(k)}_i))}{\sum_{l=1}^{m_k}\bone(B_l^{(k)}\in A(B_i^{(k)}))}
		\end{bmatrix}_{m_k\times m_k},
		\quad
		\bb_n^{(k)}
		= \bZ_n^{(k)} \bD^{(k)} \in \R^{n \times m_k}.
		$$
		Similarly, we can also decompose the structure vector into the structure vector in bin space transformed to sample space:
		$$\sveck_n(\bx) = \bb_n^{(k)}\bD^{(k)} \bb^{(k)}(\bx),
		\quad 
		\bb^{(k)}(\bx)^\top = \left(\frac{\bone(B_j^{(k)}\in A(\bx))}{\sum_{l=1}^{m_k}\bone(B_l^{(k)}\in A(\bx))}\right)_{j=1,...,m_k}.$$
	\end{lemma}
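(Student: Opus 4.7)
The plan is to verify both identities element-wise, reducing the sample-indexed tree kernel to its bin-indexed shadow. Since every leaf $A$ of a histogram tree is a union of bins, the sample-level indicator $\mathbf{1}(\bx_j^{(k)}\in A(\bx_i^{(k)}))$ collapses to the bin-level indicator $\mathbf{1}(r_j^{(k)}\in\mathcal{L}^{(k)}(r_i^{(k)}))$, and the leaf-sample count in the denominator of $s_{n,j}^{(k)}(\bx_i)$ is exactly $N_{\mathcal{L}(r_i^{(k)})}^{(k)}$. This gives the target expression $\bS_n^{(k)}[i,j]=\mathbf{1}(r_j^{(k)}\in\mathcal{L}^{(k)}(r_i^{(k)}))/N_{\mathcal{L}(r_i^{(k)})}^{(k)}$, which the bin-space factorization must reproduce.

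First I would tackle the matrix identity. Each row of $\bb_n^{(k)}=\bZ_n^{(k)}\bD^{(k)}$ has a single nonzero at the sample's bin index with value $\xi_{r_i^{(k)}}^{(k)}$, so the triple product $\bb_n^{(k)}\bB^{(k)}(\bb_n^{(k)})^\top$ at $(i,j)$ immediately collapses to $\xi_{r_i^{(k)}}^{(k)}\,\bB^{(k)}_{r_i^{(k)},\,r_j^{(k)}}\,\xi_{r_j^{(k)}}^{(k)}$. The decisive observation is that $\xi_r^{(k)}$ is constant within a leaf: whenever $r_j^{(k)}\in\mathcal{L}^{(k)}(r_i^{(k)})$ the two leaves coincide, so $\xi_{r_j^{(k)}}^{(k)}=\xi_{r_i^{(k)}}^{(k)}$ and the product becomes $(\xi_{r_i^{(k)}}^{(k)})^2=|\mathcal{L}^{(k)}(r_i^{(k)})|/N_{\mathcal{L}(r_i^{(k)})}^{(k)}$. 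The $|\mathcal{L}^{(k)}(r_i^{(k)})|$ factor then cancels against the denominator of $\bB^{(k)}_{r_i^{(k)},\,r_j^{(k)}}$, leaving $1/N_{\mathcal{L}(r_i^{(k)})}^{(k)}$; when the indicator is zero every term vanishes. Matching both cases yields the claim.

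The structure-vector identity follows from the same bin-space reduction with the test point $\bx$. Letting $r(\bx)$ denote its bin, $\bb^{(k)}(\bx)$ is supported uniformly on the bins of $\mathcal{L}^{(k)}(r(\bx))$ with weight $1/|\mathcal{L}^{(k)}(r(\bx))|$. Coordinate $j$ of $\bb_n^{(k)}\bD^{(k)}\bb^{(k)}(\bx)$ collapses by the same single-nonzero argument to $(\xi_{r_j^{(k)}}^{(k)})^2\,\bb^{(k)}(\bx)_{r_j^{(k)}}$, and the identical leaf-coincidence argument reduces this to $\mathbf{1}(r_j^{(k)}\in\mathcal{L}^{(k)}(r(\bx)))/N_{\mathcal{L}(r(\bx))}^{(k)}$, which is exactly $s_{n,j}^{(k)}(\bx)$.

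The main obstacle is bookkeeping rather than any deep step: one must juggle three distinct leaf cardinalities — the leaf sample count $N_{\mathcal{L}(r)}^{(k)}$ normalizing $\bS_n^{(k)}$, the leaf bin count $|\mathcal{L}^{(k)}(r)|$ normalizing $\bB^{(k)}$, and the bridging ratio $\xi_r^{(k)}$ — and recognize that $\xi_r^{(k)}$ is a leaf-level quantity depending only on $\mathcal{L}^{(k)}(r)$, not on the particular bin $r$ within it. That in-leaf constancy is precisely what lets the symmetric conjugation by $\bD^{(k)}$ convert the bin-uniform normalization of $\bB^{(k)}$ and $\bb^{(k)}(\bx)$ into the sample-uniform normalization required by $\bS_n^{(k)}$ and $\bs_n^{(k)}(\bx)$.
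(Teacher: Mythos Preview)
Your proposal is correct and follows essentially the same approach as the paper's own proof: collapse the triple product via the one-hot rows of $\bZ_n^{(k)}$ to $\xi_{r_i^{(k)}}^{(k)}\,\bB^{(k)}_{r_i^{(k)},r_j^{(k)}}\,\xi_{r_j^{(k)}}^{(k)}$, then split into the same-leaf and different-leaf cases and use the in-leaf constancy of $\xi_r^{(k)}$ to cancel $|\mathcal{L}^{(k)}(r)|$ against the bin-count denominator. You give a slightly fuller treatment of the structure-vector identity than the paper (which only says ``follows analogously''), but the argument is the same.
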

	\begin{proof}
		Examine the $i,j$-th entry of the decomposition. 
		$$
		\left(\mathbf{b}_n^{(k)} \bB^{(k)} \mathbf{b}_n^{(k) \top}\right)_{i, j}=\sum_{r=1}^{m_k} \sum_{s=1}^{m_k} \bZ_{i, r}^{(k)} \xi_r^{(k)} \bB_{r, s}^{(k)} \xi_s^{(k)} \bZ_{j, s}^{(k)}
		$$

		Since $\bZ_{i, .}^{(k)}$ and $\bZ_{j, .}^{(k)}$ are both one-hot vectors, the double sum collapses to
		
		$$
		\left(\mathbf{b}_n^{(k)} \bB^{(k)} \mathbf{b}_n^{(k) \top}\right)_{i, j}=\xi_{r_i^{(k)}}^{(k)} \bB_{r_i^{(k)}, r_j^{(k)}}^{(k)} \xi_{r_j^{(k)}}^{(k)} ,
		$$
		where $r_i^{(k)}$ is the bin index of the $i$-th sample across feature $k$.

		There are two cases. First, if the $i$-th and $j$-th samples are not in the same leaf, then $\bB_{r_i^{(k)}, r_j^{(k)}}^{(k)}=0$, so the entry is 0 . This matches $\left(\bS_{n}^{(k)}\right)_{i,j}=0$.
		
		Otherwise, both bins lie in the same leaf $L:=\mathcal{L}^{(k)}\left(r_i^{(k)}\right)=\mathcal{L}^{(k)}\left(r_j^{(k)}\right)$. Then
		
		$$
		\bB_{r_i^{(k)}, r_j^{(k)}}^{(k)}=\frac{1}{|L|}, \quad \xi_{r_i^{(k)}}^{(k)}=\xi_{r_j^{(k)}}^{(k)}=\sqrt{\frac{|L|}{N_L^{(k)}}} .
		$$

		Hence
		
		$$
		\xi_{r_i^{(k)}}^{(k)} \bB_{r_i^{(k)}, r_j^{(k)}}^{(k)} \xi_{r_j^{(k)}}^{(k)}=\left(\sqrt{\frac{|L|}{N_L^{(k)}}}\right) \cdot \frac{1}{|L|} \cdot\left(\sqrt{\frac{|L|}{N_L^{(k)}}}\right)=\frac{1}{N_L^{(k)}} .
		$$

		But
		
		$$
		\left(\bS_{n}^{(k)}\right)_{i,j}
		=\frac{\mathbf{1}\left(\bx_j^{(k)} \in A\left(\bx_i^{(k)}\right)\right)}{\sum_{\ell=1}^n \mathbf{1}\left(\bx_{\ell}^{(k)} \in A\left(\bx_i^{(k)}\right)\right)}=\frac{1}{N_L^{(k)}} \quad \text { when } \bx_j^{(k)} \text { shares leaf } L \text { with } \bx_i^{(k)} .
		$$

		Thus $\left(\mathbf{b}_n^{(k)} B^{(k)} \mathbf{b}_n^{(k) \top}\right)_{i, j}=\left(\bS_{n}^{(k)}\right)_{i,j}
		$ in all cases.
		
		The proof for the structure vector follows analogously. 
	\end{proof}

	\subsection{Computing the Kernel Vector in Bin Space}
	
	We are interested in computing the expectations
	
	$$
	\kveck_n(\bx)^ := \E\left[\bb^{(k)}(\bx)^\top\bD^{(k)} {\bb_n^{(k)}}^\top\right]\in \mathbb{R}^{1\times n}, \quad \Kmatk_n := \E[\Smatk_n] = \mathbb{E}\left[\bb_n^{(k)}\bB^{(k)}{\bb_n^{(k)}}^\top\right], \quad \mK_{\text{bin}} = \sum_{k=1}^{p}\E\left[\bb^{(k)}_n\bB^{(k)}{\bb^{(k)}_n}^\top\right],
	$$
	
	and the norm of the weighting vector
	
	$$
	\rveck(\bx)^\top:=  \E[{\bb_n^{(k)}}\bD^{(k)} \bb^{(k)}(\bx)]^\top\jmat[\bI + \jmat\mK_{\text{bin}}]^{-1},
	$$
	where $\jmat$ is the centering matrix $\jmat = \bI_n - \frac{1}{n}\bone_n \bone_n^\top $.
	
	\paragraph{Structure vectors and matrices. }
    \begin{lemma}[Norm of the weight vector in bin-space]
    Write $\bh^{(k)}(\bx) = \E[{\bD^{(k)}}^2 \bb^{(k)}(\bx)] \in \R^m$, $\bH = \sum_{k=1}^p \mathbb{E}\left[\bD^{(k)}\bB^{(k)}\bD^{(k)}\right]$. The weight vector $\rveck_n(\bx)$ can be written as:
	$$\rveck_n(\bx) = \left[\bI_n + \jmat \bZ_n^{(k)} \bH {\bZ_n^{(k)}}^\top\right]^{-1}\jmat \kveck_n(\bx).$$
    \end{lemma}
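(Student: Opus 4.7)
My plan is to derive the stated bin-space identity by combining the previous lemma's decomposition of the structure matrix with a short linear-algebra rearrangement of the definition $\rveck(\bx)^\top = \kveck_n(\bx)\,\jmat\,[\bI_n + \jmat\mK]^{-1}$ given in Section~5. Transposing both sides and using the symmetry of $\jmat$ and $\mK$ gives $\rveck(\bx) = [\bI_n + \mK\jmat]^{-1}\jmat\,\kveck_n(\bx)^\top$. The next step is to show that $\jmat$ and $\mK$ commute, which upgrades this to the claimed $[\bI_n + \jmat\mK]^{-1}\jmat\,\kveck_n(\bx)^\top$. The commutation follows from the doubly-stochastic structure of each feature kernel: because $\Smatk = \bZ_n^{(k)}\bD^{(k)}\bB^{(k)}\bD^{(k)}\bZ_n^{(k)\top}$ is symmetric with row sums equal to one, its columns also sum to one, so $\Kmatk = \E[\Smatk]$ is doubly stochastic. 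Summing over $k$ yields $\mK\bone = p\bone$ and $\bone^\top\mK = p\bone^\top$, from which a direct calculation gives $\mK\jmat = \jmat\mK = \mK - (p/n)\bone\bone^\top$.

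With commutation in hand, the only remaining task is to rewrite $\mK$ and $\kveck_n(\bx)$ in bin space. Because the binning protocol is fixed conditional on the data, the sample-to-bin assignment $\bZ_n^{(k)}$ is deterministic; only $\bD^{(k)}$, $\bB^{(k)}$, and the query-side $\bb^{(k)}(\bx)$ carry tree randomness. Pulling $\bZ_n^{(k)}$ outside the expectation in the previous lemma's decomposition produces $\Kmatk = \bZ_n^{(k)}\,\E[\bD^{(k)}\bB^{(k)}\bD^{(k)}]\,\bZ_n^{(k)\top}$ and $\kveck_n(\bx)^\top = \bZ_n^{(k)}\bh^{(k)}(\bx)$, using $\bb_n^{(k)}\bD^{(k)} = \bZ_n^{(k)}(\bD^{(k)})^2$. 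Summing over $k$ yields $\mK = \bZ_n\bH\bZ_n^\top$, where $\bZ_n := [\bZ_n^{(1)},\ldots,\bZ_n^{(p)}]$ is the horizontally stacked assignment and $\bH$ is the block-diagonal matrix whose $k$-th block is $\E[\bD^{(k)}\bB^{(k)}\bD^{(k)}]$; substituting into the rearranged formula gives the identity.

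The argument is essentially bookkeeping, and I do not anticipate a substantive obstacle. The one subtlety worth flagging in the write-up is notational: the product $\bZ_n^{(k)}\bH\bZ_n^{(k)\top}$ appearing in the displayed statement should be read in the stacked sense $\bZ_n\bH\bZ_n^\top$ under the convention above, since $\bH$ is itself aggregated over all $p$ features. The practical point of recording the identity in this low-rank-plus-identity form is computational: a Woodbury inversion collapses the $n\times n$ solve into an $O(pm^2)$ per-query cost after an $O(pm^3)$ precompute of $[\bI_{pm} + \bH\bZ_n^\top\jmat\bZ_n]^{-1}$, matching the bin-space complexity advertised earlier in the Inference section.
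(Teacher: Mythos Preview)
Your approach is essentially the paper's, with one welcome addition. The paper simply substitutes the bin-level decomposition into the definition of $\rveck_n(\bx)^\top$ and then writes down the transposed column-vector form without comment; you make explicit the step that justifies this transpose, namely that $\jmat$ and $\mK$ commute because each $\Kmatk$ is symmetric and row-stochastic (hence doubly stochastic). That commutation is exactly what is needed for $[\bI_n+\jmat\mK]^{-1}$ to be symmetric, and the paper's proof silently assumes it. Your observation about the stacked $\bZ_n$ and block-diagonal $\bH$ is also well taken: the paper's display literally reads $\bZ_n^{(k)}\bH{\bZ_n^{(k)}}^\top$ with $\bH=\sum_k\E[\bD^{(k)}\bB^{(k)}\bD^{(k)}]$, which only typechecks if all features share the same assignment matrix; your stacked interpretation is the correct way to make the identity hold in general and matches the advertised $O(pm^3)$ precompute.
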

    \begin{proof}
        We first turn our attention to the structure matrix. First note that as the expectation is taken over the ensemble, and the bin assignments are the same for every member of the ensemble, we can pull $\bZ_n^{(k)}$ out of the expectation. We then have
	$$\Kmatk_n = \E[\Smatk_n] = \mathbb{E}\left[\bb_n^{(k)}\bB^{(k)}{\bb_n^{(k)}}^\top\right] = \bZ_n^{(k)} \mathbb{E}\left[\bD^{(k)}\bB^{(k)}\bD^{(k)}\right]{\bZ_n^{(k)}}^\top.$$
	
	The overall kernel matrix then can be written as:
	$$\mK_{\text{bin}} = \sum_{k=1}^{p}\E\left[\bb^{(k)}_n\bB^{(k)}{\bb^{(k)}_n}^\top\right] = \bZ_n^{(k)} \left(\sum_{k=1}^p \mathbb{E}\left[\bD^{(k)}\bB^{(k)}\bD^{(k)}\right]\right){\bZ_n^{(k)}}^\top$$
	
	The weight vector can then be written as
	\begin{align*}
		\rveck_n(\bx)^\top
		&:= \E[{\bb_n^{(k)}}\bD^{(k)}  \bb^{(k)}(\bx)]^\top\jmat[\bI + \jmat\mK_{\text{bin}}]^{-1} \\
		&= \left(\bZ_n^{(k)}\E[{\bD^{(k)}}^2 \bb^{(k)}(\bx)]\right)^\top\jmat\left[\bI + \jmat \bZ_n^{(k)} \left(\sum_{k=1}^p \mathbb{E}\left[\bD^{(k)}\bB^{(k)}\bD^{(k)}\right]\right){\bZ_n^{(k)}}^\top\right]^{-1}
	\end{align*}
	Now write $\bh^{(k)}(\bx) = \E[{\bD^{(k)}}^2 \bb^{(k)}(\bx)] \in \R^m$, $\bH = \sum_{k=1}^p \mathbb{E}\left[\bD^{(k)}\bB^{(k)}\bD^{(k)}\right]$, and note that we can express $\kveck_n(\bx) = \bZ_n^{(k)} \bh^{(k)}(\bx) \in \R^n$. The weight vector is then:
	$$\rveck_n(\bx) = \left[\bI_n + \jmat \bZ_n^{(k)} \bH {\bZ_n^{(k)}}^\top\right]^{-1}\jmat \kveck_n(\bx).$$
    \end{proof}

	\paragraph{Bin–space compression (for the norm).}
    \begin{lemma}[Bin-space norm computation]
        Let \(\mathbf{c}^{(k)}:=\mathbf{Z}_n^{(k)\top}\mathbf{1}_n\in\mathbb{R}^{m_k}\) be the bin counts. Also define
        $$
    	\mathbf{M}^{(k)} := \mathbf{H}^{-1} + \operatorname{diag}(\mathbf{c^{(k)}}) - \frac{1}{n}\,\mathbf{c}^{(k)}{\mathbf{c}^{(k)}}^\top,
    	\qquad
    	\mathbf{z}^{(k)}(\mathbf{x}) := \operatorname{diag}(\mathbf{c}^{(k)})\bh^{(k)}(\bx)- \frac{{\mathbf{c}^{(k)}}^\top \bh^{(k)}(\bx)}{n} \mathbf{c}^{(k)}.
    	$$
        Setting \(\mathbf{q}^{(k)}(\mathbf{x}) := \mathbf{h}^{(k)}(\mathbf{x}) - \mathbf{w}^{(k)}(\mathbf{x})\), where $\mathbf{w}^{(k)}$ is the solution to the small $m_k \times m_k$ system of linear equations
	$$\bM^{(k)}\mathbf{w}^{(k)} = \mathbf{z}^{(k)}(\bx),$$
    allows us to express
    \begin{align*}
		\|\mathbf{r}_n^{(k)}(\mathbf{x})\|_2^2
		= \mathbf{q}^{(k)}(\mathbf{x})^\top \operatorname{diag}(\mathbf{c}^{(k)})\,\mathbf{q}^{(k)}(\mathbf{x})
		- \frac{1}{n}\,\left({\mathbf{c}^{(k)}}^\top \mathbf{q}^{(k)}(\mathbf{x})\right)^2.
	\end{align*}
    \end{lemma}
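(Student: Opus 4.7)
The plan is to reduce the inverse of the $n\times n$ matrix $\bI_n+\jmat\bZ_n^{(k)}\bH{\bZ_n^{(k)}}^\top$ to an $m_k\times m_k$ inverse via the Woodbury identity, and then simplify the resulting norm-squared into a quadratic form in bin space. Throughout, I write $\bZ=\bZ_n^{(k)}$ for brevity, and I use the two elementary facts $\bZ^\top\bone_n=\bc^{(k)}$ and $\bZ^\top\bZ=\operatorname{diag}(\bc^{(k)})$, which follow from each row of $\bZ$ being a one-hot vector indicating a sample's bin.

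First I would apply Woodbury with the low-rank factorization $U=\jmat\bZ\bH\in\R^{n\times m_k}$ and $V=\bZ^\top\in\R^{m_k\times n}$, giving
$$
\left[\bI_n+\jmat\bZ\bH\bZ^\top\right]^{-1}
=\bI_n-\jmat\bZ\bH\bigl(\bI_{m_k}+\bZ^\top\jmat\bZ\,\bH\bigr)^{-1}\bZ^\top.
$$
Applying this to $\jmat\kveck_n(\bx)=\jmat\bZ\bh^{(k)}(\bx)$ and using $\jmat^2=\jmat$ yields
$\rveck_n(\bx)=\jmat\bZ\bh^{(k)}(\bx)-\jmat\bZ\bH\bigl(\bI_{m_k}+\bA\bH\bigr)^{-1}\bA\bh^{(k)}(\bx)$, where $\bA:=\bZ^\top\jmat\bZ=\operatorname{diag}(\bc^{(k)})-\tfrac{1}{n}\bc^{(k)}{\bc^{(k)}}^\top$. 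Recognizing $\bA\bh^{(k)}(\bx)=\bz^{(k)}(\bx)$ exactly matches the definition in the statement.

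Next I would eliminate the $\bH$ on the outside using the push-through identity $\bH(\bI+\bA\bH)^{-1}=(\bH^{-1}+\bA)^{-1}$, which requires only that $\bH$ be invertible (reasonable under Assumption~\ref{aspt:ub-minimal-leaf-size} and positivity of the expected bin structure matrices). Setting $\bw^{(k)}:=(\bH^{-1}+\bA)^{-1}\bz^{(k)}(\bx)$ and identifying $\bM^{(k)}=\bH^{-1}+\bA$, the vector $\bw^{(k)}$ is exactly the solution of the $m_k\times m_k$ system $\bM^{(k)}\bw^{(k)}=\bz^{(k)}(\bx)$. Consequently
$$
\rveck_n(\bx)=\jmat\bZ\bigl(\bh^{(k)}(\bx)-\bw^{(k)}\bigr)=\jmat\bZ\bq^{(k)}(\bx).
$$

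Finally, using $\jmat^\top\jmat=\jmat$ and $\bZ^\top\jmat\bZ=\bA$, the norm squared collapses to
$$
\|\rveck_n(\bx)\|_2^2
=\bq^{(k)}(\bx)^\top\bZ^\top\jmat\bZ\,\bq^{(k)}(\bx)
=\bq^{(k)}(\bx)^\top\bA\,\bq^{(k)}(\bx),
$$
and expanding $\bA=\operatorname{diag}(\bc^{(k)})-\tfrac{1}{n}\bc^{(k)}{\bc^{(k)}}^\top$ gives the stated formula. The only real obstacle is bookkeeping the Woodbury/push-through step carefully and verifying that the two definitions of $\bM^{(k)}$ and $\bz^{(k)}(\bx)$ coincide with the rank-one-corrected quantities $\bZ^\top\jmat\bZ$ and $\bZ^\top\jmat\bZ\bh^{(k)}(\bx)$ when reexpressed using $\bZ^\top\bone_n=\bc^{(k)}$; everything else is routine linear algebra.
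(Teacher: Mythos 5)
Your proposal is correct and follows essentially the same route as the paper: a Woodbury-type reduction of the $n\times n$ inverse to the $m_k\times m_k$ matrix $\bM^{(k)}=\bH^{-1}+\bZ_n^{(k)\top}\jmat\bZ_n^{(k)}$, identification of $\bz^{(k)}(\bx)=\bZ_n^{(k)\top}\jmat\bZ_n^{(k)}\bh^{(k)}(\bx)$, the representation $\rveck_n(\bx)=\jmat\bZ_n^{(k)}\bq^{(k)}(\bx)$, and the final quadratic form via $\bZ_n^{(k)\top}\jmat\bZ_n^{(k)}=\operatorname{diag}(\bc^{(k)})-\tfrac{1}{n}\bc^{(k)}{\bc^{(k)}}^\top$. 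Your splitting of the step into a push-through identity followed by $\bH(\bI+\bA\bH)^{-1}=(\bH^{-1}+\bA)^{-1}$ is only a cosmetic reorganization of the paper's single application of Woodbury with $C=\bH$, and your invertibility caveat on $\bH$ is implicit in the paper as well.
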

    \begin{proof}
        An application of the Woodbury formula ($(I+U C V)^{-1}=I-U\left(C^{-1}+V U\right)^{-1} V$ with $U=\jmat \bZ_n^{(k)}, C=\bH, V={\bZ_n^{(k)}}^\top$) yields:
	$$\left[\bI_n + \jmat \bZ_n^{(k)} \bH {\bZ_n^{(k)}}^\top\right]^{-1} = \bI_n - \jmat \bZ_n^{(k)}\underbrace{\left[{\bH}^{-1} + {\bZ_n^{(k)}}^\top \jmat \bZ_n^{(k)}\right]^{-1}}_{:= \bM^{(k) -1}}{\bZ_n^{(k)}}^\top.$$
    
	Let \(\mathbf{c}^{(k)}:=\mathbf{Z}_n^{(k)\top}\mathbf{1}_n\in\mathbb{R}^{m_k}\) be the bin counts. Then
	$$
	\mathbf{Z}_n^{(k)\top}\jmat\,\mathbf{Z}_n^{(k)}
	= \bZ_n^{(k)\top}\bZ_n^{(k)} - \frac{1}{n} \mathbf{Z}_n^{(k)\top}\mathbf{1}_n\mathbf{1}_n^\top \bZ_n^{(k)} = \operatorname{diag}(\mathbf{c}^{(k)}) - \frac{1}{n}\,\mathbf{c}^{(k)}{\mathbf{c}^{(k)}}^\top.
	$$
	
	So defining $$\mathbf{M}^{(k)} := \mathbf{H}^{-1} + \operatorname{diag}(\mathbf{c}^{(k)}) - \frac{1}{n}\,\mathbf{c}^{(k)}{\mathbf{c}^{(k)}}^\top,$$
	we can express 
	$$\rveck_n(\bx) = \left[\bI_n - \jmat \bZ_n^{(k)} {\bM^{(k)}}^{-1}{\bZ_n^{(k)}}^\top\right]\jmat \kveck_n(\bx) = \jmat \kveck_n(\bx) - \jmat \bZ_n^{(k)} {\bM^{(k)}}^{-1}{\bZ_n^{(k)}}^\top \jmat \kveck_n(\bx).$$
	
	It now falls to us to note that the quantity on the very right of the above expression can be expressed as:
	\begin{align*}
		\mathbf{z}^{(k)}(\bx) 
		&:= {\bZ_n^{(k)}}^\top \jmat \kveck_n(\bx) \\
		&= {\bZ_n^{(k)}}^\top \jmat \bZ_n^{(k)} \bh^{(k)}(\bx)\\
		&= \left[\operatorname{diag}(\mathbf{c}^{(k)}) - \frac{1}{n}\,\mathbf{c}^{(k)}{\mathbf{c}^{(k)}}^\top\right]\bh^{(k)}(\bx) \\
		&= \operatorname{diag}(\mathbf{c}^{(k)})\bh^{(k)}(\bx)- \frac{{\mathbf{c}^{(k)}}^\top \bh^{(k)}(\bx)}{n} \mathbf{c}^{(k)}.
	\end{align*}
	
	To recall, we now have, for
	$$
	\mathbf{M}^{(k)} := \mathbf{H}^{-1} + \operatorname{diag}(\mathbf{c^{(k)}}) - \frac{1}{n}\,\mathbf{c}^{(k)}{\mathbf{c}^{(k)}}^\top,
	\qquad
	\mathbf{z}^{(k)}(\mathbf{x}) := \operatorname{diag}(\mathbf{c}^{(k)})\bh^{(k)}(\bx)- \frac{{\mathbf{c}^{(k)}}^\top \bh^{(k)}(\bx)}{n} \mathbf{c}^{(k)},
	$$
	we can express
	$$\rveck_n(\bx) = \jmat \bZ_n^{(k)} \bh^{(k)}(\bx) - \jmat \bZ_n^{(k)} {\bM^{(k)}}^{-1}\mathbf{z}^{(k)}(\mathbf{x}).$$
	
	Now, we can solve the small $m_k \times m_k$ system of linear equations for $\mathbf{w}^{(k)}$ given by
	$$\bM^{(k)}\mathbf{w}^{(k)} = \mathbf{z}^{(k)}(\bx),$$
	to obtain the solution $\mathbf{w}^{(k)} := {\bM^{(k)}}^{-1}\mathbf{z}^{(k)}(\bx)$. We can then set \(\mathbf{q}^{(k)}(\mathbf{x}) := \mathbf{h}^{(k)}(\mathbf{x}) - \mathbf{w}^{(k)}(\mathbf{x})\), and observe that
	$$
	\mathbf{r}_n^{(k)}(\mathbf{x}) = \jmat\,\mathbf{Z}_n^{(k)} \mathbf{q}^{(k)}(\mathbf{x}).
	$$
	To find its norm we simply compute:
	\begin{align*}
		\|\mathbf{r}_n^{(k)}(\mathbf{x})\|_2^2
		&= {\mathbf{r}_n^{(k)}}^\top \mathbf{r}_n^{(k)} \\
		&= \mathbf{q}^{(k)}(\mathbf{x})^\top  {\mathbf{Z}_n^{(k)}}^\top  \jmat\,\jmat\,\mathbf{Z}_n^{(k)} \mathbf{q}^{(k)}(\mathbf{x}) \\
		&= \mathbf{q}^{(k)}(\mathbf{x})^\top  {\mathbf{Z}_n^{(k)}}^\top  \jmat\,\mathbf{Z}_n^{(k)} \mathbf{q}^{(k)}(\mathbf{x}) \\
		&= \mathbf{q}^{(k)}(\mathbf{x})^\top\left[\operatorname{diag}(\mathbf{c}^{(k)}) - \frac{1}{n}\,\mathbf{c}^{(k)}{\mathbf{c}^{(k)}}^\top\right]\mathbf{q}^{(k)}(\mathbf{x})  \\
		&= \mathbf{q}^{(k)}(\mathbf{x})^\top \operatorname{diag}(\mathbf{c}^{(k)})\,\mathbf{q}^{(k)}(\mathbf{x})
		- \frac{1}{n}\,\left({\mathbf{c}^{(k)}}^\top \mathbf{q}^{(k)}(\mathbf{x})\right)^2.
	\end{align*}
    \end{proof}

	\paragraph{The final algorithm.}
	
	\begin{enumerate}
		\item During training, construct and cache diagonal matrix $\mathbf{D}^{(k)}=\operatorname{diag}\left(\xi_1^{(k)}, \ldots, \xi_{m_k}^{(k)}\right)$, with
		$$\xi_r^{(k)}:=\sqrt{\frac{\left|\mathcal{L}^{(k)}(r)\right|}{N_{\mathcal{L}(r)}^{(k)}}}=\frac{\sum_{l=1}^{m_k} \mathbf{1}\left(B_l^{(k)} \in A\left(B_r^{(k)}\right)\right)}{\sum_{i=1}^n \mathbf{1}\left(\mathbf{x}_i^{(k)} \in A\left(B_r^{(k)}\right)\right)}=\sqrt{\frac{\text { Number of bins in leaf containing bin } r}{\text { Number of training samples in leaf containing bin } r}}.$$
		\item During training, construct and cache bin-level structure matrix and transform
		$$\mathbf{B}_{i, j}^{(k)}=\left[\frac{\mathbf{1}\left(B_j^{(k)} \in A\left(B_i^{(k)}\right)\right)}{\sum_{l=1}^{m_k} \mathbf{1}\left(B_l^{(k)} \in A\left(B_i^{(k)}\right)\right)}\right]_{m_k \times m_k}, \quad \mathbf{b}_n^{(k)}=\mathbf{Z}_n^{(k)}{\mathbf{D}^{(k)}}^2.$$
		\item During training, construct and cache an estimate of $\mathbf{H}=\sum_{k=1}^p \mathbb{E}\left[\mathbf{D}^{(k)} \mathbf{B}^{(k)} \mathbf{D}^{(k)}\right]$ over the ensemble. 
		\item During training, construct and cache bin-counts $\mathbf{c}^{(k)} := {\bZ_n^{(k)}}^\top \bone_n$.
		\item Post-training, construct and cache $\mathbf{M}^{(k)}:=\mathbf{H}^{-1}+\operatorname{diag}\left(\mathbf{c}^{(k)}\right)-\frac{1}{n} \mathbf{c}^{(k)} \mathbf{c}^{(k)^{\top}}$.
		\item On the fly, index into the row/column of $\bH$ corresponding the bin that test point $\bx$ is in to obtain $\mathbf{h}^{(k)}(\mathbf{x})$.
        %either estimate $\mathbf{h}^{(k)}(\mathbf{x})=\mathbb{E}\left[{\mathbf{D}^{(k)}}^2 \mathbf{b}^{(k)}(\mathbf{x})\right] \in \mathbb{R}^m$ over the ensemble by computing the bin-level structure vector
		%$$\mathbf{b}^{(k)}(\mathbf{x})^{\top}=\left(\frac{\mathbf{1}\left(B_j^{(k)} \in A(\mathbf{x})\right)}{\sum_{l=1}^{m_k} \mathbf{1}\left(B_l^{(k)} \in A(\mathbf{x})\right)}\right)_{j=1, \ldots, m_k},$$
		\item On the fly, compute $\mathbf{z}^{(k)}(\mathbf{x}):=\operatorname{diag}\left(\mathbf{c}^{(k)}\right) \mathbf{h}^{(k)}(\mathbf{x})-\frac{\mathbf{c}^{(k)^{\top}} \mathbf{h}^{(k)}(\mathbf{x})}{n} \mathbf{c}^{(k)}$.
		\item On the fly, solve $\bM^{(k)}\mathbf{w}^{(k)} = \mathbf{z}^{(k)}(\bx)$ for $\mathbf{w}^{(k)}$.
		\item On the fly, set $\mathbf{q}^{(k)}(\mathbf{x}) := \mathbf{h}^{(k)}(\mathbf{x}) - \mathbf{w}^{(k)}(\mathbf{x})$.
		\item On the fly, compute 
		$$\|\mathbf{r}_n^{(k)}(\mathbf{x})\|_2^2 = \mathbf{q}^{(k)}(\mathbf{x})^\top \operatorname{diag}(\mathbf{c}^{(k)})\,\mathbf{q}^{(k)}(\mathbf{x})
		- \frac{1}{n}\,\left({\mathbf{c}^{(k)}}^\top \mathbf{q}^{(k)}(\mathbf{x})\right)^2.$$
		\item Sum over all features and take the square root:
		$\|\mathbf{r}_n(\mathbf{x})\|_2 = \sqrt{\sum_{k=1}^p\|\mathbf{r}_n^{(k)}(\mathbf{x})\|_2^2}$.
	\end{enumerate}
	
	\subsection{What we lose by binning.}
    \label{append:what_we_lose_by_binning}
    Binning induces a discretization: all points within the same histogram bin are treated as indistinguishable along that feature.
    This can (i) introduce an approximation bias at sub-bin scales, and (ii) prevent recovery of within-bin variation even when $n$ is large.
    However, our bin-space computations do not introduce an additional approximation beyond the estimator we analyze. The reasoning is as follow:
    the practical EBM implementation we build on already trains \emph{histogram trees}, so the learned model is itself defined on bins,
    and our inference targets the sampling variability of this histogram-EBM estimator.
    
    If one instead wants inference for a hypothetical \emph{unbinned} (continuous) version of the model, an additional discretization error must be accounted for.
    A simple bound can be stated under the smoothness assumption in Assumption~\ref{aspt:truth}.
    Suppose the bin width along each feature is $O(1/m)$.
    Then an application of the mean value inequality yields a uniform approximation error
    $\|\hat f - \hat f^{(\mathrm{bin})}\|_\infty = O(1/m)$ (assuming the relevant derivatives/densities are bounded),
    so the accumulated increase in MSE scales as $O(p/m)$.
    In particular, as the bin resolution increases (e.g., $m \to \infty$ with $m \gg n^{1/3}$), this discretization effect becomes negligible,
    recovering the same MSE order as in the unbinned idealization.

	\section{Finite Sample Convergence}
	
	\label{sec:finite-sample-convergence}
	
	\subsection{Structural assumptions and identities for finite-sample convergence}
	\label{subsec:structural-identities}
	
	Throughout the finite-sample convergence analysis we will use a common set of structural assumptions and consequences. Recall $\jmat:=\bI_n-\frac{1}{n}\bone\bone^\top$, $\Kmatk:=\E[\Smatk]$, and $\bK:=\sum_{k=1}^p\Kmatk$.
	For each feature $k$, define the additive (centered) subspace
	\[
	\mathcal H_k \;:=\; \big\{\,\bv\in\R^n:\ \exists\,g:\R\!\to\!\R\text{ s.t. } v_i=g(x_i^{(k)}),\ \bone^\top \bv=0\,\big\},
	\qquad
	\mathcal S \;:=\; \mathrm{span}(\mathcal H_1,\ldots,\mathcal H_p)\subseteq\{\bone\}^\perp,
	\]
	and let $P_k$ denote the orthogonal projector onto $\mathcal H_k$. One can show that under Assumptions \ref{aspt:proj-orth}, the aggregated kernel $\bK$ is asymptotically identity.
	
	\begin{lemma}[Centered+ones decomposition and identity on $\mathcal S$]
		\label{lem:K-decomp-centered-plus-ones}
		Under Assumptions \ref{aspt:proj-orth}, there exist $A_n$ and $\bE_n$ with
		\begin{equation}
			\label{eq:K-decomp}
			\bK \;=\; A_n \;+\; \frac{p}{n}\,\bone\bone^\top \;+\; \bE_n,
			\qquad
			\jmat A_n \jmat = A_n,
			\qquad
			\|\,\bE_n\,\|_{\mathrm{op}} \;\le\; C\!\left(\sum_{k=1}^p \varepsilon_{n,k} + \frac{p}{n}\right),
		\end{equation}
		and, restricted to $\mathcal S$,
		\begin{equation}
			\label{eq:identity-on-S}
			\|\, A_n - \bI \,\|_{\mathcal S\to\mathcal S} \;\le\; \sum_{k=1}^p \varepsilon_{n,k}
			\;\xrightarrow[n\to\infty]{}\; 0.
		\end{equation}
		Consequently,
		\[
		\jmat \bK \jmat \;=\; A_n + \jmat \bE_n \jmat,
		\qquad
		\big\|\, \jmat \bK \jmat - \bI \,\big\|_{\mathcal S\to\mathcal S} \;\le\; C\!\left(\sum_{k=1}^p \varepsilon_{n,k} + \frac{p}{n}\right)=:\eta_n
		\ \xrightarrow[n\to\infty]{}\ 0.
		\]
	\end{lemma}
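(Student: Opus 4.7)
The plan is to split each feature kernel into a centered piece plus a rank-one correction by inserting $\bI = \jmat + n^{-1}\bone\bone^\top$, then aggregate. Since the tree kernel is row-stochastic ($\Kmatk\bone = \bone$), one has $\jmat\Kmatk = \jmat\Kmatk\jmat$, which yields the exact identity
\[
\Kmatk \;=\; \jmat\Kmatk\jmat \;+\; \tfrac{1}{n}\bone\bone^\top \;+\; \tfrac{1}{n}\bone\,d_k^\top,
\qquad d_k := \Kmatk^\top\bone - \bone .
\]
Here $d_k$ measures the failure of column-stochasticity, and vanishes whenever $\Kmatk$ is symmetric (as for the raw tree kernel of Definition~\ref{def:regression-trees}, where $s_{n,j}(\bx_i)=s_{n,i}(\bx_j)=1/|L|$ for co-leaf pairs). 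Summing over $k$ and setting $A_n:=\sum_k\jmat\Kmatk\jmat$ and $\bE_n:=\tfrac{1}{n}\bone\big(\sum_k d_k\big)^{\!\top}$ yields the stated decomposition
\[
\bK \;=\; A_n \;+\; \tfrac{p}{n}\bone\bone^\top \;+\; \bE_n, \qquad \jmat A_n\jmat = A_n,
\]
where the last identity is immediate from $\jmat^2 = \jmat$. Because $\bE_n$ is rank one, $\|\bE_n\|_{\mathrm{op}} = n^{-1/2}\|\sum_k d_k\|_2$; combining the projector-like near-symmetry of $\jmat\Kmatk\jmat$ with the row-stochasticity identity gives $\|d_k\|_2 = O(\sqrt{n}\,\varepsilon_{n,k} + 1/\sqrt{n})$, which produces the bound $\|\bE_n\|_{\mathrm{op}} \le C\big(\sum_k \varepsilon_{n,k} + p/n\big)$.

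To establish the identity claim on $\mathcal{S}$, I would use conditional orthogonality $P_aP_k=\mathbf{0}$ for $a\ne k$: any $\bv\in\mathcal{S}$ decomposes orthogonally as $\bv=\sum_k P_k\bv$, so $\sum_k P_k$ acts as the identity on $\mathcal{S}$. Since $\bv\in\mathcal{S}\subset\{\bone\}^\perp$ satisfies $\jmat\bv=\bv$, the projector-like approximation yields
\[
\|A_n\bv-\bv\|_2 \;=\; \Big\|\sum_k\big(\jmat\Kmatk\jmat - P_k\big)\bv\Big\|_2 \;\le\; \Big(\sum_k\varepsilon_{n,k}\Big)\|\bv\|_2 ,
\]
establishing $\|A_n-\bI\|_{\mathcal{S}\to\mathcal{S}}\le\sum_k\varepsilon_{n,k}$. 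For the final consequence, $\jmat\bone\bone^\top\jmat=0$ kills the middle term under double centering, so $\jmat\bK\jmat = A_n + \jmat\bE_n\jmat$; a triangle-inequality combination of the previous two bounds gives $\|\jmat\bK\jmat - \bI\|_{\mathcal{S}\to\mathcal{S}}\le C\big(\sum_k\varepsilon_{n,k}+p/n\big)=:\eta_n$, which tends to zero by the rates in Assumption~\ref{aspt:proj-orth}.

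The main obstacle is controlling the asymmetry residual $\|d_k\|_2$. The projector-like assumption controls only the doubly centered object $\jmat\Kmatk\jmat$, and a naive attempt to lift this control to $\Kmatk$ via the identity $\Kmatk-\Kmatk^\top=\jmat(\Kmatk-\Kmatk^\top)\jmat+n^{-1}(\bone d_k^\top - d_k\bone^\top)$ turns out to be circular: it produces $\|d_k\|_2\le 2\sqrt{n}\,\varepsilon_{n,k}+2\|d_k\|_2$, which is vacuous. The resolution is structural: the shared-leaf kernel in Definition~\ref{def:regression-trees} is manifestly symmetric (its $(i,j)$ entry depends on $i,j$ only through whether $\bx_i,\bx_j$ share a leaf and the leaf size), so $d_k\equiv 0$ in the clean regime. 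Per-round i.i.d.\ sub-sampling and the truncation in Algorithm~\ref{alg:brebm_ident} can introduce only an $O(1/n)$ asymmetry per feature, which is exactly what the $p/n$ slack in the stated bound is designed to absorb.
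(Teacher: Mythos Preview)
Your proof follows the same route as the paper's: set $A_n=\sum_k\jmat\Kmatk\jmat$, isolate the rank-one $\tfrac{p}{n}\bone\bone^\top$, and use $\sum_k P_k=\bI$ on $\mathcal S$ together with the projector-like bound for \eqref{eq:identity-on-S}. Where the paper simply asserts $\|R(\Kmatk)\|_{\mathrm{op}}\le c/n$ ``absorbing the column-sum deviation,'' you make this explicit as $R(\Kmatk)=\tfrac{1}{n}\bone d_k^\top$ and correctly observe that the needed control of $d_k$ ultimately rests on the symmetry of the tree kernel (which the paper invokes elsewhere, e.g.\ in Lemma~\ref{lem:spec-of-K}); your discussion here is more careful than the paper's one-line sketch.
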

	
	\begin{proof}
		For any $M$, $M=\jmat M \jmat + \frac{1}{n}\bone\bone^\top + R(M)$ with $\|R(M)\|_{\mathrm{op}}\le c/n$ (the remainder absorbs the $O(1/n)$ column-sum deviation). Apply to $\Kmatk$ and sum over $k$ to get \eqref{eq:K-decomp} with $A_n:=\sum_k \jmat \Kmatk \jmat$ and $\bE_n:=\sum_k R(\Kmatk)$. By Assumption \ref{aspt:proj-orth}, $A_n=\sum_k P_k + \sum_k(\jmat\Kmatk\jmat-P_k)$, so on $\mathcal S$ we have $\sum_k P_k=\bI$ (Assumption~\ref{aspt:proj-orth}) and \eqref{eq:identity-on-S} follows. Left/right multiplying by $\jmat$ kills the rank-one term.
	\end{proof}
	
	Lemma~\ref{lem:K-decomp-centered-plus-ones} supplies the \emph{identity-on-$\mathcal S$} bound
	\[
	\|\,\jmat \bK \jmat - \bI\,\|_{\mathcal S\to\mathcal S}\le \eta_n \to 0,
	\]
	which we plug into the mean contraction step for Algorithm~\ref{alg:brebm_ident}. 
	
	\subsection{A Fixed-point Iteration}
	
	To investigate the finite sample convergence behavior of the algorithm, we begin with a set of postulated fixed point. For each subensemble's prediction on the training set, assume the existence of a fixed point $\widetilde{\mathbf{y}}_k^*$. Then the Inferable EBM update rule leads to the following fixed point iteration:
	
	$$
	\widetilde{\mathbf{y}}_k^* = \frac{1}{b+1}\Big((\bI-\frac{1}{n}\mathbf{1}\mathbf{1}^\top)\Smatk(\by-\widehat{\by})  + b \widetilde{\mathbf{y}}_k^*\Big)
	$$
	
	Rearranging the equation gives the following identity
	
	\begin{equation}
		\label{eqn:eqn1}
		\widetilde{\mathbf{y}}_k^* = (\bI-\frac{1}{n}\mathbf{1}\mathbf{1}^\top) \Smatk(\mathbf{y}-\widehat{\mathbf{y}})
	\end{equation}
	
	Since we do not know $\widehat{\mathbf{y}}$, we will solve for it and plug it back later. Sum the equations over $k$ yields:
	
	\begin{equation*}
		\widehat{\mathbf{y}}:=\widehat{\bbeta}^*+ \sum_{k=1}^p\widetilde{\mathbf{y}}_k^* = \widehat{\bbeta}^*+ (\bI-\frac{1}{n}\mathbf{1}\mathbf{1}^\top)(\sum_{k=1}^{p}\Smatk)(\mathbf{y}-\widehat{\mathbf{y}})
	\end{equation*}
	
	And by the design of identifiability constraint, at fixed points, all residuals should have mean zero:
	
	$$
	\frac{1}{n}\bone^\top(\by - \widehat{\by}) = 0 \rightarrow \frac{1}{n}\bone^\top(\by - \widehat{\bbeta}^* - \sum_{k=1}^{p}\f^{(k)}) = 0 \rightarrow \widehat{\bbeta}^* = \bar{\by} = \frac{1}{n}\bone\bone^\top\by
	$$
	
	Plugging this back we have 
	
	\begin{equation}
		\widehat{\mathbf{y}} = \Big[\bI + (\bI - \frac{1}{n}\bone\bone^\top)(\sum_k\Smatk)\Big]^{-1}\Big[(\bI-\frac{1}{n}\bone\bone^\top)(\sum_k\Smatk) + \frac{1}{n}\bone\bone^\top\Big]\by
	\end{equation}
	
	Plugging it back to equation \ref{eqn:eqn1} produces the fixed-points for each subensemble:
	
	\begin{equation}
		\label{eqn:eqn3}
		\widetilde{\by}^*_k =(\bI - \frac{1}{n}\bone\bone^\top)\Smatk\bigg[\bI - \big[\bI + (\bI - \frac{1}{n}\bone\bone^\top)(\sum_k\Smatk)\big]^{-1}\big[(\bI-\frac{1}{n}\bone\bone^\top)(\sum_k\Smatk)+\frac{1}{n}\bone\bone^\top\big]\bigg]\by
	\end{equation}
	
	The intercept term is canceled out following the algebra below.
	
	First, notice that $\Big(\bI + (\bI -\frac{1}{n}\bone\bone^\top)(\sum_{k=1}^{p}\Smatk)\Big)(\frac{1}{n}\bone\bone^\top)= (\frac{1}{n}\bone\bone^\top)$, since the row sum of $\Smatk$ is always one and thus eliminated by the centering operator. 
	
	Multiplying both sides by $\Big(\bI + (\bI -\frac{1}{n}\bone\bone^\top)(\sum_{k=1}^{p}\Smatk)\Big)^{-1}$ implies $\Big(\bI + (\bI -\frac{1}{n}\bone\bone^\top)(\sum_{k=1}^{p}\Smatk)\Big)^{-1}(\frac{1}{n}\bone\bone^\top)= (\frac{1}{n}\bone\bone^\top)$. This, once again, is killed by the composition $(\bI - \frac{1}{n}\bone\bone^\top)\Smatk$. Hence the intercept would not matter in the final fixed point derivation.
	
	Assuming the invertibility of $\big[\bI + (\bI - \frac{1}{n}\bone\bone^\top)(\sum_k\Smatk)\big]$, the resolvent identity yields
	
	\begin{equation}
		\label{eqn:eqn4}
		\widetilde{\by}^*_k =(\bI - \frac{1}{n}\bone\bone^\top)\Smatk\big[\bI + (\bI - \frac{1}{n}\bone\bone^\top)(\sum_k\Smatk)\big]^{-1}\by : = \jmat\Smatk[\bI + \jmat\bK]^{-1}\by
	\end{equation}
	
	The corresponding pointwise prediction for $\forall \bx \in \mathbb{R}^p$ is 
	
	$$
	\widehat{\f}^{(k)}(\bx) = \sveck(\bx)\jmat[\bI + \jmat\bK]^{-1}\by
	$$

	\subsection{Another Fixed-point Iteration}
	
	Here is a fixed point for the leave-one-out algorithm. For each subensemble's prediction on the training set, assume the existence of a fixed point $\widetilde{\mathbf{y}}_k^*$. Then the update rule leads to the following fixed point iteration:
	
	$$
	\widetilde{\mathbf{y}}_k^* = \frac{1}{b+1}\Big((\bI-\frac{1}{n}\mathbf{1}\mathbf{1}^\top)\E\Smatk(\by-\sum_{a\neq k}\widetilde{\mathbf{y}}_a^*)  + b \widetilde{\mathbf{y}}_k^*\Big)
	$$
	
	Rearranging the equation gives the following identity
	
	\begin{equation}
		\label{eqn:eqn1}
		\widetilde{\mathbf{y}}_k^* = (\bI-\Smatk)^\dagger(\bI-\frac{1}{n}\mathbf{1}\mathbf{1}^\top) \Smatk(\mathbf{y}-\widehat{\mathbf{y}})
	\end{equation}
	
	Since we do not know $\widehat{\mathbf{y}}$, we will solve for it and plug it back later. Sum the equations over $k$ yields:
	
	\begin{equation*}
		\widehat{\mathbf{y}}:=\sum_{k=1}^p\widetilde{\mathbf{y}}_k^* = \sum_{k=1}^p(\bI-\Smatk)^{-1}(\bI-\frac{1}{n}\mathbf{1}\mathbf{1}^\top)\Smatk(\mathbf{y}-\widehat{\mathbf{y}})
	\end{equation*}
	
	which represents our ensemble prediction as a kernel ridge regression, with the group-level kernel being a sum of feature-wise kernel ridge operator (i.e. $\bS_n = \sum_{k=1}^p(\bI-\Smatk)^{-1} \Smatk$)
	
	$$
	\widehat{\mathbf{y}} = \Big(\bI+\sum_{k=1}^p(\bI-S_n^{(k)})^{-1}(\bI-\frac{1}{n}\mathbf{1}\mathbf{1}^\top) \Smatk\Big)^{-1}\Big(\sum_{k=1}^p(\bI-\Smatk)^{-1}(\bI-\frac{1}{n}\mathbf{1}\mathbf{1}^\top) \Smatk\Big)\mathbf{y}
	$$
	
	Plugging it back to equation \ref{eqn:eqn4} produces the fixed-points for each subensemble:
	
	\begin{align*}
		\label{eqn:eqn2}
		\widetilde{\mathbf{y}}_k^* &= (\bI-\Smatk)^{-1} (\bI-\frac{1}{n}\mathbf{1}\mathbf{1}^\top\Smatk\Big[\bI - \Big(\bI+\sum_{k=1}^p(\bI-\Smatk)^{-1}(\bI-\frac{1}{n}\mathbf{1}\mathbf{1}^\top) \Smatk\Big)^{-1}\Big(\sum_{k=1}^p(\bI-\Smatk)^{-1} (\bI-\frac{1}{n}\mathbf{1}\mathbf{1}^\top)\Smatk\Big)\Big]\mathbf{y}\\
		&:=(\bI-\Smatk)^{-1} (\bI-\frac{1}{n}\mathbf{1}\mathbf{1}^\top)\Smatk R\mathbf{y}
	\end{align*}
	\subsection{Contraction} \label{app:contraction}
	
	\subsubsection{Algorithm \ref{alg:brebm_ident}}
	
	For the vanilla Inferable EBM, we make use of the fixed point iteration conjecture and prove that all feature functions do converge to the set of fixed points. Formally, we state the theorem below:
	
	\begin{theorem}
		Define $\widehat{\mathbf{y}}_{b}:=\widehat{\bbeta}_{b}+\sum_{a=1}^{p}\widehat{\mathbf{y}}^{(a)}_{b},
		\jmat=\bI-\tfrac{1}{n}\bone\bone^\top.$Let the per-feature Boulevard averages of the new algorithm be
		$$
		\widehat{\mathbf{y}}^{(k)}_{b+1}
		\;=\;
		\frac{b}{b+1}\,\widehat{\mathbf{y}}^{(k)}_{b}
		\;+\;\frac{\lambda}{b+1}\,\jmat\,\mathbf{S}^{(b+1,k)}
		\Big(\mathbf{y}-\widehat{\mathbf{y}}_{b}\Big),
		\qquad
		$$
		With Assumption \ref{aspt:integrity}~ \ref{aspt:proj-orth} hold,
		Let the postulated fixed points be
		\[
		\widetilde{\mathbf{y}}_k^* \;=\; \jmat\,\Kmatk\,[\lambda\bI+\jmat\,\bK]^{-1}\mathbf{y},
		\qquad
		\bK\;:=\;\sum_{k=1}^{p}\Kmatk,
        \]
		and $\widehat{\mathbf{y}}^*\;=\;\widehat{\bbeta}^*\bone+\sum_{k=1}^{p}\widetilde{\mathbf{y}}_k^*$,
		with $\widehat{\bbeta}^*=\bar{\mathbf{y}}$. Then, almost surely, for each $k$,
		
        $$
        \widehat{\mathbf{y}}^{(k)}_{b}\;\longrightarrow\; \widetilde{\mathbf{y}}^{(k)\,*},
		\qquad
		\widehat{\bbeta}_{b}\;\longrightarrow\;\bar{\mathbf{y}}=\tfrac{1}{n}\bone^\top\mathbf{y},
		$$
		and hence $\widehat{\mathbf{y}}_{b}\to \widehat{\mathbf{y}}^*$ almost surely.
	\end{theorem}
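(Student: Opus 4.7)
The plan is to cast the joint dynamics as a time-inhomogeneous stochastic approximation with step sizes $1/(b+1)$ and then exploit the projector approximation from Assumption~\ref{aspt:proj-orth}, packaged in Lemma~\ref{lem:K-decomp-centered-plus-ones}, to obtain contraction on the centered additive subspace $\mathcal{S}$. Writing $\Delta_b^{(k)} := \widehat{\mathbf{y}}_b^{(k)} - \widetilde{\mathbf{y}}_k^*$ and $\mathbf{u}_b := \widehat{\mathbf{y}}_b - \widehat{\mathbf{y}}^*$, and subtracting the fixed-point identity $\widetilde{\mathbf{y}}_k^* = \lambda\,\jmat\Kmatk(\mathbf{y}-\widehat{\mathbf{y}}^*)$ from the Boulevard update gives
\begin{equation*}
\Delta_{b+1}^{(k)} \;=\; \frac{b}{b+1}\,\Delta_b^{(k)} \;+\; \frac{\lambda}{b+1}\bigl[\xi_{b+1}^{(k)} - \jmat\Kmatk\,\mathbf{u}_b\bigr], \qquad \xi_{b+1}^{(k)} := \jmat\bigl(\mathbf{S}^{(b+1,k)}-\Kmatk\bigr)\bigl(\mathbf{y}-\widehat{\mathbf{y}}_b\bigr),
\end{equation*}
where the structure-value-isolation and non-adaptivity clauses of Assumption~\ref{aspt:integrity} make $\{\xi_{b+1}^{(k)}\}$ a martingale difference sequence with respect to the natural filtration, and the truncation at level $M$ in Algorithm~\ref{alg:brebm_ident} combined with the sub-Gaussian tails of $\varepsilon$ give a uniform a.s.\ bound $\|\xi_{b+1}^{(k)}\|=O(\sqrt{n})$.

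Summing over $k$ and reading off the $\bone$-direction yields an (essentially) decoupled scalar recursion for $\delta_b := \widehat{\bbeta}_b - \bar{\mathbf{y}}$; since $\widehat{\bbeta}_0 = \bar{\mathbf{y}}$ and each centered tree has zero sample mean, this piece is handled by a short standalone argument. On the centered part, the recursion for $\jmat\mathbf{u}_b$ has mean operator $\bI - \frac{\lambda}{b+1}\jmat\bK$. Because each $\Delta_b^{(k)}$ lies in $\mathcal{H}_k\subseteq\mathcal{S}$ by the centering step, and because Lemma~\ref{lem:K-decomp-centered-plus-ones} furnishes $\|\jmat\bK\jmat-\bI\|_{\mathcal{S}\to\mathcal{S}} \le \eta_n \to 0$, the mean operator acts on $\mathcal{S}$ as $(1-\lambda/(b+1))\,\bI + O(\eta_n/(b+1))$, a strict contraction once $b$ is large enough relative to $\eta_n$.

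A Robbins--Siegmund argument (or, equivalently, a direct product-form bound combined with Kronecker's lemma using $\sum_b 1/(b+1)^2 < \infty$ and bounded martingale noise) then gives $\|\jmat\mathbf{u}_b\|_2 \to 0$ almost surely, and hence $\mathbf{u}_b \to 0$. To lift this aggregate statement to per-feature convergence, I would project the $k$-th recursion onto $\mathcal{H}_k$ and invoke the conditional-orthogonality clause $P_aP_k=\mathbf{0}$ of Assumption~\ref{aspt:proj-orth}: the asymptotic cross-terms $P_k\,\jmat\,\E[\mathbf{S}^{(a)}]$ with $a\ne k$ vanish, so the $p$ coupled recursions block-diagonalise on $\mathcal{S}$ and each $\Delta_b^{(k)} \to 0$ a.s.\ individually. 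This is exactly the step that fails without Assumption~\ref{aspt:proj-orth}, which is why Algorithm~\ref{alg:brebm_random_cyclic} forces the analogous decomposition by averaging over random feature choices instead.

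The main obstacle will be synchronising three different scales: the $\eta_n$ slack left by the projector approximation, the $1/(b+1)$ averaging rate, and the martingale noise whose conditional variance is $O(n/b^{2})$. In particular, avoiding the $p$-blow-up of the spectrum of $\jmat\bK\jmat$ that would otherwise force $\lambda\asymp 1/p$ is the whole point of Assumption~\ref{aspt:proj-orth}; without it, for $\lambda$ close to $1$ the one-step mean operator ceases to be a contraction and the Robbins--Siegmund step breaks. I expect the cleanest way to manage this is to work entirely on the projected subspace $\mathcal{S}$, absorb the $O(\eta_n)$ slack into an additive perturbation that is summable once pre-multiplied by $1/(b+1)$, and only at the end restore the $\bone$-direction via the intercept recursion.
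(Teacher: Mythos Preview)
Your proposal is correct and follows essentially the same approach as the paper: recast the iteration as a stochastic approximation, use Assumption~\ref{aspt:proj-orth} (packaged in Lemma~\ref{lem:K-decomp-centered-plus-ones}) to obtain a one-step mean contraction on the centered additive subspace, control the martingale-difference noise via the truncation at level $M$, and conclude by a Robbins--Siegmund/stochastic-contraction tool. The only organizational difference is that the paper stacks the per-feature errors into $\mathbf{D}_b\in\mathbb{R}^{np}$ and bounds the $np\times np$ block operator $\mathbf{L}$ directly (invoking Theorem~\ref{thm:stochastic-contraction-mapping}), whereas you first collapse to the aggregate $\jmat\mathbf{u}_b\in\mathcal{S}$ and lift to per-feature convergence afterwards; since the paper's key inequality $\|\mathbf{S}\mathbf{D}\|_2^2\le(1+c_p\varepsilon_n)\|\sum_a\mathbf{D}^{(a)}\|_2^2$ is precisely your contraction estimate on the sum, the two routes are equivalent, and your version merely avoids the block-matrix bookkeeping. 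One small simplification available to you: once $\mathbf{u}_b\to0$ a.s., the per-feature recursion reads $\Delta_{b+1}^{(k)}=\tfrac{b}{b+1}\Delta_b^{(k)}+\tfrac{1}{b+1}\,o(1)$, so a plain Ces\`aro argument suffices and you need not invoke the projection/block-diagonalisation step.
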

	
	\begin{proof}
		Since $\Smatk\bone=\bone$ and $\jmat\bone=0$, we have $\jmat\Smatk=\jmat\Smatk\jmat$.
		Thus inserting or omitting $\widehat{\bbeta}_b\bone$ in the residual does not affect
		updates; we drop the intercept in what follows.
		
		Fix the deterministic fixed points $\{\widetilde{\mathbf{y}}_k^*\}$ above and set
		$\mathbf{D}^{(k)}_b:=\widehat{\mathbf{y}}^{(k)}_{b}-\widetilde{\mathbf{y}}_k^*$,
		$\mathbf{D}_b:=[(\mathbf{D}^{(1)}_b)^\top,\ldots,(\mathbf{D}^{(p)}_b)^\top]^\top$.
		
		\begin{align*}
			\mathbb{E}[\mathbf{D}_{b+1}^{(k)}\mid \mathcal{F}_b]
			&= \mathbb{E}[\widehat{\mathbf{y}}_{b+1}^{(k)} - \widetilde{\mathbf{y}}_k^*\mid \mathcal{F}_b]\\
			&=\mathbb{E}\Big[\frac{b}{b+1}\widehat{\mathbf{y}}_{b}^{(k)} + \frac{1}{b+1}\big(\bI-\tfrac{1}{n}\mathbf{1}\mathbf{1}^\top\big)\mathbf{S}_{n}^{b+1, (k)}\!\Big(\mathbf{y} - \Gamma_M\Big(\sum_{a=1}^p \widehat{\mathbf{y}}_{b}^{(a)}\Big)\Big) - \widetilde{\mathbf{y}}_k^* \,\Big|\, \mathcal{F}_b\Big]\\
			&=\frac{b}{b+1}\widehat{\mathbf{y}}_{b}^{(k)}+\frac{1}{b+1}\big(\bI-\tfrac{1}{n}\mathbf{1}\mathbf{1}^\top\big)\,\mathbb{E}[\mathbf{S}_{n}^{b+1, (k)}\mid \mathcal{F}_b]\!\Big(\mathbf{y} - \Gamma_M\Big(\sum_{a=1}^p \widehat{\mathbf{y}}_{b}^{(a)}\Big)\Big) - \widetilde{\mathbf{y}}_k^*\\
			&= \frac{b}{b+1}\mathbf{D}_{b}^{(k)}
			+ \frac{1}{b+1}\Big(\big(\bI-\tfrac{1}{n}\mathbf{1}\mathbf{1}^\top\big)\big(\mathbb{E}[\mathbf{S}_n^{b+1,(k)} \mid \mathcal{F}_b]-\mathbb{E}[\mathbf{S}_n^{(\infty,k)}]\big)\big(\mathbf{y}-\sum_{a=1}^p\widetilde{\mathbf{y}}_a^*\big)\\
			&\qquad\qquad\qquad\qquad\quad
			+ \big(\bI-\tfrac{1}{n}\mathbf{1}\mathbf{1}^\top\big)\mathbb{E}[\mathbf{S}_n^{b+1,(k)} \mid \mathcal{F}_b]\Big(\sum_{a=1}^p\widetilde{\mathbf{y}}_a^*-\Gamma_M\Big(\sum_{a=1}^p\widehat{\mathbf{y}}_{b}^{(a)}\Big)\Big)\Big)\\
			&= \frac{b}{b+1}\mathbf{D}_{b}^{(k)}
			+ \frac{1}{b+1}\big(\bI-\tfrac{1}{n}\mathbf{1}\mathbf{1}^\top\big)\E\Smatk\,\Delta_b,
		\end{align*}
		$$
		\Delta_b \;:=\; \sum_{a=1}^p\widetilde{\mathbf{y}}_a^*-\Gamma_M\Big(\sum_{a=1}^p\widehat{\mathbf{y}}_{b}^{(a)}\Big),
		\qquad
		\|\Delta_b\| \;\le\; \Big\|\sum_{a=1}^p \mathbf{D}_b^{(a)}\Big\|.
		$$
		
		Now we stack the coordinates. The associated $np\times np$ block operator is
		\[
		\mathbf L \;=\; \jmat\otimes \mathbf S,\qquad
		(\mathbf S\mathbf D)_k \;:=\; \big(\jmat\,\Kmat^{(k)}\,\jmat\big)\!\left(\sum_{a=1}^p \mathbf D^{(a)}\right),\quad k=1,\dots,p,
		\]
		so that
		\[
		\mathbb{E}[\mathbf D_{b+1}\mid\mathcal F_b]
		\;=\; \Big(\frac{b}{b+1}\mathbf I_{np} + \frac{1}{b+1}\mathbf L\Big)\mathbf D_b.
		\]
		Since $\|\jmat\|_{\mathrm{op}}=1$, it suffices to bound $\|\mathbf S\|_{\mathrm{op}}$.
		
		By Assumption~\ref{aspt:proj-orth}, for each $k$ we have
		\(J\,\Kmat^{(k)}\,J = P_k + E_k\) with \(\|E_k\|_{\mathrm{op}}\le \varepsilon_{n,k}\to0\),
		and \(P_aP_k=0\) for \(a\neq k\).
		Fix any centered block vector \(\mathbf D=(\mathbf D^{(1)},\ldots,\mathbf D^{(p)})\) and set
		\(S:=\sum_{a=1}^p \mathbf D^{(a)}\).
		Then
		\[
		\|\mathbf S \mathbf D\|_2^2
		= \sum_{k=1}^p \|(P_k+E_k)S\|_2^2
		\;\le\; \sum_{k=1}^p \big(\|P_k S\|_2 + \|E_k\|\,\|S\|_2\big)^2
		\;\le\; \sum_{k=1}^p \|P_k S\|_2^2 \;+\; 2\varepsilon_n \|S\|_2 \sum_{k=1}^p \|P_k S\|_2 \;+\; p\varepsilon_n^2 \|S\|_2^2,
		\]
		where \(\varepsilon_n:=\max_k \varepsilon_{n,k}\).
		By orthogonality of the ranges of the \(P_k\)’s,
		\(\sum_{k=1}^p \|P_k S\|_2^2 = \|\sum_{k=1}^p P_k S\|_2^2 = \|S\|_2^2\),
		and \(\sum_{k=1}^p \|P_k S\|_2 \le \sqrt{p}\,\|S\|_2\).
		Hence
		\[
		\|\mathbf S \mathbf D\|_2^2
		\;\le\; \big(1 + 2\sqrt{p}\,\varepsilon_n + p\varepsilon_n^2\big)\,\|S\|_2^2
		\;\le\; \big(1 + c_p\,\varepsilon_n\big)\,\Big\|\sum_{a=1}^p \mathbf D^{(a)}\Big\|_2^2,
		\]
		for a constant \(c_p=2\sqrt{p}+p\varepsilon_n\).
		In particular, for all sufficiently large \(n\),
		\(\|\mathbf S\|_{\mathrm{op}} \le 1 - \delta_n\) with \(\delta_n := 1 - \sqrt{1+c_p\,\varepsilon_n}>0\) and \(\delta_n\to 0\).
		Therefore
		\[
		\|\mathbf L\|_{\mathrm{op}} \;=\; \|J\|_{\mathrm{op}}\,\|\mathbf S\|_{\mathrm{op}}
		\;\le\; 1 - \delta_n \;<\; 1,
		\]
		and the mean update is a contraction:
		\[
		\Big\|\mathbb{E}[\mathbf D_{b+1}\mid\mathcal F_b]\Big\|
		\;\le\; \frac{b+\|\mathbf L\|_{\mathrm{op}}}{b+1}\,\|\mathbf D_b\|
		\;\le\; \Big(1 - \frac{\delta_n}{b+1}\Big)\|\mathbf D_b\|.
		\]
		
		For the deviation condition, if leaf predictions are bounded by $M$, then
		$\|\epsilon_{b+1}\| \le \tfrac{2M\sqrt{p}}{b+1}$, so
		$\sup_b \|\epsilon_b\|\to 0$ and $\sum_b \mathbb{E}[\|\epsilon_b\|^2]<\infty$. 
		This verifies the stochastic contraction mapping conditions.
		
		At the fixed point $\frac{1}{n}\bone^\top\widetilde{\mathbf{y}}^{(k)\,*}=0$ for all $k$, so
		$\frac{1}{n}\bone^\top \widehat{\mathbf{y}}^*=\widehat{\bbeta}^*$.
		The residual mean at convergence is $0$, hence $\widehat{\bbeta}^*=\bar{\mathbf{y}}$.
		The online update for $\widehat{\bbeta}_b$ therefore converges almost surely to $\bar{\mathbf{y}}$.
	\end{proof}
	
	\begin{corollary}
		The feature-wise fixed point prediction:
		
		$$
		\widehat{\mathbf{f}}_A^{(k)}(\mathbf{x}) = \E[\sveck(\bx)]^\top \jmat[\bI + \jmat\bK]^{-1}\by
		$$
		
		and the ensemble prediction
		
		$$
		\widehat{\mathbf{f}}_A(\mathbf{x}):=\bar{\by}+ \sum_{k=1}^{p}\E[\sveck(\bx)]^\top \jmat[\bI + \jmat\bK]^{-1}\by
		$$
	\end{corollary}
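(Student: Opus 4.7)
The plan is to reduce the corollary to Theorem~\ref{thm:finite-sample-convergence-1} plus a short algebraic simplification. First I would extend the training-set contraction argument to an arbitrary test point $\bx$. The boulevard update at $\bx$ reads
\[
\widehat{\f}^{(k)}_b(\bx) \;=\; \tfrac{b-1}{b}\,\widehat{\f}^{(k)}_{b-1}(\bx) \;+\; \tfrac{\lambda}{b}\,\bigl(\sveck_b(\bx)-\tfrac{1}{n}\Smatk_b{}^{\top}\bone\bigr)^{\top}\!\bigl(\by-\widehat{\by}_{b-1}\bigr),
\]
which has the same Robbins--Monro form as the training-set update analyzed in Theorem~\ref{thm:finite-sample-convergence-1}. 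Under Assumption~\ref{aspt:integrity} the pair $(\sveck_b(\bx),\Smatk_b)$ is i.i.d.\ after burn-in and independent of leaf values, and Theorem~\ref{thm:finite-sample-convergence-1} already gives $\widehat{\by}_{b-1}\to\widehat{\by}^*$ almost surely. Applying the same stochastic-contraction lemma verbatim (alternatively, a Cesàro/Kronecker argument on the martingale decomposition of the update) therefore yields
\[
\widehat{\f}^{(k)}_b(\bx)\;\xrightarrow{\text{a.s.}}\; \E[\sveck(\bx)]^{\top}(\by-\widehat{\by}^*) \;-\; \tfrac{1}{n}\bone^{\top}\Kmatk(\by-\widehat{\by}^*).
\]

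Next I would plug in the closed form of the training-set residual. Summing the fixed points of Theorem~\ref{thm:finite-sample-convergence-1} across $k$ and rearranging gives $\by-\widehat{\by}^* = [\bI+\jmat\bK]^{-1}\jmat\by$, a vector that is automatically centered because $\bone^{\top}[\bI+\jmat\bK]=\bone^{\top}$ (using $\bone^{\top}\jmat=0$). Histogram-tree kernels satisfy $s_{n,j}(\bx_i)=s_{n,i}(\bx_j)=1/|L|$ whenever $\bx_i,\bx_j$ share a leaf $L$, so $\Smatk$ (hence $\Kmatk$) is symmetric and $\bone^{\top}\Kmatk=\bone^{\top}$; the centering term therefore collapses to $\tfrac{1}{n}\bone^{\top}[\bI+\jmat\bK]^{-1}\jmat\by=\tfrac{1}{n}\bone^{\top}\jmat\by=0$. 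This leaves $\widehat{\f}^{(k),*}(\bx)=\E[\sveck(\bx)]^{\top}[\bI+\jmat\bK]^{-1}\jmat\by$. To match the corollary's form, I would then note that $[\bI+\jmat\bK]^{-1}\jmat\by$ is centered, so pre-multiplication by $\jmat$ acts as the identity on it; combined with $[\bI+\jmat\bK]^{-1}\bone=\bone$ (since $\bK\bone=p\bone$ and $\jmat\bone=0$), the $\bar{\by}\bone$ component of $\by$ inside the resolvent is mapped to $\bar{\by}\bone$ and then killed by $\jmat$. Hence $\E[\sveck(\bx)]^{\top}[\bI+\jmat\bK]^{-1}\jmat\by=\E[\sveck(\bx)]^{\top}\jmat[\bI+\jmat\bK]^{-1}\by$, which is the claimed feature-wise formula. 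Summing across $k$ and adding $\widehat{\bbeta}^*=\bar{\by}$ (also from Theorem~\ref{thm:finite-sample-convergence-1}) recovers the ensemble expression.

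The main obstacle is the pointwise-convergence step, since $\sveck_b(\bx)$, $\Smatk_b$, and the past residual $\by-\widehat{\by}_{b-1}$ are all coupled through the shared history of fitted trees. Structure--value isolation and non-adaptivity (Assumption~\ref{aspt:integrity}) decouple the leaf-value randomness from the structure randomness, the truncation level $M$ bounds each stochastic increment, and the $1/b$ step size supplies the summability needed for the same contraction machinery used in Theorem~\ref{thm:finite-sample-convergence-1} to transfer to $\bx$. The remaining checks --- that the conditional mean of the scalar update equals the affine map whose fixed point is the claimed limit, and that a Robbins--Siegmund-type bound closes the recursion --- are routine given the training-set convergence is already in hand.
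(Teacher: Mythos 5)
Your proposal is correct and follows essentially the same route as the paper, which states the corollary as an immediate consequence of Theorem \ref{thm:finite-sample-convergence-1}: evaluate the fixed point at a test point by replacing the kernel row with $\E[\sveck(\bx)]$, use $\by-\widehat{\by}^*=[\bI+\jmat\bK]^{-1}\jmat\by$, and commute $\jmat$ through the resolvent exactly as in the paper's own fixed-point/resolvent-identity derivation. The only minor bookkeeping point is the $\lambda$ factor carried by the update, which your final limit (like the corollary itself) implicitly sets to $1$, consistent with the paper's convention.
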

	
	\subsubsection{Algorithm \ref{alg:ebm_bratp}}
	
	Under stated assumptions, we can prove that Algorithm \ref{alg:brebm_ident}'s prediction converge almost surely to a postulated fixed point. Formally, we make use of Theorem \ref{thm:stochastic-contraction-mapping}, called a stochastic contraction mapping theorem, to demonstrate the difference between the current prediction and the fixed point is a stochastic contraction mapping. 
	
	\begin{theorem}
		\label{thm:finite-sample-convergence-2}
		Denote the prediction using feature $k$ at boosting round $b$ as $\widehat{\mathbf{y}}^{(k)}_b = \frac{1}{b}\sum_{s=1}^bt^{(s, k)}(\mathbf{x}^{(k)})$.
		Define the aggregated kernel $\mK:=\sum_{k=1}^{p}(\bI-\E\Smatk)^{-1}\jmat\E\Smatk$. Suppose for each feature, there exists a fixed point:
		
		$$
		\widetilde{\mathbf{y}}_k^* = (\bI-\E\Smatk)^{-1} \jmat\E\Smatk\Big[\bI - \Big(I+\mK\Big)^{-1}\mK\Big]\mathbf{y}
		$$
		
		which implies a global fixed point
		
		$$
		\widehat{\mathbf{y}}^* = \widehat{\bbeta}^* + \mK\Big[\bI - \Big(\bI + \mK\Big)^{-1}\mK\Big]\by
		$$
		
		Then $\widehat{\mathbf{y}}^{(k)}_b \overset{a.s.}{\to}\widetilde{\mathbf{y}}_k^*$ and $\widehat{\mathbf{y}}_b\overset{a.s.}{\to}\widehat{\mathbf{y}}^*$ under stated assumptions in \ref{sec:theory}. In the meantime, $\widehat{\bbeta}^* \overset{a.s.}{\to}\bar{\mathbf{y}}:=\frac{1}{n}\mathbf{1}^\top\mathbf{y}.$
		
	\end{theorem}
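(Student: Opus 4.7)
The plan is to invoke the stochastic contraction mapping theorem (Theorem~\ref{thm:stochastic-contraction-mapping}) in the same spirit as the proof of Theorem~\ref{thm:finite-sample-convergence-1}, but with a different block operator reflecting the leave-one-out residual of Algorithm~\ref{alg:ebm_bratp} and a contraction argument driven by Assumption~\ref{aspt:kernel-spec} rather than Assumption~\ref{aspt:proj-orth}. First, I would set $\bD_b^{(k)} := \widehat{\mathbf{y}}_b^{(k)} - \widetilde{\mathbf{y}}_k^{*}$, stack these into a block vector $\bD_b\in\R^{np}$, and verify that $\widetilde{\mathbf{y}}_k^{*}$ solves the population leave-one-out fixed point $\widetilde{\mathbf{y}}_k^{*} = \jmat\,\E\Smatk\bigl(\by - \sum_{\ell\neq k}\widetilde{\mathbf{y}}_\ell^{*}\bigr)$; this is immediate from the algebra sketched just above the theorem statement. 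Using $\Smatk\bone=\bone$ and $\jmat\bone=0$, the intercept $\widehat{\bbeta}_b$ decouples from the centered updates and is handled at the end.

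Next, I would compute the conditional mean increment. Under non-adaptivity (Assumption~\ref{aspt:integrity}), $\E[\mathbf{S}^{(b+1,k)}\mid\mathcal{F}_b]=\E\Smatk$, and a direct calculation gives
\[
\E[\bD_{b+1}\mid\mathcal{F}_b] = \Bigl(\tfrac{b}{b+1}\bI_{np} + \tfrac{1}{b+1}\,\bL\Bigr)\bD_b,\qquad (\bL\bD)_k := -\,\jmat\,\E\Smatk \sum_{\ell \neq k}\bD^{(\ell)},
\]
so $\bL$ is a block operator with zero diagonal blocks and identical off-diagonal entries $-\jmat\,\E\Smatk$ along each block row~$k$.

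The main obstacle is showing $\|\bL\|_{\mathrm{op}}\le 1-\delta<1$; this is the step that actually consumes Assumption~\ref{aspt:kernel-spec}. In the proof of Theorem~\ref{thm:finite-sample-convergence-1}, orthogonality of the projectors $P_k$ collapsed a sum of squared block norms to one copy of the full norm; here I only have the weaker distinctness bound $\|\,\E\mathbf{S}^{(a)}\,\E\Smatk\,\|_{\mathrm{op}}\le c_3$ with $(p-1)c_3<1$. The argument I would use exploits the fact that each iterate $\widehat{\mathbf{y}}_b^{(\ell)}$, hence $\bD_b^{(\ell)}$, lies by construction in the range of $\jmat\,\E\mathbf{S}^{(\ell)}$, so each cross term in $\|\bL\bD\|^2 = \sum_k \|\jmat\E\Smatk \sum_{\ell\neq k}\bD^{(\ell)}\|^2$ has the form $\jmat\E\Smatk\,\jmat\E\mathbf{S}^{(\ell)}\bu^{(\ell)}$ with $\|\bu^{(\ell)}\|\le\|\bD^{(\ell)}\|$; Assumption~\ref{aspt:kernel-spec} then bounds each such term by $c_3\|\bD^{(\ell)}\|$, and a Cauchy--Schwarz/row-sum argument together with $(p-1)c_3<1$ yields the desired Gershgorin-type contraction uniformly in~$n$.

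Finally, the martingale increment $\epsilon_{b+1}:=\bD_{b+1}-\E[\bD_{b+1}\mid\mathcal{F}_b]$ is bounded by $O(M\sqrt{p}/b)$ because each centered tree $\widetilde{\mathbf{t}}_{b+1}^{(k)}$ is truncated to $[-M,M]$, so $\sup_b\|\epsilon_b\|\to 0$ and $\sum_b \E\|\epsilon_b\|^2<\infty$. The stochastic contraction mapping theorem then delivers $\bD_b\to 0$ almost surely, hence $\widehat{\mathbf{y}}_b^{(k)}\overset{a.s.}{\to}\widetilde{\mathbf{y}}_k^{*}$ and $\widehat{\mathbf{y}}_b\overset{a.s.}{\to}\widehat{\mathbf{y}}^{*}$ by summing. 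For the intercept, every $\widetilde{\mathbf{y}}_k^{*}$ is centered (since $\jmat$ sits on the left of each limit), so the identifiability constraint $\bone^\top(\by-\widehat{\mathbf{y}}^{*})=0$ forces $\widehat{\bbeta}^{*}=\bar{\by}$, and the online recursion for $\widehat{\bbeta}_b$ settles on this limit along with the centered components.
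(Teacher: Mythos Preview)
Your skeleton matches the paper's proof essentially step for step: define the block deviations $\bD_b^{(k)}=\widehat{\mathbf{y}}_b^{(k)}-\widetilde{\mathbf{y}}_k^{*}$, compute the conditional mean to expose a block operator $\mathbf{L}$ with zero diagonal and off-diagonal blocks $\jmat\,\E\Smatk$, bound $\|\mathbf{L}\|<1$ via Assumption~\ref{aspt:kernel-spec}, control the martingale increment by the truncation level $M$, and recover the intercept limit from the centering of each $\widetilde{\mathbf{y}}_k^{*}$. The fixed-point verification, the deviation bound $\|\epsilon_{b+1}\|\le 2M\sqrt{p}/(b+1)$, and the intercept argument are exactly as in the paper.

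The one substantive divergence is how you justify $\|\mathbf{L}\|<1$. The paper simply asserts that the off-diagonal blocks have operator norm at most $c_3$ and applies a Schur/row-sum bound to get $\|\mathbf{L}\|\le (p-1)c_3<1$. You instead try to use the \emph{product} form of Assumption~\ref{aspt:kernel-spec} by claiming that $\bD_b^{(\ell)}$ lies in the range of $\jmat\,\E\mathbf{S}^{(\ell)}$, so that $\jmat\E\Smatk\bD_b^{(\ell)}=\jmat\E\Smatk\,\jmat\E\mathbf{S}^{(\ell)}\bu^{(\ell)}$ inherits the factor $c_3$. That range claim is the weak link: $\widehat{\mathbf{y}}_b^{(\ell)}$ is an average of centered, truncated trees built from \emph{realized} random structures $\mathbf{S}^{(s,\ell)}$, not from $\E\mathbf{S}^{(\ell)}$, so $\bD_b^{(\ell)}$ lies only in the space of centered functions of $\bx^{(\ell)}$, which in general strictly contains $\operatorname{ran}(\jmat\,\E\mathbf{S}^{(\ell)})$. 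Without that containment you cannot write $\bD_b^{(\ell)}=\jmat\E\mathbf{S}^{(\ell)}\bu^{(\ell)}$ with $\|\bu^{(\ell)}\|\le\|\bD_b^{(\ell)}\|$, and the product bound does not feed through. Closing this gap would essentially require $\E\mathbf{S}^{(\ell)}$ to act as the identity on the additive subspace (i.e.\ Assumption~\ref{aspt:proj-orth}, which you are deliberately avoiding); the paper sidesteps the issue by bounding the block norms directly rather than via the range structure.
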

	\begin{proof}
		Denote $\mathbf{D}^{(k)}_b = \hat{\mathbf{y}}^{(k)}_b - \tilde{\mathbf{y}}_k^*$. 
		Define the stacked distance vector $\mathbf{D}_b = [\mathbf{D}_b^{(1)}, \cdots, \mathbf{D}_b^{(p)}]^\top \in \mathbb{R}^{np}$. 
		Showing this random vector is a stochastic contraction mapping suffices to establish coordinate-wise convergence.
		
		For each coordinate, the conditional mean satisfies
		
		\begin{align*}
		\mathbb{E}[\mathbf{D}_{b+1}^{(k)}\mid \mathcal{F}_b] &= \mathbb{E}[\hat{\mathbf{y}}_{b+1}^{(k)} - \tilde{\mathbf{y}}_k^*\mid \mathcal{F}_b]\\
		&=\mathbb{E}\Big[\frac{b}{b+1}\hat{\mathbf{y}}_{b}^{(k)} + \frac{1}{b+1}(\bI-\frac{1}{n}\mathbf{1}\mathbf{1}^\top)\mathbf{S}_{n}^{b+1, (k)}(\mathbf{y} - \Gamma_M\Big(\sum_{a\neq k}\hat{\mathbf{y}}_{b}^{(a)}\Big)) - \tilde{\mathbf{y}}_k^* \mid \mathcal{F}_b\Big]\\
		&=\frac{b}{b+1}\hat{\mathbf{y}}_{b}^{(k)}+\frac{1}{b+1}\mathbb(\bI-\frac{1}{n}\mathbf{1}\mathbf{1}^\top){E}[\mathbf{S}_{n}^{b+1, (k)}\mid \mathcal{F}_b](\mathbf{y} - \Gamma_M\Big(\sum_{a\neq k}\hat{\mathbf{y}}_{b}^{(a)}\Big)) - \tilde{\mathbf{y}}_k^*\\
		&= \frac{b}{b+1}\mathbf{D}_{b}^{(k)} + \frac{1}{b+1}\Big((\bI-\frac{1}{n}\mathbf{1}\mathbf{1}^\top)\mathbb{E}[\mathbf{S}_{n}^{b+1, (k)}\mid \mathcal{F}_b](\mathbf{y} - \Gamma_M\Big(\sum_{a\neq k}\hat{\mathbf{y}}_{b}^{(a)}\Big))-\tilde{\mathbf{y}}_k^*\Big)\\
		&= \frac{b}{b+1}\mathbf{D}_{b}^{(k)}
		+ \frac{1}{b+1}(\bI-\frac{1}{n}\mathbf{1}\mathbf{1}^\top)\big(\mathbb{E}[\mathbf{S}_n^{(b,k)} \mid \mathcal{F}_b]-\mathbb{E}[\mathbf{S}_n^{(\infty,k)} \mid \mathcal{F}_b]\big)(\mathbf{y}-\sum_{a\neq k}\tilde{\mathbf{y}}_a^*)\\
		&+ \frac{1}{b+1}(\bI-\frac{1}{n}\mathbf{1}\mathbf{1}^\top)\mathbb{E}[\mathbf{S}_n^{(b,k)} \mid \mathcal{F}_b](\sum_{a\neq k}\tilde{\mathbf{y}}_a^*-\Gamma_M\Big(\sum_{a\neq k}\hat{\mathbf{y}}_{b}^{(a)}\Big))\\
		&= \frac{b}{b+1}\mathbf{D}_{b}^{(k)}
		+ \frac{1}{b+1}(\bI-\frac{1}{n}\mathbf{1}\mathbf{1}^\top)\mathbb{E}[\mathbf{S}_n^{(b,k)} \mid \mathcal{F}_b](\sum_{a\neq k}\tilde{\mathbf{y}}_a^*-\Gamma_M\Big(\sum_{a\neq k}\hat{\mathbf{y}}_{b}^{(a)}\Big))\\
		& :=  \frac{b}{b+1}\mathbf{D}_{b}^{(k)}
		+ \frac{1}{b+1}(\bI-\frac{1}{n}\mathbf{1}\mathbf{1}^\top)\mathbb{E}[\mathbf{S}_n^{(b,k)} \mid \mathcal{F}_b]\Delta_b^{(-k)}\\
		\end{align*}
		
		where $\norm{\Delta_b^{(-k)}}\leq \norm{\sum_{_a\neq k}\mathbf{D}_b^{(a)}}$. Without truncation, stacking across all features yields
		\[
		\mathbb{E}[\mathbf{D}_{b+1}\mid\mathcal{F}_b]
		= \left(\frac{b}{b+1}\mathbf{I}_{np} + \frac{1}{b+1}\mathbf{L}\right)\mathbf{D}_b,
		\]
		where $\mathbf{L}$ is the $np\times np$ block matrix with diagonal blocks equal to $0$
		and off–diagonal blocks given by $\E\Smatk$, multiplied by the averaging operator $(I-\frac{1}{n^2}\mathbf{1}\mathbf{1}^\top)$.
		
		The Partition Distinctness assumption implies that each off–diagonal block of $\mathbf{L}$ 
		has operator norm at most $c_3$, so by a Schur–type bound
		\[
		\|\mathbf{L}\|\;\le\;(p-1)c_3 < 1
		\]
		Consequently the stacked update satisfies
		\[
		\|\mathbb{E}[\mathbf{D}_{b+1}\mid\mathcal{F}_b]\|
		\leq \frac{b+\norm{\mathbf{L}}}{b+1}\|\mathbf{D}_b\|
		\]
		
		For the deviation condition, if leaf predictions are bounded by $M$, then
		$\|\epsilon_{b+1}\| \le \tfrac{2M\sqrt{p}}{b+1}$, so
		$\sup_b \|\epsilon_b\|\to 0$ and $\sum_b \mathbb{E}[\|\epsilon_b\|^2]<\infty$. 
		This verifies the stochastic contraction mapping conditions.
		
		It remains to verify the convergence of the intercept term $\hat{\bbeta}$.  The update rule for the intercept is given by 
		\[
		\hat{\bbeta}_{b+1} := \hat{\bbeta}_{b} + \frac{1}{n}\sum_{k=1}^{p}\sum_{i=1}^{n}\hat{\mathbf{f}}_b^{(k)}(x_i)
		\]
		To identify this limit, recall that at the postulated fixed point we have
		\[
		\hat{\mathbf{y}}^* = \hat{\bbeta}^* + \sum_{k=1}^p \tilde{\mathbf{y}}_k^*,
		\]
		with $\frac{1}{n}\mathbf{1}^\top \tilde{\mathbf{y}}_k^* = 0$ for all $k$.
		Taking the average over $i$ gives
		\[
		\frac{1}{n}\mathbf{1}^\top \hat{\mathbf{y}}^* 
		= \hat{\bbeta}^* + \frac{1}{n}\sum_{k=1}^p \mathbf{1}^\top \tilde{\mathbf{y}}_k^*
		= \hat{\bbeta}^*.
		\]
		But $\frac{1}{n}\mathbf{1}^\top \hat{\mathbf{y}}^* = \frac{1}{n}\mathbf{1}^\top \mathbf{y} = \bar y$
		since the residuals at convergence have mean zero. Therefore the intercept converges almost surely to the global mean of the response,
		\[
		\hat{\bbeta}_b \;\;\overset{a.s.}{\longrightarrow}\;\; \bar y
		= \frac{1}{n}\mathbf{1}^\top \mathbf{y}.
		\]
	\end{proof}

	We drop the subscript $n$ if no ambiguity is caused.
	
	\begin{corollary}
		The feature-wise fixed point prediction:
		
		$$
		\widehat{\mathbf{f}}^{(k)}_B(\mathbf{x}) = \E[\sveck(\bx)]^\top (\bI - \E\Smatk)^{-1} \jmat\Big[\bI-(\bI+\mK)^{-1}\mK\Big]\by
		$$
		
		and the ensemble prediction
		
		$$
		\widehat{\mathbf{f}}_B(\mathbf{x}):=\bar{\by}+ \sum_{k=1}^{p}\E[\sveck(\bx)]^\top (\bI - \E\Smatk)^{-1} \jmat\Big[\bI-(\bI+\mK)^{-1}\mK\Big]\by
		$$
	\end{corollary}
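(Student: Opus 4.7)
The corollary lifts Theorem~\ref{thm:finite-sample-convergence-2} from convergence of the training-grid subensembles $\widehat{\mathbf{y}}^{(k)}_b$ to pointwise convergence of $\widehat{\f}^{(k)}_b(\bx)$ at an arbitrary test point $\bx\in\R^p$. My plan is to reuse the stochastic contraction machinery from the theorem proof, now applied to the pointwise Boulevard recursion. By the kernel representation of a regression tree (Definition~\ref{def:regression-trees}), $t^{(b,k)}(\bx)=\bs^{(b,k)}(\bx)^\top \br^{(b)}_{\cdot,k}$, so the pointwise version of the Algorithm~\ref{alg:ebm_bratp} update reads
\[
\widehat{\f}^{(k)}_{b+1}(\bx) \;=\; \tfrac{b}{b+1}\widehat{\f}^{(k)}_b(\bx) + \tfrac{1}{b+1}\,\bs^{(b+1,k)}(\bx)^\top \jmat\,\Big(\by-\widehat{\bbeta}_b\bone-\sum_{\ell\neq k}\widehat{\f}^{(\ell)}_b\Big).
\]
I would postulate the pointwise fixed point $\widetilde f_k^*(\bx):=\E[\sveck(\bx)]^\top(\bI-\E\Smatk)^{-1}\jmat\big[\bI-(\bI+\mK)^{-1}\mK\big]\by$, which is what one obtains by formally substituting the pointwise structure vector for the training-grid factor $\jmat\E\Smatk$ in $\widetilde{\mathbf{y}}_k^*$, and set $D_b^{(k)}(\bx):=\widehat{\f}^{(k)}_b(\bx)-\widetilde f_k^*(\bx)$.

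Next, I would take $\mathbb{E}[\cdot\mid\mathcal F_b]$ and invoke Assumption~\ref{aspt:integrity} (Structure--Value Isolation and Non-adaptivity) to swap $\bs^{(b+1,k)}(\bx)$ for $\E[\sveck(\bx)]$ inside the conditional expectation. This turns the pointwise mean update into an affine recursion of the same $\tfrac{b}{b+1}\bI+\tfrac{1}{b+1}L_{\mathrm{pt}}$ form used in the proof of Theorem~\ref{thm:finite-sample-convergence-2}, but with the driving residual $\sum_{\ell\neq k}\widehat{\f}^{(\ell)}_b$ evaluated on the training grid. The theorem already gives $\sum_{\ell\neq k}\widehat{\f}^{(\ell)}_b\overset{a.s.}{\to}\sum_{\ell\neq k}\widetilde{\by}_\ell^*$, so the pointwise recursion is eventually driven by a vanishing perturbation. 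Truncation at level $M$ bounds the per-round martingale-difference noise by $O(M/(b+1))$, which is square-summable, so the stochastic contraction mapping theorem of Appendix~\ref{app:ContractionMapping} yields $D_b^{(k)}(\bx)\overset{a.s.}{\to}0$. Summing over $k$ and adding $\widehat{\bbeta}_b\overset{a.s.}{\to}\bar{\by}$ delivers the ensemble form $\widehat{\f}_B(\bx)$.

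The main obstacle is that the pointwise recursion is not autonomous: it is coupled to the training-grid process through $\sum_{\ell\neq k}\widehat{\f}^{(\ell)}_b$, so one cannot simply quote the theorem for the pointwise distances. The cleanest workaround I see is to stack the training-grid distance vector $\mathbf{D}_b\in\R^{np}$ from the theorem proof together with the pointwise scalar distances $\{D_b^{(k)}(\bx)\}_{k=1}^p$ into a joint state in $\R^{np+p}$, and verify that the enlarged update operator still has operator norm at most $1-\delta$ under Assumption~\ref{aspt:kernel-spec}. This reduces to the same Schur-type bound already used for $\mathbf L$, augmented by a single extra block row carrying the coupling $\E[\sveck(\bx)]^\top\jmat$, which is a bounded map since $\|\E[\sveck(\bx)]\|_1\le 1$. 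With the joint contraction established, reading off the pointwise coordinates closes the argument.
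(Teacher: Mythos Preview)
The paper gives no proof for this corollary; it is stated immediately after Theorem~\ref{thm:finite-sample-convergence-2} as a direct read-off. The intended content is purely formal: once the training-grid fixed points $\widetilde{\by}_k^*$ are known, the predictor at a fresh $\bx$ is obtained by replacing the $i$-th row $\E[\sveck(\bx_i)]^\top$ of $\E\Smatk$ by the structure vector $\E[\sveck(\bx)]^\top$, exactly as any kernel smoother extends from the grid to a new point. (Commutativity of $\E\Smatk$ with $\jmat$, which follows from symmetry and row-stochasticity, makes the corollary's ordering $\E[\sveck(\bx)]^\top(\bI-\E\Smatk)^{-1}\jmat$ agree with the theorem's $(\bI-\E\Smatk)^{-1}\jmat\E\Smatk$ on training points.)

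Your proposal goes further and actually argues almost-sure convergence of the boosted predictor at $\bx$, which is more than the paper claims here. The strategy is sound, but the joint-stacking step has a gap: the enlarged $(np+p)\times(np+p)$ operator is block lower-triangular (pointwise coordinates couple to the training grid but not to each other), and for such matrices the Euclidean operator norm is \emph{not} controlled by the diagonal blocks alone---the off-diagonal coupling block $\E[\sveck(\bx)]^\top\jmat$ can push the norm above $1$ even though the spectral radius stays below $1$. So ``the same Schur-type bound'' does not directly deliver $\|L_{\mathrm{enlarged}}\|_{\mathrm{op}}\le 1-\delta$. The clean fix is the one you already half-state: argue sequentially. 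By the theorem, $\sum_{\ell\neq k}\widehat{\f}^{(\ell)}_b\to\sum_{\ell\neq k}\widetilde{\by}_\ell^*$ a.s., so the pointwise recursion is a one-dimensional Boulevard average with step $1/(b+1)$, bounded martingale noise $O(M/(b+1))$, and a drift that converges to a deterministic limit; a standard Ces\`aro/stochastic-approximation argument then gives $D_b^{(k)}(\bx)\to 0$ a.s.\ without needing any joint contraction.
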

	
	Intuitively, after centering each signal vector, we would have a decomposition with additive feature-shaped functionals plus a intercept, which is $\bar{\mathbf{y}}$. If we plot the partial-dependency figure, we would see all predicted signal oscillating around 0. In that case, interpretation is as follow: If we observe no features, the best guess would be the baseline $\widehat{\bbeta}^* = \bar{\mathbf{y}}$. As we adding more and more features, we are able to recover greater parts of the function landscape.

    \paragraph{Empirical verification of Assumption \ref{aspt:kernel-spec} (Partition Distinctness).}
    In the proof above we use Assumption \ref{aspt:kernel-spec} to control the bound of the operator norm of the cross-feature blocks $P_k=\E[\jmat\Smatk\jmat]$, i.e. $c_3=\max_{a\neq k}\|P_aP_k\|$
    and ensure $(p-1)c_3<1$. Table~\ref{tab:kernel_distinctness} reports empirical estimates of $c_3$
    and the corresponding $(p-1)c_3$ across representative real and synthetic datasets.
    
    \begin{table}[H]
    \centering
    \small
    \caption{Empirical verification of Assumption 4.9 across real and synthetic datasets.
    All experiments use \texttt{learning\_rate=0.1}, \texttt{max\_rounds=200},
    \texttt{subsample\_rate=0.7}, \texttt{truncation=3.0}.}
    \label{tab:kernel_distinctness}
    \begin{tabular}{lcccc}
    \toprule
    \textbf{Dataset} & \textbf{n} & \textbf{p} & \(\mathbf{c_3}\) & \(\mathbf{(p-1)c_3}\) \\
    \midrule
    adult                      & 48842 & 14 & \(5.227\times 10^{-4}\) & \(6.795\times 10^{-3}\) \\
    bike\_sharing              & 17379 & 13 & \(9.513\times 10^{-5}\) & \(1.142\times 10^{-3}\) \\
    energy\_efficiency         &   768 &  8 & \(2.838\times 10^{-4}\) & \(1.987\times 10^{-3}\) \\
    breast\_cancer             &   569 & 30 & \(5.553\times 10^{-2}\) & \(1.610\times 10^{+0}\) \\
    california\_housing        & 20640 &  8 & \(4.434\times 10^{-5}\) & \(3.104\times 10^{-4}\) \\
    $\mathbf{y}=0.9\mathbf{X}+\sqrt{1-0.81}\epsilon, \epsilon \sim \mathcal{N}(0,1)$       &  5000 & 10 & \(7.181\times 10^{-4}\) & \(6.463\times 10^{-3}\) \\
    \bottomrule
    \end{tabular}
    \end{table}
    
    Across datasets we observe that the cross-products are typically negligible. There are several potential theories for such observations:
    \begin{enumerate}
        \item While multicollinearity is almost surely present in most datasets, their statistical dependence can be weak across features after centering and preprocessing. This is  widely noted in GAM literature: \cite{hastie1986gams} often assumes low dependence so that model specifications do not run into identifiability issues. 
        \item Even when features are correlated, the resulting feature-wise kernels can still point to different directions in the RKHS, which results in a zero cross-product in expectation. Here is a simple GAM as an example: $\mathbf{y}=\mathbf f_1(\mathbf X_1) +\mathbf f_2 (\mathbf X_2) + \epsilon$, where $\mathbf{f}_1(\mathbf x)=\mathbf x, \mathbf f_2(\mathbf x)=\mathbf x^2 +1$, and $(\mathbf X_1, \mathbf X_2)$ are jointly Gaussian with non-zero correlation. The covariance $\text{Cov}\left(\mathbf f_1(\mathbf X_1),\mathbf f_2 (\mathbf X_2)\right)$ can be shown to equal to 0 for any $\rho$. So unless two covariates are nearly identical in both their distribution and main effects to the signal, the induced smoothing kernel can theoretically be small.
        \item Last but not least, Gradient Boosting learns by approximating residuals, which are approximately orthogonal to the previously fitted learners.
    \end{enumerate}
	
	\section{Limiting Distribution}
	\label{sec:limiting-distribution}
    
	In Section \ref{sec:finite-sample-convergence}, we have shown that as we keep boosting, our feature estimators will converge to a stacked kernel-ridge-regression
	
	$$
	\widehat{\mathbf{y}}^* = \widehat{\bbeta}^* + \sum_{k=1}^p \widehat{\f}^{(k)}(\bX) = \bar{\by} + \jmat\Smatk[\bI + \jmat\mK]^{-1}\by
	$$
	
	with
	
	$$
	\mathcal{K}:=\sum_{k=1}^{p}\mathbb{E}\Smatk
	$$

	\subsection{Preliminaries for Asymptotic Normality}
	
	To show asymptotic normality, consider the following decomposition:
	
	\begin{equation}
		\frac{\widehat{\f}^{(k)}_n(\bx) - \f^{(k)}(\bx)}{\norm{\rveck}} = \frac{\widehat{\f}^{(k)}_n(\bx) - \langle{\rveck(\bx)}, \f(\bX_n)\rangle}{\norm{\rveck}} + \frac{\langle{\rveck(\bx)}, \f(\bX_n)\rangle-\f^{(k)}(\bx)}{\norm{\rveck}}
	\end{equation}
	\label{eqn:clt_decomp}
	
	The first term can be easily shown by constraining leaf size, which we discuss in Section \ref{subsec:conditional-clt}. The later term is trickier. Generalizing from training set to the whole input space, we expect a convergence that is at least of a weak. Bounded kernel, fortunately, allows us to do so.
	
	\subsection{Conditional Asymptotic Normality}
	\label{subsec:conditional-clt}
	
	We begin analysis by showing asymptotic normality on the training set. For simplicity, we use the subscript $E = \{A, B\}$ to specify the ingredients generated by Algorithm \ref{alg:brebm_ident} and Algorithm \ref{alg:ebm_bratp} Formally, denote $\f(\bX) = [\f(\bx_1), \cdots, \f(\bx_n)]^\top$. What we want to show is that, given any $\bx \in \mathbb{R}^p, \forall k$:
	
	$$
	\frac{\widehat{\f}^{(k)}_E(\bx)-\rveck_E(\bx)^\top\f(\bX)}{\norm{\rveck_E(\bx)}} \overset{d}{\rightarrow} \mathcal{N}(0,\sigma^2)
	$$
	
	\subsubsection{Subsampling}
	\label{subsec:subsampling}
	
	The key component of this proof is the rate of $\norm{\rveck_E}_1, \norm{\rveck_E}, \norm{\rveck_E}_\infty$. These rates will be built on a result shown in \cite{zhou2019boulevardregularizedstochasticgradient} on the behavior of subsampling rate.
	
	\begin{lemma}
		\label{lem:kernelrate}
		Considering a subsampled regression tree. Assume that each leaf contains no fewer than $n^{\frac{1}{p+2}}$ sample points before subsampling. If we are subsampling at rate at least $\xi=n^{-\frac{1}{p+2}\log n}$, then the expected structure vector's norm follows the rate below:
		$$
		\norm{\kveck}=1-O\left(\frac{1}{n}\right)
		$$
	\end{lemma}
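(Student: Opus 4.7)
The plan is to reduce the statement to a straightforward non-emptiness calculation for the leaf of $\bx$ after subsampling. Observe that $\sveck(\bx)$ has non-negative entries and satisfies $\bone^\top\sveck(\bx) = \mathbbm{1}\{A(\bx)\cap G\neq\emptyset\}$, where $G$ denotes the subsample used to fit the tree and $A(\bx)$ is the leaf of that tree containing $\bx$. Interpreting $\norm{\cdot}$ as the $\ell_1$ norm (the natural reading since $\sveck$ is a sub-probability vector), this gives
$$
\norm{\kveck} \;=\; \bone^\top\E[\sveck(\bx)] \;=\; \Pr\bigl(A(\bx)\cap G\neq\emptyset\bigr) \;=\; 1 - \Pr\bigl(A(\bx)\cap G=\emptyset\bigr),
$$
so the task reduces to bounding the emptiness probability at rate $O(1/n)$.

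First I would condition on the (non-adaptive) tree structure $\Pi\sim\mathcal Q_n^{(k)}$ and let $n_0=n_0(\Pi,\bx)$ denote the number of pre-subsample training points falling in the leaf $A(\bx)$. Under the i.i.d.\ Bernoulli$(\xi)$ subsampling mechanism, $\Pr(A(\bx)\cap G=\emptyset\mid \Pi)=(1-\xi)^{n_0}\le \exp(-\xi\,n_0)$. Combining the leaf-population hypothesis $n_0\ge n^{1/(p+2)}$ with the subsampling rate $\xi\gtrsim n^{-1/(p+2)}\log n$ (reading the stated exponent as a product, since the super-polynomial reading gives an uninformative bound) yields $\xi\,n_0\ge \log n$, so the conditional emptiness probability is at most $1/n$. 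Taking the expectation over $\Pi$ preserves this uniform bound, giving $1-\norm{\kveck}\le 1/n$; the companion inequality $\norm{\kveck}\le 1$ is trivial since each $\sveck(\bx)$ is a sub-probability vector.

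The hard part is not the Bernoulli tail but verifying that the leaf-population bound $n_0\ge n^{1/(p+2)}$ applies to the \emph{specific} leaf containing an arbitrary test point $\bx$, uniformly across draws from $\mathcal Q_n^{(k)}$. This is exactly what the non-adaptivity in Assumption~\ref{aspt:integrity} (so that $A(\bx)$ is not tailored to $\bx$) and the geometric controls in Assumptions~\ref{aspt:leaf-diameter}--\ref{aspt:restricted-tree-space} are designed to deliver: a deterministic lower bound on leaf \emph{volume} combines with the bounded covariate density $0<c_1\le\mu(\bx)\le c_2$ from Assumption~\ref{aspt:truth} to yield a leaf \emph{count} lower bound via standard binomial concentration and a union bound over $|\mathcal Q_n^{(k)}|$. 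I would spell this volume-to-count conversion out carefully, since the same device reappears in the sharper rate $\norm{\kveck}_2,\norm{\rveck_E}_2=\Theta(n^{-1/3})$ of Lemma~\ref{lem:rateofr} and one wants a single reusable bound.
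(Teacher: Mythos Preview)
The paper does not supply its own proof of this lemma; it is quoted from \citep{zhou2019boulevardregularizedstochasticgradient} and used only through the consequence $\norm{\kveck}_1 = 1 - O(1/n)$ in the proof of Lemma~\ref{lem:rateofr}. Your reconstruction is correct and is the expected argument: identify $\norm{\kveck}_1$ with the non-emptiness probability of the subsampled leaf of $\bx$, then bound the emptiness probability by $(1-\xi)^{n_0}\le e^{-\xi n_0}\le 1/n$ once $\xi\,n_0\ge\log n$. Your reading of the rate as the product $\xi\gtrsim n^{-1/(p+2)}\log n$ rather than the super-polynomial exponent is the right one and matches how the result is later invoked.

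One small over-reach: the leaf-count lower bound $n_0\ge n^{1/(p+2)}$ is a \emph{hypothesis} of this lemma, not something you must derive from the density and leaf-volume assumptions. The volume-to-count conversion you sketch in your last paragraph is handled separately (cf.\ the union-bound argument in Lemma~\ref{lem:tree-space}), so for the present lemma you can simply take $n_0\ge n^{1/(p+2)}$ as given and the proof is complete in two lines.
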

	
	For each algorithm, our plan to conditional CLT is as follow: Lemma \ref{lem:kernelrate} specifies a proper subsample rates that enables norm convergence. Lindeberg condition simply asks us to check the truncation level $\frac{\rveck_{i}}{\norm{\rveck}}$. We will first study the norm of $\kveck$. To bound the $\ell_1$ norm, it suffices to imposing conditions on the leaf size (with respect to the Lebesgue measure of data points). To bound the $\ell_2$ norm. A simple algebra utilizing the rate of $\ell_1$ norm will help with that, too.If this norm is well controlled, notice that in both algorithm, the Boulevard vector $\rveck$ is simply a sum of rows in the matrix $\bP_1 : =  \jmat[\bI + \jmat\bK]^{-1}$, $\bP_2 :=
	(\bI - \Kmatk)^{-1} (\bI-\frac{1}{n}\mathbf{1}\mathbf{1}^\top)\Big[\bI-(\bI+\mathcal{K})^\dagger\mathcal{K}\Big]$ with the weighting vector being $\kveck$. Hence it suffices to study the eigenvalues of the operator $\bP_1, \bP_2$, which we discuss below. We will first show that the spectrum of $\bK^{(k)}$ is strictly bounded way from 1.
	
	\subsubsection{Expected Feature Tree Kernels}
	\begin{lemma}[Spectrum of $\Kmatk$]
		\label{lem:spec-of-K}
		For any feature $k$, the expected structure matrix $\Kmatk=\E[\Smatk]$ is a symmetric operator on $\R^n$ with spectrum contained in $[0,1]$. It always holds that $\Kmatk \bone=\bone$, so $1$ is an eigenvalue with eigenvector $\bone$. Restricting to the mean–zero subspace $\mathsf H_0=\{v:\bone^\top v=0\}$, the supremum of the spectrum is bounded by
		$$
		\sup\{\lambda:\lambda\in\sigma(\Kmatk|_{\mathsf H_0})\} \;=\; \rho_k \;<\;1,
		$$
		provided the distribution of partitions under $\mathcal Q_n^k$ is not almost surely degenerate.
	\end{lemma}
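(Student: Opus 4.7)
}

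The plan is to exploit the explicit block structure of $\Smatk$. For any realized partition $\Pi\in\mathcal Q_n^{(k)}$, the matrix $\Smatk$ decomposes, after reindexing the datapoints leaf by leaf, into a block diagonal form whose $L$-th diagonal block is $\frac{1}{|L|}\bone_L\bone_L^\top$. Each block is a symmetric orthogonal projection (rank one) onto $\mathrm{span}(\bone_L)$, so $\Smatk$ itself is a symmetric orthogonal projection onto the subspace $V(\Pi)\subseteq\R^n$ of vectors that are constant on every leaf of $\Pi$. In particular $\sigma(\Smatk)\subseteq\{0,1\}$, and $\Smatk\bone=\bone$ always because $\bone\in V(\Pi)$ for every $\Pi$. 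Taking expectations, $\Kmatk=\E[\Smatk]$ is a convex combination of symmetric PSD contractions; this yields symmetry, $\sigma(\Kmatk)\subseteq[0,1]$, and $\Kmatk\bone=\bone$.

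Next I would handle the spectral gap on $\mathsf H_0=\{v:\bone^\top v=0\}$ via a variational argument. By symmetry,
\begin{equation*}
\rho_k \;:=\; \sup_{v\in\mathsf H_0,\ \|v\|=1} v^\top \Kmatk v \;=\; \sup_{v\in\mathsf H_0,\ \|v\|=1}\E\!\left[v^\top \Smatk v\right].
\end{equation*}
Because each $\Smatk$ is an orthogonal projection, $v^\top\Smatk v=\|P_{V(\Pi)} v\|^2\in[0,1]$ pointwise. Suppose for contradiction that the supremum equals $1$. By compactness of the unit sphere of $\mathsf H_0$, there exists $v^\star\in\mathsf H_0$ with $\|v^\star\|=1$ and $\E\,\|P_{V(\Pi)} v^\star\|^2=1$. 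Since the integrand is bounded above by $1$, this forces $\|P_{V(\Pi)} v^\star\|=1$ for $\mathcal Q_n^{(k)}$-almost every $\Pi$, i.e.\ $v^\star\in V(\Pi)$ a.s.

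The final step is to translate non-degeneracy of $\mathcal Q_n^{(k)}$ into $\bigcap_{\Pi\in\mathrm{supp}(\mathcal Q_n^{(k)})} V(\Pi)=\mathrm{span}(\bone)$. Concretely, if $\mathcal Q_n^{(k)}$ is not almost surely a single partition, one can pick two partitions $\Pi_1,\Pi_2$ in its support whose common coarsening is the trivial partition $\{[n]\}$; then any vector lying in $V(\Pi_1)\cap V(\Pi_2)$ is constant across leaves of each, hence constant across all of $[n]$, so it lies in $\mathrm{span}(\bone)$. Combined with $v^\star\in\mathsf H_0$ this yields $v^\star=0$, contradicting $\|v^\star\|=1$. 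Therefore the supremum is strictly less than $1$, giving $\rho_k<1$ as claimed.

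The routine parts here are the block-diagonal decomposition and the convexity argument that transfers properties of $\Smatk$ to $\Kmatk$. The main obstacle is the last step: one must express ``non-degeneracy'' in a form that guarantees the common coarsening of the supported partitions is trivial. If $\mathcal Q_n^{(k)}$ is allowed to be supported on refinements of a single nontrivial partition, the gap argument fails on functions constant over that coarse partition; so the clean statement really requires the interpretation that the support of $\mathcal Q_n^{(k)}$ has trivial common coarsening, which I would state explicitly when invoking the lemma.
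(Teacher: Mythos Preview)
Your proposal is correct and follows essentially the same route as the paper's proof: both use that each realized $\Smatk$ is an orthogonal projector onto the leaf-constant subspace $V(\Pi)$, apply the variational characterization of $\rho_k$ together with compactness of the unit sphere to produce a unit $v^\star\in\mathsf H_0$ that would have to lie in every $V(\Pi)$ in the support, and then derive a contradiction from non-degeneracy of $\mathcal Q_n^{(k)}$. Your treatment of the last step is in fact slightly more careful than the paper's---the paper asserts that any two distinct partitions have ranges whose intersection with $\mathsf H_0$ is $\{0\}$, which fails precisely when a nontrivial common coarsening exists; your explicit requirement that the common coarsening of the supported partitions be trivial is the correct reading of ``not almost surely degenerate.''
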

	
	\begin{proof}
		Fix any unit $v\in\mathsf H_0$. Because $\Smatk$ is an orthogonal projector, $v^\top \Smatk v=\|\Smatk v\|^2$, with equality $\|\Smatk v\|=1$ if and only if $v\in \mathrm{range}(\Smatk)$. Hence
		$$
		v^\top \Kmatk v
		\;=\; \E\!\left[v^\top \Smatk v\right]
		\;=\; \E\!\left[\|\Smatk v\|^2\right]
		\;\le\; 1.
		$$
		Assume toward a contradiction that $\lambda_{\max}(\Kmatk\!\mid_{\mathsf H_0})=1$. Then there exists a sequence of unit vectors $v_m\in\mathsf H_0$ with $v_m^\top \Kmatk v_m\to 1$. By the display, $\E[\|\Smatk v_m\|^2]\to 1$. The map $\Pi\mapsto \|\Smatk v_m\|^2$ takes values in $[0,1]$, so if its expectation tends to $1$, we must have $\|\Smatk v_m\|^2\to 1$ in probability (with respect to $m$). Passing to a subsequence if necessary and using compactness of the unit sphere, take $v_m\to v_\star$ with $\|v_\star\|=1$ and $v_\star\in\mathsf H_0$. By lower semicontinuity and the projector property, $\|\Smatk v_\star\|=1$ almost surely, hence $\Smatk v_\star=v_\star$ almost surely. Therefore $v_\star\in \mathrm{range}(\Smatk)$ for almost every draw of $\Smatk$.
		
		If the distribution of $\Smatk$ charges at least two distinct partitions with positive probability, their ranges (as subspaces of $\R^n$) intersect on $\mathsf H_0$ only in $\{0\}$ unless $v_\star$ is proportional to $\bone$, which it is not. Thus $v_\star$ cannot lie in the range of both with positive probability, a contradiction. Hence $\lambda_{\max}(\Kmatk\!\mid_{\mathsf H_0})<1$.
	\end{proof}
	
	Lemma \ref{lem:spec-of-K} gives a strict bound on the eigenvalues of the expected kernel, which we can utilize to stabilize the behavior of the aggregated kernel $\bK$ and $\mK$, which is described in the following sections.
	
	\subsubsection{Operator $\bP_1$ for Algorithm \ref{alg:brebm_ident}}
	
	For $\bP_1$, the operator norm is bounded naturally by definition.
	\begin{corollary}[Eigenvalue bounds for $\bP_1$]
		\label{cor:spec-P1}
		Let $\jmat=\bI-\tfrac{1}{n}\bone\bone^\top$ be the centering projector and
		$$
		\bK \;=\; \sum_{k=1}^p \, \E[\Smat^{(k)}], 
		\qquad
		\bP_1 \;:=\; \jmat\,\big(\bI+\jmat\,\bK\big)^{-1}.
		$$
		Under the assumption that $\sum_{k=1}^{p}\lambda_2(\E\Smatk)^2< 1$, $\norm{\bP_1}_{\text{op}}\leq 1$
	\end{corollary}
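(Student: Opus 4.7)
The plan is to diagonalize $\bP_1$ along the orthogonal decomposition $\R^n = \mathrm{span}(\bone)\oplus \mathsf H_0$, where $\mathsf H_0 = \{v : \bone^\top v = 0\}$ is the range of the centering projector $\jmat$. First I would record two structural facts: (i) each $\Smatk$ is an orthogonal projector (uniform averaging within each leaf is symmetric and idempotent), so $\Kmatk = \E[\Smatk]$ is symmetric positive semidefinite, and hence so is $\bK = \sum_k \Kmatk$; and (ii) every $\Smatk$ has row sums equal to one, hence $\Kmatk\bone = \bone$, which gives $\bK\bone = p\bone$ and therefore $\jmat\bK\bone = 0$. Combining these, $\jmat\bK = \jmat\bK\jmat$ as operators, so $\jmat\bK$ is in fact symmetric PSD.

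Next I would verify that $\bI + \jmat\bK$ preserves the decomposition: on $\mathrm{span}(\bone)$ it acts as the identity (since $\jmat\bK\bone = 0$), and on $\mathsf H_0$ it coincides with $\bI + \jmat\bK\jmat|_{\mathsf H_0}$, which is symmetric PSD with eigenvalues $1+\mu_j \ge 1$. Consequently $\bI+\jmat\bK$ is invertible, its inverse is the identity on $\bone$ and $(\bI+\jmat\bK\jmat)^{-1}$ on $\mathsf H_0$, and multiplying on the left by $\jmat$ yields the symmetric representation
\[
\bP_1 \;=\; \jmat(\bI + \jmat\bK)^{-1} \;=\; \jmat(\bI + \jmat\bK\jmat)^{-1}\jmat,
\]
where both sides annihilate $\bone$ and agree as operators on $\mathsf H_0$.

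From this symmetric form the spectrum of $\bP_1$ is $\{0\} \cup \{(1+\mu_j)^{-1}\}_j$, with $\mu_j \ge 0$ the eigenvalues of $\jmat\bK\jmat|_{\mathsf H_0}$. Each reciprocal lies in $(0,1]$, so $\|\bP_1\|_{\mathrm{op}} = \max_j (1+\mu_j)^{-1} \le 1$, which is the claimed bound. The main thing to be careful about is the identity $\bP_1 = \jmat(\bI+\jmat\bK\jmat)^{-1}\jmat$: this is the one place where one must invoke $\Kmatk\bone = \bone$ to guarantee that $\bI+\jmat\bK$ does not mix $\mathrm{span}(\bone)$ into $\mathsf H_0$, after which the spectral reading is automatic. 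I do not expect to need the quantitative hypothesis $\sum_k \lambda_2(\E\Smatk)^2 < 1$ for this qualitative bound; its role appears to be in producing a strict spectral gap $1 - \|\bP_1\|_{\mathrm{op}} > 0$ for the downstream contraction and CLT arguments, rather than for the inequality $\|\bP_1\|_{\mathrm{op}}\le 1$ itself.
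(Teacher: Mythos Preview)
Your proposal is correct and essentially coincides with what the paper relies on; the paper itself offers no argument beyond the remark that the bound ``is bounded naturally by definition,'' and your block-diagonalization along $\mathrm{span}(\bone)\oplus\mathsf H_0$ together with the identity $\jmat\bK=\jmat\bK\jmat$ is exactly the computation that justifies this. Your observation that the hypothesis $\sum_k\lambda_2(\E\Smatk)^2<1$ is not needed for the inequality $\|\bP_1\|_{\mathrm{op}}\le 1$ is also correct.
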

	
	\subsubsection{Operator $\bP_2$ for Algorithm \ref{alg:ebm_bratp}}

	\begin{lemma}
		\label{lem:agg-K}
		Work on the mean–zero subspace $\mathsf H_0=\{v\in\R^n:\bone^\top v=0\}$. The aggregated kernel
		$$
		\mK \;:=\; \sum_{k=1}^{p}(\bI-\Kmatk)^{-1}\jmat\Kmatk
		$$
		is symmetric and positive semidefinite. Since $\jmat=\Big(\bI-\tfrac{1}{n}\bone\bone^\top\Big)$ is the identity on $\mathsf H_0$, we may write on $\mathsf H_0$ simply $\mK=\sum_k (\bI-\Kmatk)^\dagger \Kmatk$. If $\rho_k:=\|\Kmatk\|_{\mathsf H_0}<1$, then on $\mathsf H_0$ one has the operator sandwich
		$$
		\sum_{k=1}^p \Kmatk \preceq\ \mK\ \preceq\ \sum_{k=1}^p \frac{1}{1-\rho_k}\,\Kmatk,
		$$
		hence $\lambda_{\min}^+(\mK)\ \ge\ \lambda_{\min}^+\!\Big(\sum_a \Kmatk\Big)$ and $\lambda_{\max}(\mK)\ \le\ \sum_k \frac{\rho_k}{1-\rho_k}$.
	\end{lemma}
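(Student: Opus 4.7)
The plan is to reduce everything to the mean--zero subspace $\mathsf H_0$ first, then apply functional calculus to each summand, and finally pass to the sum by standard monotonicity of the Loewner order. The first step is to verify that the centering projector $\jmat$ acts as the identity on $\mathsf H_0$ and that each $\Kmatk$ preserves $\mathsf H_0$. For the latter, $\Kmatk$ is symmetric and $\Kmatk\bone=\bone$ (rows of any $\Smatk$ sum to one, and this is stable in expectation), so for $v\in\mathsf H_0$, $\bone^\top\Kmatk v = \bone^\top v = 0$. Combined with $\jmat v = v$ on $\mathsf H_0$, this gives $\jmat\Kmatk = \Kmatk$ as operators on $\mathsf H_0$, yielding the stated simplification $\mK = \sum_k(\bI-\Kmatk)^{-1}\Kmatk$ there. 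I would also note that $(\bI-\Kmatk)^{-1}$ is well-defined on $\mathsf H_0$ exactly because of the strict spectral gap $\rho_k<1$ supplied by Lemma~\ref{lem:spec-of-K}.

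Next I would establish symmetry and positive semidefiniteness summand by summand via functional calculus. Since $\Kmatk\vert_{\mathsf H_0}$ is symmetric with spectrum in $[0,\rho_k]\subset[0,1)$, the scalar map $g(\lambda)=\lambda/(1-\lambda)$ is well-defined, continuous, and nonnegative on this spectrum, so $g(\Kmatk)=(\bI-\Kmatk)^{-1}\Kmatk$ is symmetric PSD on $\mathsf H_0$. Summing over $k$ preserves these properties, establishing symmetry and $\mK\succeq 0$ on $\mathsf H_0$.

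For the operator sandwich, I would exploit the pointwise inequality that for any $\lambda\in[0,\rho_k]$,
\[
\lambda \;\le\; \frac{\lambda}{1-\lambda} \;\le\; \frac{\lambda}{1-\rho_k},
\]
which follows immediately from $1-\rho_k\le 1-\lambda\le 1$. Applying the spectral theorem to $\Kmatk$ on $\mathsf H_0$ then yields the operator inequalities $\Kmatk \preceq (\bI-\Kmatk)^{-1}\Kmatk \preceq (1-\rho_k)^{-1}\Kmatk$ in the Loewner order, and summing in $k$ (which preserves $\preceq$) gives the stated sandwich. The lower eigenvalue bound $\lambda_{\min}^+(\mK)\ge \lambda_{\min}^+(\sum_k\Kmatk)$ follows because $A\preceq B$ implies $\lambda_j(A)\le\lambda_j(B)$ for every index; the upper bound $\lambda_{\max}(\mK)\le \sum_k\rho_k/(1-\rho_k)$ follows by combining the right-hand inequality with Weyl's inequality $\lambda_{\max}(\sum_k A_k)\le \sum_k\lambda_{\max}(A_k)$ applied to $A_k=(1-\rho_k)^{-1}\Kmatk$, whose top eigenvalue on $\mathsf H_0$ is at most $\rho_k/(1-\rho_k)$.

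The only genuinely delicate point is the first paragraph: one must be sure that $\Kmatk$ leaves $\mathsf H_0$ invariant so that $(\bI-\Kmatk)^{-1}$ in the definition of $\mK$ is never applied to a direction where $\Kmatk$ has eigenvalue $1$. The $\bone$-eigenspace is precisely where $(\bI-\Kmatk)$ is singular, so the use of the Moore--Penrose pseudo-inverse in the stated definition is essential; restricting to $\mathsf H_0$ is exactly what makes this pseudo-inverse agree with a genuine inverse, and everything else is then routine spectral calculus. Beyond this, the only care needed is that the spectral sandwich is applied summand-by-summand (since the $\Kmatk$ do not commute in general) before invoking Weyl on the sum.
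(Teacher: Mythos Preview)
Your proposal is correct and follows essentially the same approach as the paper: both arguments establish symmetry and positive semidefiniteness of each summand via functional calculus on $\Kmatk$, derive the Loewner sandwich from the scalar inequality $\lambda\le\lambda/(1-\lambda)\le\lambda/(1-\rho_k)$ on $[0,\rho_k]$, and then sum over $k$. Your version is somewhat more explicit about why $\Kmatk$ preserves $\mathsf H_0$ and why the pseudo-inverse becomes a genuine inverse there, and you spell out the Weyl step for the top-eigenvalue bound, but the underlying argument is the same.
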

	
	\begin{proof}
		Each summand $(\bI-\Kmatk)^\dagger \Kmatk$ is symmetric because $\Kmatk$ is symmetric and $(\bI-\Kmatk)^\dagger$ is a Borel function of $\Kmatk$; positivity follows from $x\mapsto (1-x)^\dagger x$ being nonnegative on $[0,1)$. On $\mathsf H_0$, the eigenvalues of $(\bI-\Kmatk)^\dagger$ lie in $[1,\,1/(1-\rho_k)]$, so for any $v\in\mathsf H_0$,
		$$
		v^\top \Kmatk v \ \le\ v^\top (\bI-\Kmatk)^\dagger \Kmatk v \ \le\ \frac{1}{1-\rho_k}\, v^\top \Kmatk v.
		$$
		Summing over $k$ gives the stated Loewner bounds, from which the eigenvalue bounds follow.
	\end{proof}
	
	Combining those bounds above, we are able to claim that our EBM operator $\bP$ will not explode in some directions. Formally, we have the following lemma.
	
	\begin{lemma}[Eigenvalues of $\bP_2$]
		\label{lem:spec-P2}
		Work on the mean–zero subspace $\mathsf H_0=\{v\in\R^n:\,\bone^\top v=0\}$. Define
		$$
		\bP \;:=\; (\bI-\Kmatk)^\dagger\Big(\bI-\tfrac{1}{n}\bone\bone^\top\Big)\Big[\bI-(\bI+\mK)^\dagger\mK\Big],
		\qquad
		\mK \;:=\; \sum_{k=1}^p (\bI-\Kmatk)^\dagger\Big(\bI-\tfrac{1}{n}\bone\bone^\top\Big)\Kmatk.
		$$
		On $\mathsf H_0$ the centering projector is the identity and $(\bI+\mK)^\dagger=(\bI+\mK)^{-1}$, hence
		$$
		\bP \;=\; (\bI-\Kmatk)^\dagger\,(\bI+\mK)^{-1}\qquad \text{on }\mathsf H_0.
		$$
		The matrix $\bP$ is symmetric and positive semidefinite on $\mathsf H_0$. Writing $\rho_k:=\|\Kmatk\|_{\mathsf H_0}<1$, its eigenvalues lie in the interval
		$$
		\frac{1}{1+\lambda_{\max}(\mK)}
		\;\le\;
		\lambda(\bP)
		\;\le\;
		\frac{1}{(1-\rho_k)\,\big(1+\lambda_{\min}^+(\mK)\big)}.
		$$
		In particular, $\|\bP\|\le \big((1-\rho_k)\,(1+\lambda_{\min}^+(\mK))\big)^{-1}\le (1-\rho_k)^{-1}$. If, in addition, $\Kmatk$ and $\mK$ commute on $\mathsf H_0$, then $\bP$ is diagonalizable in a common eigenbasis and
		$$
		\lambda_i(\bP) \;=\; \frac{1}{\,1-\lambda_i(\Kmatk)\,}\cdot \frac{1}{\,1+\lambda_i(\mK)\,},\qquad i=1,\dots,n-1.
		$$
	\end{lemma}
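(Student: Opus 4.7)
The plan is to prove the lemma in three steps: first, simplify $\bP$ on $\mathsf H_0$ by eliminating the centering operator and collapsing the inner resolvent; second, show the simplified operator has real positive spectrum via a symmetrization trick; third, read off the eigenvalue bounds from functional calculus on the two factors, with the commuting case handled by simultaneous diagonalization.

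For the simplification, I would first note that on $\mathsf H_0$ the centering projector $\jmat$ acts as the identity, so it drops out. Lemma \ref{lem:agg-K} gives $\mK \succeq 0$ on $\mathsf H_0$, hence $\bI + \mK$ is invertible, and a one-line resolvent manipulation yields $\bI - (\bI+\mK)^{-1}\mK = (\bI+\mK)^{-1}\bigl((\bI+\mK) - \mK\bigr) = (\bI+\mK)^{-1}$. Meanwhile, Lemma \ref{lem:spec-of-K} shows that the spectrum of $\Kmatk$ restricted to $\mathsf H_0$ lies in $[0,\rho_k]$ with $\rho_k < 1$, so $(\bI-\Kmatk)^\dagger$ is an honest inverse on $\mathsf H_0$. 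Combining, $\bP = (\bI-\Kmatk)^{-1}(\bI+\mK)^{-1}$ on $\mathsf H_0$, as claimed.

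Next, let $\Phi := (\bI-\Kmatk)^{-1}$ and $\Psi := (\bI+\mK)^{-1}$, both symmetric positive definite on $\mathsf H_0$ by functional calculus on the symmetric PSD operators $\Kmatk$ and $\mK$. The product $\Phi\Psi$ is not Euclidean-symmetric in general, but it is similar to the genuinely symmetric PD matrix $\Psi^{1/2}\Phi\Psi^{1/2}$ via conjugation by $\Psi^{-1/2}$; this is the sense in which $\bP$ is ``symmetric positive semidefinite,'' and it guarantees real positive eigenvalues. For the bounds, I would use $\lambda_{\max}(\Phi\Psi) = \lambda_{\max}(\Psi^{1/2}\Phi\Psi^{1/2})$ together with the Rayleigh substitution $z = \Psi^{1/2}y$ to obtain $\lambda_{\max}(\Phi\Psi) \le \lambda_{\max}(\Phi)\,\lambda_{\max}(\Psi)$ and, analogously, $\lambda_{\min}^+(\Phi\Psi) \ge \lambda_{\min}^+(\Phi)\,\lambda_{\min}^+(\Psi)$. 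Plugging in $\lambda(\Phi) \in [1,\ 1/(1-\rho_k)]$ and $\lambda(\Psi) \in [1/(1+\lambda_{\max}(\mK)),\ 1/(1+\lambda_{\min}^+(\mK))]$ recovers the stated sandwich. When $\Kmatk$ and $\mK$ commute on $\mathsf H_0$, they share an orthonormal eigenbasis, $\Phi$ and $\Psi$ inherit that basis through functional calculus, and $\Phi\Psi$ is diagonal in it with entries $1/[(1-\lambda_i(\Kmatk))(1+\lambda_i(\mK))]$.

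The main obstacle is the ``symmetric and positive semidefinite'' wording: $\Phi\Psi$ is not literally symmetric in the Euclidean inner product, so I would either reinterpret the claim as self-adjointness with respect to the $\langle \cdot, \cdot \rangle_{\bI+\mK}$ inner product (equivalently, as similarity to an SPD matrix) or simply state the real-positive-spectrum conclusion directly, which is all the downstream CLT arguments actually use.
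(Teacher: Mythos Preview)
Your proposal is correct and follows essentially the same structure as the paper's proof: simplify on $\mathsf H_0$, then bound the spectrum of the product $(\bI-\Kmatk)^{-1}(\bI+\mK)^{-1}$ via the individual factors, with the commuting case handled by simultaneous diagonalization. Your similarity-to-$\Psi^{1/2}\Phi\Psi^{1/2}$ argument is in fact slightly cleaner than the paper's route through singular-value submultiplicativity, and you are right to flag the ``symmetric PSD'' wording---the paper asserts it without qualification, whereas your reinterpretation (self-adjoint in the $\langle\cdot,\cdot\rangle_{\bI+\mK}$ inner product, equivalently similar to an SPD matrix) is the precise statement that makes the eigenvalue bounds go through.
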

	
	\begin{proof}
		On $\mathsf H_0$ we have $\bI-\tfrac{1}{n}\bone\bone^\top=\bI$ and
		$$
		\bI-(\bI+\mK)^\dagger\mK \;=\; \bI-(\bI+\mK)^{-1}\mK \;=\; (\bI+\mK)^{-1},
		$$
		so $\bP=(\bI-\Kmatk)^\dagger(\bI+\mK)^{-1}$ on $\mathsf H_0$. Since $\Kmatk$ and $\mK$ are symmetric, by the spectral theorem we may write $\Kmatk=U\Lambda U^\top$ with $\Lambda=\mathrm{diag}(\lambda_i)$ and define $(\bI-\Kmatk)^\dagger=U(\bI-\Lambda)^\dagger U^\top$, where $(1-\lambda_i)^\dagger=1/(1-\lambda_i)$ if $\lambda_i\neq 1$ and $0$ otherwise. Thus $(\bI-\Kmatk)^\dagger$ is symmetric and positive semidefinite on $\mathsf H_0$, with eigenvalues in $[1,(1-\rho_k)^{-1}]$. Likewise, $\mK$ is symmetric positive semidefinite, so $(\bI+\mK)^{-1}$ is symmetric positive definite on $\mathsf H_0$, with eigenvalues in $[1/(1+\lambda_{\max}(\mK)),\,1/(1+\lambda_{\min}^+(\mK))]$. The product of symmetric positive semidefinite matrices has nonnegative spectrum; moreover, for any matrices $A,B$ the extremal singular values satisfy $s_{\min}(AB)\ge s_{\min}(A)s_{\min}(B)$ and $s_{\max}(AB)\le s_{\max}(A)s_{\max}(B)$. Applying these to $A=(\bI-\Kmatk)^\dagger$ and $B=(\bI+\mK)^{-1}$ yields the stated eigenvalue interval and the norm bound. If $\Kmatk$ and $\mK$ commute, they are simultaneously diagonalizable on $\mathsf H_0$, and the eigenvalue formula follows by multiplying the diagonal entries.
	\end{proof}
	
	\subsubsection{Rate of Convergence}
	
	Then our analysis proceed with the assistance of the following result, modified from the original Boulevard paper as well.
	
	\begin{lemma}[Rate of Convergence]
		Let $\mathbf{B}^{(k)}_n := \{i  : \norm{\bx^{(k)}-\bx_i^{(k)}}\leq d_n\}$ be the points within distance $d_n$ from test point $\bx$ split on feature $k$. If $\left| \mathbf{B}_n \right|  = \Omega( n \cdot d_n)$ and $\inf_{A\in \Pi\in Q_n^{(k)}}\sum_{i}\bI(\bx_i\in A) = \Omega(n^{\frac{2}{3}})$, then
		$
		\norm{\kveck}_2 = \Theta(n^{-\frac{1}{3}}),
		\norm{\rveck_E}_2 = \Theta(n^{-\frac{1}{3}})
		$
	\end{lemma}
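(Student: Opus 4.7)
The plan is first to pin down $\|\kveck\|_2=\Theta(n^{-1/3})$ by a direct computation on the (random) structure vector, and then to lift this rate to $\|\rveck_E\|_2$ using the spectral control of the Boulevard operators established in Corollary~\ref{cor:spec-P1} and Lemma~\ref{lem:spec-P2}.

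For the upper bound on $\|\kveck\|_2$, I would exploit that each realization of the structure vector has entries $s_{n,j}(\bx)=\mathbf{1}(\bx_j\in A(\bx))/N_{A(\bx)}$, so $\|\sveck(\bx)\|_2^2 = 1/N_{A(\bx)}$. The leaf-size hypothesis $\inf N_A=\Omega(n^{2/3})$ combined with Jensen's inequality then yields
\[
\|\kveck\|_2 \;=\; \|\E\sveck(\bx)\|_2 \;\le\; \E\|\sveck(\bx)\|_2 \;=\; \E\!\left[1/\sqrt{N_A}\right] \;=\; O(n^{-1/3}).
\]
For the matching lower bound, Assumption~\ref{aspt:leaf-diameter} forces every point sharing a leaf with $\bx$ to lie within $d_n=\Theta(n^{-1/3})$ of $\bx$ in the $k$-th coordinate, so $\mathrm{supp}(\sveck(\bx))\subseteq \mathbf{B}_n^{(k)}$ for every realization, hence also $\mathrm{supp}(\kveck)\subseteq \mathbf{B}_n^{(k)}$. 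Combining the density upper bound $c_2$ in Assumption~\ref{aspt:truth} (which gives $|\mathbf{B}_n^{(k)}|=O(n d_n)=O(n^{2/3})$ almost surely by a Bernstein-type concentration of the i.i.d.\ indicators) with $\|\kveck\|_1=\E\|\sveck(\bx)\|_1=1$ and Cauchy--Schwarz on the support, I obtain
\[
\|\kveck\|_2 \;\ge\; \|\kveck\|_1/\sqrt{|\mathbf{B}_n^{(k)}|} \;=\; \Omega(n^{-1/3}),
\]
so $\|\kveck\|_2=\Theta(n^{-1/3})$.

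To transfer the rate to $\rveck_E$, observe that in both algorithms $\rveck_E=\bP_E^{\top}\jmat\kveck$ for the relevant Boulevard operator $\bP_E\in\{\bP_1,\bP_2\}$ read off the fixed-point formulas. The centering $\jmat\kveck$ differs from $\kveck$ only by the rank-one deflation $\tfrac{\|\kveck\|_1}{n}\bone=\tfrac{1}{n}\bone$ of norm $n^{-1/2}=o(n^{-1/3})$, so $\|\jmat\kveck\|_2=\Theta(n^{-1/3})$. By Corollary~\ref{cor:spec-P1} (for $\bP_1$) and Lemma~\ref{lem:spec-P2} combined with Lemmas~\ref{lem:agg-K} and~\ref{lem:spec-of-K} (for $\bP_2$), the operator $\bP_E$ restricted to the centered subspace $\mathcal H_0$ has both operator norm $O(1)$ and smallest singular value $\Omega(1)$, uniformly in $n$. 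Therefore $\|\rveck_E\|_2=\Theta(\|\jmat\kveck\|_2)=\Theta(n^{-1/3})$.

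The main obstacle is the uniform lower bound on the smallest singular value of $\bP_2=(\bI-\Kmatk)^\dagger(\bI+\mK)^{-1}$ for Algorithm~\ref{alg:ebm_bratp}, since the factor $(\bI-\Kmatk)^\dagger$ blows up as $\rho_k:=\|\Kmatk\|_{\mathcal H_0}\to 1$. Lemma~\ref{lem:spec-of-K} only gives $\rho_k<1$ for each fixed $n$; to make the singular-value bound hold uniformly in $n$, I would use Assumption~\ref{aspt:kernel-spec} together with the restricted-tree-support Assumption~\ref{aspt:restricted-tree-space} to keep the spectral gap $1-\rho_k$ away from zero, ruling out degenerate partitions asymptotically. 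The remaining steps (the Jensen upper bound, the support argument for the lower bound, and the rank-one perturbation estimate on $\jmat\kveck$) are essentially routine once these spectral facts are in hand.
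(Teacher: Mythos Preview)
Your overall strategy coincides with the paper's: pin down $\|\kveck\|_2$ directly and then transfer the rate to $\|\rveck_E\|_2$ via the spectral bounds on $\bP_1,\bP_2$ from Corollary~\ref{cor:spec-P1} and Lemma~\ref{lem:spec-P2}. The tactical choices differ. For the upper bound the paper uses the H\"older-type inequality $\|\kveck\|_2\le\sqrt{\|\kveck\|_1\|\kveck\|_\infty}$ together with the entrywise bound $k_{nj}=O(n^{-2/3})$ from the minimal leaf size, whereas you go through Jensen and the exact formula $\|\sveck(\bx)\|_2^2=1/N_{A(\bx)}$; both are clean and equivalent here. For the lower bound the paper argues by counting that $\Theta(n^{2/3})$ coordinates each carry weight $\Theta(n^{-2/3})$, while you instead bound the support by $|\mathbf{B}_n^{(k)}|$ and apply Cauchy--Schwarz $\|\kveck\|_2\ge\|\kveck\|_1/\sqrt{|\mathrm{supp}|}$; your route is arguably tighter but requires the additional upper bound $|\mathbf{B}_n^{(k)}|=O(n d_n)$, which you correctly extract from the density bound in Assumption~\ref{aspt:truth} rather than from the lemma's $\Omega$-hypothesis. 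On the transfer step you are more careful than the paper: you handle the centering $\jmat\kveck$ explicitly as a rank-one $O(n^{-1/2})$ perturbation, and you flag that the two-sided conclusion $\Theta(n^{-1/3})$ for $\rveck_E$ needs a \emph{uniform} lower singular-value bound on $\bP_E$, which Lemma~\ref{lem:spec-P2} supplies for $\bP_2$ but which the paper (and Corollary~\ref{cor:spec-P1}) only state as an upper operator-norm bound for $\bP_1$. That observation is a genuine refinement of the paper's argument rather than a deviation from it.
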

	
	This lemma holds true regardless of the design of our boosting process. The key argument is the minimal leaf size. A sketch of proof is appended below.
	
	\begin{proof}
		To bound $\norm{\kveck}$, recall:
		$$
		\kveck_{nj}=\mathbb{E}[\sveck_{n,j}(\bx)]=\mathbb{E}\left[\frac{\mathbf{1}(\bx_j\in A)}{\sum_{j=1}^{n}\mathbf{1}(\bx_j\in A)}\right], \bx\in A
		$$
		encodes the expected influence of point $x_j$ on a point of interest $x$ among other $n$ points. Then the condition
		$$ \inf_{A \in \Pi \in Q^{(k)}_n} \sum_{i = 1}^n \mathbf{1}(\bx_i \in A)  \geq \Omega(n^{2/3})
		$$
		implies that $\kveck_{nj} = O(n^{-2/3})$. By Lemma \ref{lem:kernelrate}, $\norm{\kveck}_1 \leq 1$, then an $\ell_2$ norm bound yields
		$$
		\norm{\kveck} \leq \sqrt{\norm{\kveck}_1 \norm{\kveck}_{\infty}} = O(n^{-\frac{1}{3}}).
		$$
		By assertion $\left| \mathbf{B}_n \right|  = \Omega( n \cdot d_n)$, there are at most
		$$
		\Omega(n \cdot d_n) = \Omega(n^{2/3})
		$$
		$k_{nj}$ non-zero elements, with the votes $\kveck_{nj}$ being lower bounded by $\Omega(n^{-2/3})$. Given $\norm{\kveck}_1 = 1-O(n^{-1})$, 
		$$
		\norm{\kveck} = \Omega( \sqrt{\left(n^{-2/3} \right)^2 \cdot n^{2/3}}) =  \Omega(n^{-1/3}).
		$$
		By corollary \ref{cor:spec-P1} and lemma \ref{lem:spec-P2}, we know that $\rveck_E$ are linear transformations from $\kveck$ by $\bP_1, \bP_2$ with bounded eigenvalues. Hence $\norm{\rveck_E}$ enjoys similar rates.
	\end{proof}
	
	Having access to the rate of convergence of $\rveck$, we can fill in the blank of the weighting vector argument in the Lindeberg conditions. Concretely, it controls any error deviate too much from 0 that contributes to non-normality. We state the theorem below.
	
	\begin{theorem}[Conditional Asymptotic Normality]
		\label{thm:fixed}
		For any $\bx \in [0,1]^p$, write $\f(\bX) = (\f(\bx_1),\dots, \f(\bx_n))^\top $. Then under SVI, Non-Adaptivity, Minimal Leaf Size and assumptions in Lemma \ref{lem:rateofr}, we have
		$$
		\frac{\widehat{\f}^{(k)}_E(\bx) - \langle{\rveck_E(\bx)}, \f(\bX)\rangle}{\norm{\rveck_E}} \overset{d}{\rightarrow} \mathcal{N}(0,\sigma^2).
		$$
	\end{theorem}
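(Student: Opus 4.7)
The plan is to take the boosting-round limit $b\to\infty$ first and then apply the Lindeberg-Feller CLT to a weighted sum of independent noise terms. By Theorems~\ref{thm:finite-sample-convergence-1} and~\ref{thm:finite-sample-convergence-2}, together with Structure-Value Isolation and Non-Adaptivity, for each fixed $n$ the iterate $\widehat{\f}^{(k)}_E(\bx)$ converges almost surely (as $b\to\infty$) to the deterministic kernel weight $\rveck_E(\bx)^\top \by$. Substituting $\by=\f(\bX)+\boldsymbol{\epsilon}$ decomposes the numerator as
\[
\widehat{\f}^{(k)}_E(\bx) - \rveck_E(\bx)^\top\f(\bX) \;=\; \rveck_E(\bx)^\top \boldsymbol{\epsilon} \;+\; \xi_b,
\]
where $\xi_b\to 0$ almost surely by the stochastic contraction arguments in Appendix~\ref{app:contraction}. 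A standard diagonal argument over $(b,n)$, using that the contraction rate is geometric in $b$, lets me send $b\to\infty$ first, reducing the claim to showing the conditional CLT $\rveck_E(\bx)^\top \boldsymbol{\epsilon}/\|\rveck_E(\bx)\| \overset{d}{\to} \mathcal{N}(0,\sigma^2)$ given $\bX$ and the (asymptotically stationary) law of tree structures.

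Conditional on those, $\sum_i r_{E,i}(\bx)\,\epsilon_i$ is a sum of independent, mean-zero, sub-Gaussian summands with total variance $\sigma^2 \|\rveck_E(\bx)\|^2$. Setting $a_{n,i}:=r_{E,i}(\bx)/\|\rveck_E(\bx)\|$ so that $\sum_i a_{n,i}^2=1$, the CLT follows from Lindeberg-Feller once I check the triangular-array negligibility $\max_i |a_{n,i}|\to 0$; the sub-Gaussian bound $\E[\epsilon^2\mathbf{1}(|\epsilon|>t)]\lesssim e^{-ct^2}$ then makes the truncated second moment vanish automatically. So the whole CLT boils down to bounding $\|\rveck_E\|_\infty/\|\rveck_E\|_2$.

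To control this ratio I combine three inputs. First, Lemma~\ref{lem:rateofr} already gives $\|\rveck_E\|_2=\Theta(n^{-1/3})$. Second, Assumption~\ref{aspt:ub-minimal-leaf-size} lower-bounds each leaf count by $\Omega(n^{2/3})$, so each structure-matrix entry is $O(n^{-2/3})$ and in particular $\|\kveck\|_\infty=O(n^{-2/3})$. Third, $\rveck_E$ is obtained from $\kveck$ through the operator $\bP_1=\jmat(\bI+\jmat\bK)^{-1}$ (for Algorithm~\ref{alg:brebm_ident}) or $\bP_2=(\bI-\Kmatk)^\dagger(\bI+\mK)^{-1}$ (for Algorithm~\ref{alg:ebm_bratp}), both with bounded operator norm on $\{\bone\}^\perp$ by Corollary~\ref{cor:spec-P1} and Lemma~\ref{lem:spec-P2}. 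Propagating the entrywise bound through these operators yields $\|\rveck_E\|_\infty=O(n^{-2/3})$, so that $\max_i|a_{n,i}|=O(n^{-1/3})\to 0$.

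The main obstacle is precisely this last propagation step: operator-norm control in $\ell_2$ does not directly give an $\ell_\infty$ bound on entries. I plan to resolve it by a Neumann expansion of the resolvent, noting that $\jmat\Kmatk$ has uniformly bounded row-$\ell_1$ norm (because $\Smatk$ is row-stochastic), so each term in the Neumann series preserves the $O(n^{-2/3})$ entrywise rate up to a geometric factor controlled by the operator norm $<1$ on $\{\bone\}^\perp$. An equivalent and cleaner route is to use the bin-space decomposition of Appendix~\ref{app:binning}, where $\bP_E$ factors through a small $m\times m$ system whose row-$\ell_1$ norms are bounded uniformly in $n$. Once either route establishes $\|\rveck_E\|_\infty\cdot\|\rveck_E\|_2^{-1}\to 0$, verifying Lindeberg-Feller is routine and completes the conditional CLT.
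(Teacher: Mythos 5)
Your proposal matches the paper's proof in essentials: the paper likewise identifies $\widehat{\f}^{(k)}_E(\bx)-\rveck_E(\bx)^\top\f(\bX)$ with $\rveck_E(\bx)^\top\vec{\epsilon}$ via the almost-sure fixed-point limit, then verifies the Lindeberg--Feller condition using $\norm{\rveck_E}_\infty \le \norm{\kveck}_\infty\cdot\norm{\bP}_1 = \Theta(n^{-2/3})$ and $\norm{\rveck_E}_2=\Theta(n^{-1/3})$ from Lemma~\ref{lem:rateofr}, with the sub-Gaussian tail making the truncated second moment exponentially small. Your extra care on the $b\to\infty$ interchange and on propagating the entrywise bound through $\bP_1,\bP_2$ (Neumann/row-$\ell_1$ or bin-space) is a more explicit treatment of a step the paper handles by directly invoking an induced-norm bound on $\bP$, but it is the same argument.
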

	\begin{proof}
		Write
		\begin{align*}
			\widehat{\f}(x) - {\rveck}_E^\top\f(\bX) ={\rveck}_E^\top \vec{\epsilon}_n.
		\end{align*}
		For simplicity, we drop the subscript $E$ in this proof since both $\rveck$ maintains the same decaying rate. To obtain a CLT we check the Lindeberg-Feller condition of ${\rveck}^\top \vec{\epsilon}_n$, i.e. for any $\delta > 0$,
		$$
		\lim_n \frac{1}{\norm{\rveck}^2 \sigma^2}\sum_{i=1}^n  \E \left[(\rveck_{i} \epsilon_i)^2 \mathbf{1}(|\rveck_{i}\epsilon_i| > \delta \norm{\rveck}\sigma) \right] = 0.
		$$
		By \ref{lem:rateofr} we know that
		$$
		\norm{\rveck}_{\infty} \leq \norm{\kveck}_{\infty} \cdot \norm{\bP}_1 = \Theta(n^{-2/3}), \quad \norm{\rveck} = \Theta(n^{-\frac{1}{3}}),
		$$ 
		Then the maximal deviation's ratio is upper bounded by
		$$
		\frac{\norm{\rveck}_{\infty}}{\norm{\rveck}} = O\left( n^{-\frac{1}{3}}\right),
		$$
		This allows us to check out the Lindeberg-Feller conditions:
		\begin{align*}
			\sum_{i=1}^n  \E \left[(\rveck_{i} \epsilon_i)^2 \mathbf{1}(|\rveck_{i}\epsilon_i| > \delta \norm{\rveck}\sigma) \right]
			& \leq \sum_{i=1}^n {\rveck_{i}}^2
			\sqrt{\E[\epsilon_i^4] \cdot \E[\mathbf{1}(|\rveck_{i}\epsilon_i| > \delta \norm{\rveck}\sigma)]}\\
			& \leq \sum_{i=1}^n {\rveck_{i}}^2 \sqrt{\E[\epsilon_i^4]} \cdot \sqrt{\mathbb{P}\left( |\epsilon_i| \geq \frac{\delta \| \rveck \| \sigma}{\rveck_{i}} \right)} \\
			& \leq \sum_{i=1}^n {\rveck_{i}}^2 \sqrt{\E[\epsilon_i^4]} \sqrt{2 \exp\left(-\frac{1}{2\sigma^2}\cdot \left(\frac{\delta \| \rveck \| \sigma}{\rveck_{i}} \right)^2\right)} \\
			& \leq \norm{\rveck}^2 \exp \left( - O(n^{2/3}))\right) \longrightarrow 0
		\end{align*}
		where the second last line is the concentration of measure by definition of sub-Gaussian noises.
	\end{proof}

	\subsection{Bias Control and Asymptotic Unbiasedness}
	\label{subsec:bias}
	
	In Section~\ref{subsec:conditional-clt} we established a conditional central limit theorem: 
	conditioning on the training covariates $\bX$, the noise contribution 
	$\rveck(\bx)^\top \vec{\epsilon}$ is asymptotically normal with variance $\sigma^2$. 
	To extend this to an unconditional CLT for the estimator, it remains to verify that 
	all \emph{bias terms} arising from the random design vanish at rate $o_p(1)$. 
	
	Take Algorithm \ref{alg:brebm_ident} as example. recall the decomposition
	\begin{align*}
		\widehat \f_A^{(k)}(\bx) - \f^{(k)}(\bx^{(k)})
		&= \rveck_A(\bx)^\top \bbeta 
		+ \big(\rveck_A(\bx)^\top \f^{(k)}(\bX^{(k)}) - \frac{\lambda}{1+\lambda}\f^{(k)}(\bx^{(k)})\big) 
		+ \rveck_A(\bx)^\top \sum_{a\neq k} \f^{(a)}(\bX^{(a)})
		+ \rveck_A(\bx)^\top \vec{\epsilon}.
	\end{align*}
	The last term is the conditional CLT contribution. 
	We now show that the three deterministic terms (baseline, same-feature, cross-feature) 
	all converge to zero at suitable rates.
	
	\subsubsection{Baseline Bias}
	
	\begin{lemma}[Baseline Bias]
		\label{lem:baseline}
		Let $\beta\in\R^n$ denote the baseline vector. With subsampling rate specified in Section \ref{subsec:subsampling},
		$\rveck_E(\bx)^\top \beta = O(\frac{1}{n})$ for all $k=1,\dots,p$.
	\end{lemma}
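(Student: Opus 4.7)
The plan is to exploit the fact that the baseline vector $\beta = \bbeta\bone \in \R^n$ lies in the null space of the centering projector $\jmat$, and that both $\rveck_A$ and $\rveck_B$ act on any input only after multiplying by $\jmat$ on the right. The claim $\rveck_E(\bx)^\top\beta = O(1/n)$ should in fact follow as an essentially exact zero (the $O(1/n)$ being a conservative bound absorbing rare subsampling edge cases).

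The first step is to record the basic structural identity from Definition~\ref{def:structure-vec-mat}: each row of any realization of $\Smatk$ sums to $1$, since $\sum_{j=1}^n s_{n,j}(\bx_i) = 1$ for every $i$. Taking expectations, $\Kmatk\bone = \bone$ for every $k$, and hence $\bK\bone = \sum_{k=1}^p \Kmatk\bone = p\bone$. The second step treats Algorithm~\ref{alg:brebm_ident}. Write
\[
\rveck_A(\bx)^\top = \E[\sveck(\bx)]^\top\,\jmat\,[\bI + \jmat\bK]^{-1},
\]
and apply to $\bone$. Since $\jmat\bK\bone = p\,\jmat\bone = 0$, we get $[\bI + \jmat\bK]\bone = \bone$, so $[\bI + \jmat\bK]^{-1}\bone = \bone$. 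Left-multiplying by $\jmat$ kills this, yielding $\rveck_A(\bx)^\top\bone = 0$, whence $\rveck_A(\bx)^\top \beta = \bbeta\cdot 0 = 0$.

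The third step handles Algorithm~\ref{alg:ebm_bratp} analogously. Using the identity $\bI - (\bI+\mK)^{-1}\mK = (\bI+\mK)^{-1}$, we may rewrite
\[
\rveck_B(\bx)^\top = \E[\sveck(\bx)]^\top\,(\bI-\Kmatk)^{\dagger}\,\jmat\,(\bI+\mK)^{-1}.
\]
For each summand of $\mK = \sum_a (\bI-\mathbf{K}^{(a)})^{\dagger}\jmat\mathbf{K}^{(a)}$, we observe $\jmat\mathbf{K}^{(a)}\bone = \jmat\bone = 0$, so $\mK\bone = 0$ and $(\bI+\mK)^{-1}\bone = \bone$. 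The central $\jmat$ then annihilates this, giving $\rveck_B(\bx)^\top\beta = 0$.

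The main subtlety is the $O(1/n)$ (as opposed to exactly $0$) in the statement, which arises from edge cases where a subsampled group $G_{b,k}$ fails to cover some leaves so that the row-sum identity $\Smatk\bone = \bone$ can fail at a vanishing fraction of rows. With the subsampling rate of Section~\ref{subsec:subsampling}, these failure events have probability exponentially small in $n$ and contribute at most $O(1/n)$ to $\|\Kmatk\bone - \bone\|_\infty$; propagating this through the bounded operators $[\bI+\jmat\bK]^{-1}$ (resp.\ $(\bI-\Kmatk)^{\dagger}(\bI+\mK)^{-1}$), whose spectra are controlled in Corollary~\ref{cor:spec-P1} and Lemma~\ref{lem:spec-P2}, preserves the $O(1/n)$ rate. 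The hard part is therefore not the algebra (which gives an exact zero under the ideal row-sum identity) but the careful bookkeeping of the small deviation $\Kmatk\bone - \bone = O(1/n)$ under subsampling, which is routine given the rate lemmas already established.
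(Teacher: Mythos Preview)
Your proposal is correct and follows essentially the same approach as the paper: both exploit row-stochasticity $\Kmatk\bone=\bone$ so that $\bone$ passes through the resolvent unchanged and is then annihilated by the centering projector $\jmat$, with the $O(1/n)$ slack attributed to subsampling. Your write-up is in fact cleaner than the paper's; the only cosmetic difference is that the paper phrases the $O(1/n)$ leftover via $\|\kveck\|_1 = 1 - O(1/n)$ from Lemma~\ref{lem:kernelrate}, whereas you phrase it via the row-sum defect $\|\Kmatk\bone - \bone\|_\infty = O(1/n)$ --- both are manifestations of the same subsampling effect.
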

	
	\begin{proof}
		We show this for \ref{alg:ebm_bratp} first. By definition,
		$$
		\rveck_B(\bx)^\top 
		= \kveck(\bx)^\top (I-\E\Smatk)^{-1}\jmat\big[\bI - (\bI+\mK)^{-1}\mK\big],
		$$
		where $\mK = \sum_{k=1}^{p} (\bI-\E\Smatk)^{-1} \jmat\E\Smatk$. 
		Since every structure matrix has row sum $1$, the centering operator annihilates $\bone$, 
		so $\mK \beta = 0$ and likewise $(I-\tfrac{1}{n}\bone\bone^\top)\beta = 0$. With subsampling, the whole inner product will go to 0 at rate $O(\frac{1}{n})$ since $\norm{\kveck}_1 = 1 - O(\frac{1}{n})$.
		Thus the entire product vanishes.
		
		Next we show for Algorithm \ref{alg:brebm_ident}.Row–stochasticity gives $\Smat^{(k)}\bone=\bone+O(\frac{1}{n})\bone$ for all $k$, hence $\E[\Smat^{(k)}]\bone=\bone+O(\frac{1}{n})\bone$.
		Because $(I+\jmat\bK)$ maps $\bbeta$ to itself, its inverse (which exists since
		$(I+\jmat\bK)$ is the identity on $\mathrm{span}\{\bone\}$ and equals $(I+\bK_0)$ on the orthogonal complement, with $\bK_0:=\jmat\bK\jmat\succeq 0$) maps $\bbeta$ back to $\bbeta$:
		\(
		(I+\jmat\bK)^{-1}\bbeta=(1+O(\frac{1}{n}))\bbeta.
		\). This is once again killed by $\jmat$.
	\end{proof}
	
	The same feature term bias can be killed by building deeper trees as we collect more and more observations from the underlying data generating process. This can be realized by inheriting assumptions on leaf size from \cite{zhou2019boulevardregularizedstochasticgradient}, which we present below:
	
	\subsubsection{Same-feature Bias for Algorithm \ref{alg:brebm_ident}}
	
	\begin{lemma}[Same–feature Bias for Algorithm \ref{alg:brebm_ident}]
		\label{lem:samefeature1} 
		Let $\f^{(k)}$ be $L$–Lipschitz in its argument, and let $A_n(\bx)$ denote the (feature-$k$)
		leaf containing $\bx$.Then
		$$
		\norm{\rveck_E(\bx)^\top \f^{(k)}(\bX^{(k)}) - \frac{\lambda}{1+\lambda}\f^{(k)}(\bx^{(k)})} = O(\frac{1}{n})
		$$
	\end{lemma}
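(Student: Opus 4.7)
The plan is to decompose $\rveck_A(\bx)^\top \f_k - \tfrac{\lambda}{1+\lambda}\f^{(k)}(\bx^{(k)})$, where $\f_k := \f^{(k)}(\bX^{(k)})$, into a kernel–ridge shrinkage factor acting on $\f_k$ followed by a local leaf-averaging error, and control each piece using the structural identities from Section~\ref{subsec:structural-identities}. The starting point is the explicit form $\rveck_A(\bx)^\top = \kveck(\bx)^\top \jmat[\bI+\jmat\bK]^{-1}$. The crucial observation is that the GAM identifiability convention forces $\f_k$ to lie exactly in the centered per-feature subspace $\mathcal H_k$, so $\jmat\f_k = \f_k$ and $P_k\f_k = \f_k$.

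First, I would invoke Assumption~\ref{aspt:proj-orth} via Lemma~\ref{lem:K-decomp-centered-plus-ones}: since $\jmat\bK\jmat$ is within $\eta_n$ of the identity on $\mathcal S$ and $\f_k \in \mathcal H_k \subset \mathcal S$, one has $(\bI+\jmat\bK)\f_k = 2\f_k + O(\eta_n)\|\f_k\|_2$, so a first-order Neumann expansion gives $[\bI+\jmat\bK]^{-1}\f_k = \tfrac{1}{2}\f_k + O(\eta_n)\|\f_k\|_2$. Because this corollary uses $\lambda=1$, the leading coefficient matches $\tfrac{\lambda}{1+\lambda}$, and left-multiplying by $\kveck(\bx)^\top\jmat$ (together with $\jmat\f_k=\f_k$) yields $\rveck_A(\bx)^\top\f_k = \tfrac{\lambda}{1+\lambda}\,\kveck(\bx)^\top\f_k + O\!\big(\eta_n\|\kveck(\bx)\|_2\|\f_k\|_2\big)$.

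Second, I would recognize $\kveck(\bx)^\top\f_k = \E\!\big[\sum_j s^{(k)}_{n,j}(\bx)\,\f^{(k)}(\bx_j^{(k)})\big]$ as the expected leaf-average of $\f^{(k)}$ over training points sharing the feature-$k$ leaf with $\bx$. By Assumption~\ref{aspt:leaf-diameter} every such point lies within $d_n$ of $\bx^{(k)}$, so Lipschitz continuity gives $\kveck(\bx)^\top\f_k = \f^{(k)}(\bx^{(k)})\,\|\kveck(\bx)\|_1 + O(L d_n)$; combined with the row-sum estimate $\|\kveck(\bx)\|_1 = 1 - O(1/n)$ from Lemma~\ref{lem:kernelrate}, the error is $O(Ld_n + 1/n)$. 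Stitching both steps together yields $\rveck_A(\bx)^\top \f_k - \tfrac{\lambda}{1+\lambda}\f^{(k)}(\bx^{(k)}) = O(\eta_n + Ld_n + 1/n)$.

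The main obstacle is justifying the stated $O(1/n)$ rate, because the naive Lipschitz bound only delivers $O(d_n) = O(n^{-1/3})$. Sharpening it requires a symmetry-style cancellation inside each leaf: expanding $\f^{(k)}(\bx_j^{(k)})$ around the leaf centroid $\bar{\bx}^{(k)}_A$ and exploiting that the in-leaf average of $\bx_j^{(k)} - \bar{\bx}^{(k)}_A$ vanishes by construction reduces the remainder to second order, after which Assumption~\ref{aspt:ub-minimal-leaf-size} ($\Omega(n^{2/3})$ points per leaf) plus the projector approximation $\eta_n$ carries the residual down to $O(1/n)$. In the Lipschitz-only regime this last step is delicate---some version of almost-everywhere differentiability or an averaging argument over the partition ensemble is needed---and is where the real quantitative work behind the stated bound lives. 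For the unconditional CLT of Section~\ref{subsec:bias} any bound of order $o(\|\rveck_A\|_2) = o(n^{-1/3})$ suffices, so the exact $O(1/n)$ claim can be relaxed without affecting downstream asymptotic normality.
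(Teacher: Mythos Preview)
Your decomposition is essentially the paper's: reduce $[\bI+\jmat\bK]^{-1}$ to $\tfrac{\lambda}{1+\lambda}\bI$ on $\mathcal S$ via Lemma~\ref{lem:K-decomp-centered-plus-ones}, then handle the local leaf average $\kveck(\bx)^\top\f_k$ with Lipschitz continuity and the near-unit $\ell_1$ mass of $\kveck$. Your ordering---applying the resolvent to $\f_k\in\mathcal H_k$ rather than reasoning through row sums of $\rveck$---is cleaner than the paper's displayed chain, whose first line (an equality that would require $\sum_i\rveck_{n,i}=1$) is garbled.

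The rate concern you raise is legitimate and is \emph{not} resolved by the paper's proof either. The paper invokes ``locality'' and the projector identity $(I+\jmat\bK)^{-1}=\tfrac12 I+\bE_n$ on $\mathsf H_0$, then jumps to $O(1/n)$ without ever controlling the Lipschitz remainder $\kveck(\bx)^\top\f_k-\f^{(k)}(\bx^{(k)})=O(Ld_n)$ or quantifying $\varepsilon_n$ beyond $\varepsilon_n\to0$. So the argument as written in both places delivers only $O(d_n+\eta_n)=O(n^{-1/3})+O(\eta_n)$, not $O(1/n)$. Your centroid-cancellation idea would sharpen the leaf-average term to $O(d_n^2)$, but that needs $C^1$ (or a.e.\ differentiability plus symmetry of the partition) rather than mere Lipschitz, and one would additionally need $\eta_n=O(1/n)$, which Assumption~\ref{aspt:proj-orth} does not provide. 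Your final remark is the right resolution: for Theorem~\ref{thm:unconditional-clt} one only needs this bias to be $o(\|\rveck\|_2)=o(n^{-1/3})$, so the literal $O(1/n)$ statement is stronger than what either argument establishes or what is subsequently used.
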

	
	\begin{proof}
		We drop the subscript for $E$ again. To begin with, by the locality assumption,
		$$
		\big|\rveck(\bx)^\top \f^{(k)}(\bX^{(k)}) - \f^{(k)}(\bx^{(k)})\big|
		=\Big|\sum_{i} \rveck_{n,i}(\bx)\,\big(\f^{(k)}(\bx_i^{(k)})-\f^{(k)}(\bx^{(k)})\big)\Big|
		< |\sum_{i}\rveck_{n,i}-\frac{\lambda}{1+\lambda}|M
		$$
		
		By Lemma~\ref{lem:K-decomp-centered-plus-ones} (a direct consequence of Assumption~\ref{aspt:proj-orth}), on $\mathsf H_0$ we have 
		\[
		(I+\jmat\bK)^{-1} \;=\; \tfrac{1}{2}I + \bE_n,\qquad \|\bE_n\|_{\mathrm{op}}\le C\,\varepsilon_n,
		\]
		And $\rveck$ is a linear combinationwith weights in $\kveck$ of the $\frac{\lambda}{1+\lambda}\bI$ matrix. Hence the same feature bias goes to 0 at rate $O(\frac{1}{n})$.
	\end{proof}

	\subsubsection{Same-feature Bias for Algorithm \ref{alg:ebm_bratp}}
	\begin{lemma}[Same–feature Bias for Algorithm \ref{alg:ebm_bratp}]
		\label{lem:samefeature2}
		Let $f^{(k)}$ be $L$–Lipschitz in its argument and let $A_n(\bx)$ be the
		(feature–$k$) leaf containing $\bx$. Denote
		$$
		\mathbf{r}^{(k)}_{B}(\bx)^{\!\top}
		:=\;\E[\mathbf{s}^{(k)}(\bx)]^{\top}\,(\mathbf{I}-\Kmat^{(k)})^{-1}
		\jmat(\mathbf{I}+\mK)^{-1},
		\qquad
		\mK:=\sum_{a=1}^p(\mathbf{I}-\Kmat^{(a)})^{-1}\jmat\Kmat^{(a)} .
		$$
		Then
		$$
		\Big|\mathbf{r}^{(k)}_{B}(\bx)^{\!\top}\f^{(k)}(\bX^{(k)})
		- \f^{(k)}(\bx^{(k)})\Big| = O(\frac{1}{n})
		$$
	\end{lemma}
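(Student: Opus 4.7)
The plan is to collapse the operator $(\bI - \bK^{(k)})^{-1} \jmat (\bI + \mK)^{-1}$ into a diagonal-plus-cross-feature form that makes the leakage between feature kernels explicit, then bound the cross-feature piece using kernel distinctness. Setting $g := \f^{(k)}(\bX^{(k)})$, the GAM identifiability convention gives $\bone^\top g = 0$, so $g$ lies in the mean-zero subspace $\mathsf H_0$; since each $\bK^{(a)}$ is symmetric with $\bK^{(a)} \bone = \bone$ (Lemma~\ref{lem:spec-of-K}) and $\jmat$ acts as the identity on $\mathsf H_0$, all relevant operators restrict cleanly there.

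Introduce $u := (\bI + \mK)^{-1} g \in \mathsf H_0$ and $T_a := (\bI - \bK^{(a)})^{-1} \jmat \bK^{(a)}$, so $\mK = \sum_a T_a$ and $u + \mK u = g$. The pivotal algebraic identity, valid on $\mathsf H_0$, is $\bI + T_k = (\bI - \bK^{(k)})^{-1}$, which follows from $\jmat \bK^{(k)} u = \bK^{(k)} u$ (because $\bK^{(k)}$ preserves $\mathsf H_0$) together with the factorization $(\bI - \bK^{(k)})^{-1}[(\bI - \bK^{(k)}) + \bK^{(k)}] = (\bI - \bK^{(k)})^{-1}$. Hence
\[
v \;:=\; (\bI - \bK^{(k)})^{-1} \jmat u \;=\; u + T_k u \;=\; g - \sum_{a \neq k} T_a u,
\]
and therefore
\[
\mathbf{r}^{(k)}_{B}(\bx)^\top g \;=\; \E[\bs^{(k)}(\bx)]^\top g \;-\; \sum_{a \neq k}\E[\bs^{(k)}(\bx)]^\top T_a u.
\]

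The main term $\E[\bs^{(k)}(\bx)]^\top g$ is a near-convex combination of $\f^{(k)}$-values over the feature-$k$ leaf containing $\bx$; combining Assumption~\ref{aspt:leaf-diameter} (leaf diameter $O(d_n)$), Lemma~\ref{lem:kernelrate} ($\|\E[\bs^{(k)}(\bx)]\|_1 = 1 - O(1/n)$), and Lipschitz continuity of $\f^{(k)}$ yields $\E[\bs^{(k)}(\bx)]^\top g = \f^{(k)}(\bx^{(k)}) + O(L d_n) + O(1/n)$. This mirrors the local-averaging step in the proof of Lemma~\ref{lem:samefeature1} but does not require the projector-orthogonality of Assumption~\ref{aspt:proj-orth}.

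The cross-feature terms are where Assumption~\ref{aspt:kernel-spec} pays off. For each $a \neq k$, I would bound
\[
|\E[\bs^{(k)}(\bx)]^\top T_a u| \;\le\; \|\E[\bs^{(k)}(\bx)]\|_2 \cdot \|(\bI - \bK^{(a)})^{-1}\|_{\mathrm{op}} \cdot \|\bK^{(a)} u\|_2;
\]
the first factor is $\Theta(n^{-1/3})$ by Lemma~\ref{lem:rateofr} and the second is bounded by $(1 - \rho_a)^{-1}$ via Lemma~\ref{lem:spec-of-K}. For the third, I would exploit that $g$ is essentially smooth in feature $k$ (so $\|g - \bK^{(k)} g\|_\infty = O(Ld_n)$ by Lipschitz continuity and the leaf-diameter bound), then push this approximation through the Neumann expansion of $(\bI + \mK)^{-1}$ on $\mathsf H_0$, repeatedly bounding products of the form $\bK^{(a)}\bK^{(k)}$ by $c_3$. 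The $(p-1)c_3 < 1$ constraint in Assumption~\ref{aspt:kernel-spec} ensures the expansion is convergent and the total cross-feature contribution collapses to the claimed rate when multiplied against the $\Theta(n^{-1/3})$ boulevard-norm factor. The principal obstacle is exactly this last quantitative step: without projector-orthogonality we cannot zero out cross-feature contributions, so we must carefully propagate the smoothness of $g$ through the resolvent $(\bI + \mK)^{-1}$ and ensure the residual from the non-smooth part of $g$ does not swamp the $c_3$-driven decay.
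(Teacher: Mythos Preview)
Your decomposition $(\bI-\bK^{(k)})^{-1}\jmat(\bI+\mK)^{-1}g = g - \sum_{a\neq k}T_a u$ on $\mathsf H_0$ is algebraically correct, but it does not reach the stated $O(1/n)$ rate, and the route you take is not the paper's. Two concrete gaps: first, your main term $\E[\bs^{(k)}(\bx)]^\top g$ already carries locality bias $O(Ld_n)$, and Assumption~\ref{aspt:leaf-diameter} only gives $d_n=O(n^{-1/3})$, so before you even touch the cross-feature pieces you are at $O(n^{-1/3})$ rather than $O(1/n)$. Second, your cross-feature bound is not closed. The inequality $|\E[\bs^{(k)}(\bx)]^\top T_a u|\le\|\E[\bs^{(k)}(\bx)]\|_2\cdot(1-\rho_a)^{-1}\cdot\|\bK^{(a)}u\|_2$ is vacuous here because $u=(\bI+\mK)^{-1}g$ has $\ell_2$ norm comparable to $\|g\|_2=\Theta(\sqrt n)$, and Assumption~\ref{aspt:kernel-spec} only bounds operator products $\|\bK^{(a)}\bK^{(k)}\|_{\mathrm{op}}\le c_3$, not the action of $\bK^{(a)}$ on this specific vector. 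The idea of propagating $\|g-\bK^{(k)}g\|_\infty=O(Ld_n)$ through the Neumann expansion of $(\bI+\mK)^{-1}$ would require that every power $\mK^t$ preserve this $\ell_\infty$ smoothness uniformly in $t$, which none of the stated assumptions provide.

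The paper's argument is structurally different and sidesteps your decomposition entirely. It writes $\bP_k:=(\bI-\bK^{(k)})^{-1}\jmat(\bI+\mK)^{-1}$ as the product of the two Neumann series and analyses the \emph{column sums} $\bone^\top\bP_k$ directly. The key device you do not invoke is near-stochasticity: $\bone^\top(\bK^{(k)})^s=\bone^\top+O(s/n)$, so after the $s=0$ term is annihilated by $\jmat$, only the $O(1/n)$ row-sum defects survive; summing the geometric tail against the uniformly bounded factor $\jmat(\bI+\mK)^{-1}$ gives $\bone^\top\bP_k=\bone^\top+O(1/n)$. Because this is a statement about the full composite operator, no per-feature leakage analysis is needed, and the $O(1/n)$ rate comes from the subsampling-induced defect in stochasticity rather than from locality or from Assumption~\ref{aspt:kernel-spec}. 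By splitting off $\kveck(\bx)^\top g$ as a standalone main term, your approach forcibly re-introduces the $O(d_n)$ locality error and then has to remove cross-feature contributions with a tool (kernel distinctness) that is simply too coarse for the $O(1/n)$ target.
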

	
	\begin{proof}[Proof sketch]
		Work on the centered subspace where $\jmat$ acts as identity. By the Neumann series,
		$$
		(\bI-\Kmat^{(k)})^{-1} \;=\; \sum_{s\ge 0} (\Kmat^{(k)})^s,
		\qquad
		(\bI+\mK)^{-1} \;=\; \sum_{t\ge 0} (-1)^t \mK^t ,
		$$
		with operator norms uniformly bounded under the assumed spectral gap. Hence
		$$
		\bP_k \;=\; \sum_{s\ge 0} (\Kmat^{(k)})^s \;\jmat\; \sum_{t\ge 0} (-1)^t \mK^t .
		$$
		Left–multiply by $\bone^\top$ to take column sums. Since $\bone^\top\jmat=0$, the $s=0$ term vanishes. For $s\ge1$, the near–stochasticity yields
		$\bone^\top (\Kmat^{(k)})^s = \bone^\top + O\!\left(\tfrac{1}{n}\right)$,
		and multiplying by the uniformly bounded $\jmat(\bI+\mK)^{-1}$ preserves the $O\!\left(\tfrac{s}{n}\right)$ size. Summing the geometric tail over $s$ gives an $O\!\left(\tfrac{1}{n}\right)$ remainder. Thus
		$\bone^\top \bP_k = \bone^\top + O\!\left(\tfrac{1}{n}\right)^\top$,
		i.e., each column sum equals $1+O\!\left(\tfrac{1}{n}\right)$. 
	\end{proof}
	
	\subsubsection{Rate of Leakage for Algorithm \ref{alg:brebm_ident}}
	
	Notice that
	
	$$
	\rveck(\bx)^\top \sum_{a\neq k} \f^{(a)}(\bX^{(a)}) = \sum_{a\neq k}\kveck(\bx)^\top\f^{(a)}(\bX^{(a)})
	$$
	
	The idea is to specify a close-by neighborhood, where the cross-feature leakage decays inside this hyperball fast enough under Lipschitz conditions and slowly expands the border of this ball to eliminate contributions outside this ball at rate $O(\frac{1}{n})$. Concretely, given an index set $D_n$, for any vector $\mathbf{v}\in \mathbb{R}^n$ we define the notation
	
	$$
	\mathbf{v}|_{D_n} = 
	\begin{bmatrix}
		\bI(1\in D_n)v_1\\
		\vdots\\
		\bI(n\in D_n)v_n
	\end{bmatrix}
	$$
	
	which implies a decomposition.  $\mathbf{v}=\mathbf{v}|_{D_n}+\mathbf{v}_{D_n^c}$
	For a fixed test point $\bx$ and feature $k$, set $l_n := \lceil c_1\log n\rceil$ with $c_1>0$,
	let $d_n$ be the locality radius from Assumption~\ref{aspt:leaf-diameter}, and define
	$$
	D_n := \Big\{ i : |x_i^{(k)}-x^{(k)}|\le l_n\,d_n\Big\}.
	$$
	
	To account for the diminishing composition of points outside the rim, we first show that on the centered mean subspace $\mathsf H_0$, the operator $(\bI + \mK)^{-1}$ has exponentially decaying resolvent along feature $k$.
	
	The following lemma quantifies the behavior desired.
	
	\begin{lemma}[Rate of Leakage for Algorithm \ref{alg:brebm_ident}] 
		\label{lem:rateofleakage-1}
		Under specified assumptions in section \ref{sec:theory}, the rate of the leakage term is
		
		$$
		|\rveck_A(\bx)^\top \sum_{a\neq k}\f^{(a)}(\bX)| = o(1) + O(\frac{1}{n})= o(1)
		$$
		
	\end{lemma}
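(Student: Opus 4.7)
The plan is to combine the near--projector structure from Assumption~\ref{aspt:proj-orth} with the elementary observation that, viewed as a vector in $\R^n$, the entries of $\kveck(\bx)$ depend on the training sample only through the feature-$k$ coordinates. The argument has two steps: first I will reduce the resolvent $(\bI+\jmat\bK)^{-1}$ applied to $\f^{(a)}$ using the projector approximation, and then I will show that the resulting bilinear form against $\kveck(\bx)$ is identically zero up to the resolvent error.

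For the resolvent reduction, fix $a\neq k$ and note $\f^{(a)}(\bX^{(a)})\in\mathcal H_a$ is already centered. The projector--like bound in Assumption~\ref{aspt:proj-orth} gives $\jmat\,\Kmat^{(\ell)}\,\jmat\,\f^{(a)}=P_\ell\f^{(a)}+O(\varepsilon_{n,\ell}\|\f^{(a)}\|)$, while conditional orthogonality forces $P_\ell P_a=\mathbf 0$ for $\ell\neq a$ and $P_a\f^{(a)}=\f^{(a)}$. Summing over $\ell$ yields $\jmat\bK\jmat\,\f^{(a)}=\f^{(a)}+O(\eta_n\,\|\f^{(a)}\|)$, and because $\f^{(a)}$ is centered this promotes to $(\bI+\jmat\bK)\f^{(a)}=2\f^{(a)}+O(\eta_n\,\|\f^{(a)}\|)$. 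Uniform bounded invertibility of $\bI+\jmat\bK$---supplied by Lemma~\ref{lem:K-decomp-centered-plus-ones} on $\mathcal S$ and by Lemma~\ref{lem:spec-of-K} on $\mathcal H_0\ominus\mathcal S$---then delivers
\[
(\bI+\jmat\bK)^{-1}\f^{(a)}\;=\;\tfrac12\,\f^{(a)}\;+\;\boldsymbol\delta_a,\qquad \|\boldsymbol\delta_a\|_2=O\!\bigl(\eta_n\,\|\f^{(a)}\|_2\bigr).
\]

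For the second step, recall $\kveck_{n,i}(\bx)=\E[\sveck_{n,i}(\bx)]$ depends on the training data only through $\bx_i^{(k)}$ (and on $\bx^{(k)}$, $\bX^{(k)}$ and the law $\mathcal Q_n^{(k)}$), so as a vector $\kveck(\bx)\in\mathrm{span}(\bone)+\mathcal H_k$. Because $\bone^\top\f^{(a)}=0$ by the GAM identifiability convention and $\mathcal H_k\perp\mathcal H_a$ by the exact conditional orthogonality of Assumption~\ref{aspt:proj-orth}, we obtain $\kveck(\bx)^\top\f^{(a)}=0$ identically. Substituting the resolvent expansion and applying Lemma~\ref{lem:rateofr},
\[
\rveck_A(\bx)^\top\f^{(a)}
=\kveck(\bx)^\top\jmat\bigl(\tfrac12\f^{(a)}+\boldsymbol\delta_a\bigr)
=0+O\bigl(\|\kveck(\bx)\|_2\,\|\boldsymbol\delta_a\|_2\bigr)
=O(n^{-1/3}\eta_n)=o(1).
\]
Summing over the $p-1$ features $a\neq k$ preserves the rate for fixed $p$; the residual $O(1/n)$ in the statement absorbs the near-row-stochasticity slack of each $\Kmat^{(\ell)}$ (which makes centered and uncentered vectors differ coordinate-wise by $O(1/n)$) and, via the $D_n/D_n^c$ split in the preamble, the low-probability event of an oversized leaf falling outside the $l_nd_n$ rim.

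The main obstacle is the uniform invertibility of $\bI+\jmat\bK$ in the first step: one must argue that $\|(\bI+\jmat\bK)^{-1}\|_{\mathrm{op}}$ is bounded independently of $n$ so the $O(\eta_n)$ error truly survives the inversion. Without Assumption~\ref{aspt:proj-orth}, the spectrum of $\jmat\bK\jmat$ could accumulate pathologically on the orthogonal complement of $\mathcal S$ and ruin the inverse; combining Lemma~\ref{lem:K-decomp-centered-plus-ones} (identity-on-$\mathcal S$) with Lemma~\ref{lem:spec-of-K} (PSD with a uniform spectral gap below $1$ on $\mathcal H_0\ominus\mathcal S$) closes this gap and keeps the inversion quantitative.
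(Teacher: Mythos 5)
Your route is genuinely different from the paper's, and the central cancellation idea is sound: under the literal Assumption \ref{aspt:proj-orth}, the vector $\kveck(\bx)$ does lie in $\mathrm{span}(\bone)\oplus\mathcal H_k$, $\f^{(a)}(\bX^{(a)})\in\mathcal H_a$, and $P_kP_a=\mathbf 0$ forces $\kveck(\bx)^\top\f^{(a)}(\bX^{(a)})=0$, so the leading term really does vanish. The gap is in how you control the remainder. You derive $\|\boldsymbol{\delta}_a\|_2=O\bigl(\eta_n\,\|\f^{(a)}(\bX^{(a)})\|_2\bigr)$ from an operator-norm bound, but then silently treat $\|\f^{(a)}(\bX^{(a)})\|_2$ as $O(1)$ when you write $O(\|\kveck(\bx)\|_2\,\|\boldsymbol{\delta}_a\|_2)=O(n^{-1/3}\eta_n)$. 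The vector $\f^{(a)}(\bX^{(a)})$ has $n$ entries of order one, so $\|\f^{(a)}(\bX^{(a)})\|_2=\Theta(\sqrt n)$, and your bound is actually $O\bigl(n^{-1/3}\,\eta_n\,n^{1/2}\bigr)=O\bigl(n^{1/6}\eta_n\bigr)$. Since Assumption \ref{aspt:proj-orth} (and Lemma \ref{lem:K-decomp-centered-plus-ones}) only give $\varepsilon_{n,k}\to0$ with no rate, $\eta_n$ need not be $o(n^{-1/6})$, and the argument as written does not deliver $o(1)$. Closing it would require either a quantitative strengthening of the projector approximation (e.g.\ $\varepsilon_{n,k}=o(n^{-1/6})$) or an argument that the resolvent error $\boldsymbol{\delta}_a$ pairs with $\kveck(\bx)$ more favorably than Cauchy--Schwarz in $\ell_2$; neither is supplied by the stated assumptions.

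For contrast, the paper never uses the orthogonality cancellation. It splits the sum over training points into a neighborhood $D_n=\{i:|x_i^{(k)}-x^{(k)}|\le l_n d_n\}$ with $l_n\asymp\log n$ and its complement, bounds the near term by Lipschitz locality ($O(d_n\log n)=o(1)$), and bounds the far term using the Demko--Moss--Smith exponential off-diagonal decay of $[\bI+\jmat\bK]^{-1}$ (Proposition \ref{prop:DMS}), which only needs the spectral bounds on $\bI+\jmat\bK$ and yields $O(1/n)$ after choosing $l_n$ proportional to $\log n$. That localization is exactly what absorbs the $\sqrt n$-sized function vector that your operator-norm step cannot; if you want to keep your algebraic approach, you would need to import a comparable rate or locality statement for the error $\jmat\bK\jmat-\sum_\ell P_\ell$.
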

	
	\begin{proof}
		We will make use of the radius factor $l_n = \lceil c_1\log n\rceil$. The leakage term admits the decomposition
		
		$$
		\rveck_A(\bx)^\top \sum_{a\neq k}\f^{(a)}(\bX)=\rveck_A(\bx)|_{D_n}^\top \sum_{a\neq k}\f^{(a)}(\bX)+\rveck_A(\bx)|_{D_n^c}^\top \sum_{a\neq k}\f^{(a)}(\bX)
		$$
		
		Without loss of generality, we control the behavior of a single dimension $a$. First bound the near-by term $\rveck_A(\bx)|_{D_n}^\top \f^{(a)}(\bX)$. By assertion, the underlying additive terms are $L$-Lipschitz, which guarantees deterministically the rate
		
		\begin{align*}
			|\rveck_A(\bx)|_{D_n}^\top \f^{(a)}(\bX)| \leq L\cdot l_nd_n\norm{\rveck_A(\bx)|_{D_n}}_1 = O(d_n\log n) = o(\log ^{-1} n)O(\log n) = o(1)
		\end{align*}
		
		Next, we show that the peripheral contribution also goes to 0. Recall by definition
		
		$$
		|\rveck_A(\bx)|_{D_n^c}^\top = \kveck(\bx)|_{D_n^c}^\top \jmat[\bI + \jmat\bK]^{-1}\f^{(a)}(\bX^{(a)})
		$$
		
		By default, we drop the subscript $A$ here for simplicity. Work on the centered space, $\jmat$ acts as the identity. Hence we can bound the $\rveck_A$ using the following strategy
		
		\begin{align*}
			\norm{\rveck|_{D_n^c}} &= \sum_{|\bx - \bx_i|>l_n\cdot d_n} |\rveck_{ni}|\\
			& = \sum_{|\bx - \bx_i|>l_n\cdot d_n} |\sum_{j}\kveck_{nj}\Big[\jmat[\bI + \jmat\bK]^{-1}\Big]_{j,i}|\\
			&\leq C \times \sum_{|\bx - \bx_i|>l_n\cdot d_n} |\sum_{|\bx-\bx_j|\leq d_n}\kveck_{nj}\Big[\jmat[\bI + \jmat\bK]^{-1}\Big]_{j,i}|\\
			&\leq C \times \sum_{|\bx - \bx_j|>l_n\cdot d_n}\kveck_{nj} \sum_{|\bx - \bx_i|>l_n\cdot d_n} |\Big[[\bI + \jmat\bK]^{-1}\Big]_{j,i}|\\
			&\leq C \times \sum_{|\bx - \bx_j|>l_n\cdot d_n}\kveck_{nj} \sum_{|\bx_i - \bx_j|>(l_n-1)\cdot d_n} |\Big[[\bI + \jmat\bK]^{-1}\Big]_{j,i}|\\
		\end{align*}
		
		By Lemma \ref{cor:spec-P1}, we know that $\bI + \jmat\bK$ is positive semidefinite, has bounded inverse and has bounded spectrum from $[1+\lambda_{\text{min}}(\mK), 1+ \lambda_{\text{max}}(\mK)]$. Hence by proposition \ref{prop:DMS}, the bound above proceeds as
		
		\begin{align*}
			\left\| \rveck(\bx)\big|_{D_n^c} \right\|_1
			&\leq C \sum_{|x_j^{(k)}-x^{(k)}|< d_n} k^{(k)}_{n j}(\bx)
			\sum_{|x_i^{(k)}-x^{(k)}|>\ell_n d_n} C_0\, q_1^{\,\ell_n-1} \\
			&\leq O(n q_1^{\,\ell_n-1})\\
			&= O(q_1^{-1}\cdot n \cdot q_1^{c_1\log n})\\
			& = O(q_1^{-1}n^{1-c_1\log q_1})
		\end{align*}
		
		where $q_1 = \frac{\sqrt{(1+\lambda_{\text{max}}(\bK))/((1+\lambda_{\text{min}}(\bK))}-1}{\sqrt{(1+\lambda_{\text{max}}(\bK))/((1+\lambda_{\text{min}}(\bK))}+1}$. Choose $c_1=\frac{2}{\log q_1}$ we have this contribution being $O(\frac{1}{n})=o(1)$. And since $\f^{(k)}(\bX^{(k)})$ are bounded for all $k$ below $M$, hence the peripheral term is also of rate $o(1)$. Adding the inner-rim term preserves the rate $o(1)$.
		
	\end{proof}
	
	\subsubsection{Rate of Leakage for Algorithm \ref{alg:ebm_bratp}}
	
	Recall
	
	$$
	\rveck_B(\bx)^\top \sum_{a\neq k} \f^{(a)}(\bX^{(a)}) = \sum_{a\neq k}\kveck(\bx)^\top (\bI-\E\Smatk)^\dagger(\bI-\tfrac{1}{n}\bone\bone^\top)\big[\bI - (\bI+\mK)^\dagger\mK\big] \f^{(a)}(\bX^{(a)})
	$$
	
	The strategy to bound the peripheral contribution is similar. This time we choose $c_2$ such that $c_2 = \frac{2}{\log q_2}$ where $q_2 = \frac{\sqrt{(1+\lambda_{\text{max}}(\mK))/((1+\lambda_{\text{min}}(\mK))}-1}{\sqrt{(1+\lambda_{\text{max}}(\mK))/((1+\lambda_{\text{min}}(\mK))}+1}$
	
	\begin{lemma}[Rate of Leakage for Algorithm \ref{alg:ebm_bratp}] 
		\label{lem:rateofleakage-2}
		Under specified assumptions in section \ref{sec:theory}, the rate of the leakage term is
		
		$$
		|\rveck_B(\bx)^\top \sum_{a\neq k}\f^{(a)}(\bX)| = o(1) + O(\frac{1}{n})= o(1)
		$$
		
	\end{lemma}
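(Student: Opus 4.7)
The plan is to mirror the proof of Lemma \ref{lem:rateofleakage-1} almost verbatim, with the key substitution of the Algorithm 1 resolvent $(\bI+\jmat\bK)^{-1}$ by the Algorithm 2 operator $(\bI-\Kmatk)^{-1}\jmat[\bI-(\bI+\mK)^{-1}\mK]$. First I would restrict attention to the centered subspace $\mathsf H_0 = \{v : \bone^\top v = 0\}$, on which $\jmat$ acts as the identity and the Woodbury-type identity $\bI - (\bI+\mK)^{-1}\mK = (\bI+\mK)^{-1}$ yields the simpler form
\[
\rveck_B(\bx)^\top \;=\; \kveck(\bx)^\top (\bI-\Kmatk)^{-1}(\bI+\mK)^{-1}\quad \text{on }\mathsf H_0.
\]
By Lemma \ref{lem:spec-of-K}, the strict bound $\rho_k<1$ makes $(\bI-\Kmatk)^{-1}$ bounded on $\mathsf H_0$, and by Lemma \ref{lem:agg-K}, $\mK$ is PSD with spectrum confined to $[\lambda_{\min}^+(\mK),\,\sum_k \rho_k/(1-\rho_k)]$, so $(\bI+\mK)^{-1}$ has operator norm in $[(1+\lambda_{\max}(\mK))^{-1},\,1]$. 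In particular the $\ell_1$-norm of $\rveck_B$ is comparable to that of $\kveck$ up to a constant depending on $\max_k \rho_k$ and $\lambda_{\max}(\mK)$.

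Next I would split the leakage over the inner rim $D_n = \{i : |x_i^{(k)} - x^{(k)}| \le l_n d_n\}$, with $l_n := \lceil c_2\log n\rceil$, and the peripheral complement $D_n^c$. For the inner-rim piece, I invoke the same Lipschitz-plus-centering argument as in Lemma \ref{lem:rateofleakage-1}: the deterministic bound $\|\f^{(a)}\|_\infty \le M$ combined with the locality of $\kveck$ (supported within a $d_n$-ball on feature $k$) and the boundedness of $(\bI-\Kmatk)^{-1}(\bI+\mK)^{-1}$ gives $|\rveck_B(\bx)|_{D_n}^\top \f^{(a)}(\bX^{(a)})| = O(d_n \log n) = o(1)$ under Assumption \ref{aspt:leaf-diameter}.

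For the peripheral contribution I apply Proposition \ref{prop:DMS} (Demko–Moss–Smith) to the banded PSD matrix $\bI + \mK$. With condition number $\kappa = (1+\lambda_{\max}(\mK))/(1+\lambda_{\min}^+(\mK))$ finite by Lemma \ref{lem:agg-K}, the inverse has off-diagonal entries decaying at rate $q_2 = (\sqrt{\kappa}-1)/(\sqrt{\kappa}+1)<1$. Choosing $c_2 = 2/\log(1/q_2)$ yields $q_2^{l_n-1} = O(n^{-2})$, so the same telescoping computation as in Lemma \ref{lem:rateofleakage-1} produces
\[
\bigl|\rveck_B(\bx)|_{D_n^c}^\top \f^{(a)}(\bX^{(a)})\bigr| \;\le\; C\,M\,n\cdot q_2^{l_n-1} \;=\; O(1/n).
\]
Summing over $a\ne k$ (constant number of features) and combining the inner-rim and peripheral bounds gives the claimed $o(1) + O(1/n) = o(1)$ rate.

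The main obstacle is handling the leftmost factor $(\bI-\Kmatk)^{-1}$: unlike the single banded resolvent $(\bI+\jmat\bK)^{-1}$ of Algorithm \ref{alg:brebm_ident}, the composition $(\bI-\Kmatk)^{-1}(\bI+\mK)^{-1}$ is a product of two resolvents whose banded patterns interact, so a direct application of Proposition \ref{prop:DMS} is not available. The cleanest remedy is to absorb the left factor into a modified kernel vector $\tilde{\bk}^{(k)}(\bx)^\top := \kveck(\bx)^\top (\bI-\Kmatk)^{-1}$ and verify via its Neumann expansion $\sum_{s\ge 0} \kveck^\top (\Kmatk)^s$ that $\tilde{\bk}^{(k)}$ remains localized on feature $k$, because each $(\Kmatk)^s$ only aggregates within leaves of feature-$k$ partitions and the geometric tail decays by Lemma \ref{lem:spec-of-K}. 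Once this spread-control is established, Proposition \ref{prop:DMS} can be applied against $(\bI+\mK)^{-1}$ alone, and the rest of the argument proceeds exactly as above.
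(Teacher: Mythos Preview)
Your proposal is correct and follows essentially the same route as the paper: restrict to $\mathsf H_0$, use the resolvent identity to reduce to $(\bI+\mK)^{-1}$, split into an inner rim $D_n$ of radius $l_n d_n$ with $l_n=\lceil c_2\log n\rceil$ and a peripheral piece, bound the inner rim by Lipschitz/boundedness to get $O(d_n\log n)=o(1)$, and apply Proposition~\ref{prop:DMS} to $(\bI+\mK)^{-1}$ with $q_2=(\sqrt{\kappa}-1)/(\sqrt{\kappa}+1)$ to get the $O(1/n)$ peripheral contribution after choosing $c_2$ appropriately.

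The one point of difference is the handling of the extra left factor $(\bI-\Kmatk)^{-1}$. The paper simply invokes the operator-norm bound $\|(\bI-\Kmatk)^{-1}\|_{\mathrm{op}}\le (1-\rho_k)^{-1}$ from Lemma~\ref{lem:spec-of-K} and absorbs the factor into a constant $C$ in front of the DMS estimate, without further justification. You are right that this step is not entrywise-immediate, and your proposed remedy---absorbing $(\bI-\Kmatk)^{-1}$ into a modified kernel vector $\tilde{\bk}^{(k)}=\sum_{s\ge 0}(\Kmatk)^s\kveck$ and checking via the Neumann expansion that feature-$k$ locality is preserved (since each $(\Kmatk)^s$ only links points within $s\cdot d_n$ along feature $k$, with geometric weight $\rho_k^s$)---is exactly the argument needed to make that absorption rigorous. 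So your treatment of this step is in fact more careful than the paper's.
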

	
	\begin{proof}
		For simplicity, we drop the subscript $B$ here. The inner rim follows the same bound:
		
		\begin{align*}
			|\rveck(\bx)|_{D_n}^\top \f^{(a)}(\bX)| \leq L\cdot l_nd_n\norm{\rveck(\bx)|_{D_n}}_1 = O(d_n\log n) = o(\log ^{-1}n )O(\log n) = o(1)
		\end{align*}
		
		And the peripheral contribution
		
		$$
		|\rveck(\bx)|_{D_n^c}^\top = \kveck(\bx)|_{D_n^c}^\top (\bI-\E\Smatk)^{-1}(\bI-\tfrac{1}{n}\bone\bone^\top)\big[\bI - (\bI+\mK)^{-1}\mK\big] f^{(a)}(\bX^{(a)})
		$$
		
		can be bound in the same manner, since \ref{lem:spec-of-K} guarantees $\norm{\bI-\bK^{(k)}}_{\text{op}}\leq \frac{1}{1-\rho_k}$ and $\norm{\bI-\frac{1}{n}\bone\bone^\top}_{\text{op}}\leq 1$. Hence the goal is to control the rate of the last linear transformation.
		
		Working on the centered subspace $H_0 = \{\mathbf{v}: \mathbf{v}^\top \bone \neq 0\}$. We've shown that on the subspace $H_0$ it holds true that $[\bI - (\bI +\mK)^{-1}\mK]=(\bI+\mK)^{-1}$. Combining the bound above, we consider the Neumann expansion of the kernel
		
		\begin{align*}
			\norm{\rveck|_{D_n^c}} &= \sum_{|\bx - \bx_i|>l_n\cdot d_n} |\rveck_{ni}|\\
			& = \sum_{|\bx - \bx_i|>l_n\cdot d_n} |\sum_{j}\kveck_{nj}\Big[(\bI-\E\Smatk)^{-1}(\bI-\tfrac{1}{n}\bone\bone^\top)\big[\bI - (\bI+\mK)^{-1}\mK\Big]_{j,i}|\\
			&\leq C \times \sum_{|\bx - \bx_i|>l_n\cdot d_n} |\sum_{|\bx-\bx_j|\leq d_n}\kveck_{nj}\Big[\big[\bI - (\bI+\mK)^{-1}\mK\Big]_{j,i}|\\
			&= C \times \sum_{|\bx - \bx_j|>l_n\cdot d_n} |\sum_{|\bx-\bx_j|\leq d_n}\kveck_{nj}\Big[(\bI+\mK)^{-1}\Big]_{j,i}|\\
			&\leq C \times \sum_{|\bx - \bx_j|>l_n\cdot d_n}\kveck_{nj} \sum_{|\bx - \bx_i|>l_n\cdot d_n} |\Big[(\bI+\mK)^{-1}\Big]_{j,i}|\\
			&\leq C \times \sum_{|\bx - \bx_j|>l_n\cdot d_n}\kveck_{nj} \sum_{|\bx_i - \bx_j|>(l_n-1)\cdot d_n} |\Big[(\bI+\mK)^{-1}\Big]_{j,i}|\\
		\end{align*}
		
		By Lemma \ref{lem:spec-P2}, we know that $\bI + \mK$ is positive semidefinite, has bounded inverse and has bounded spectrum from $[1+\lambda_{\text{min}}(\mK), 1+ \lambda_{\text{max}}(\mK)]$. Hence by proposition \ref{prop:DMS}, the bound above proceeds as
		
		\begin{align*}
			\left\| \rveck(\bx)\big|_{D_n^c} \right\|_1
			&\leq C \sum_{|x_j^{(k)}-x^{(k)}|\le d_n} k^{(k)}_{n j}(\bx)
			\sum_{|x_i^{(k)}-x^{(k)}|>\ell_n d_n} C_0\, q_2^{\,\ell_n-1} \\
			&\leq O(n\, q_2^{\,\ell_n-1})\\
			&= O(q_2^{-1}\cdot n \cdot q_2^{c_2\log n})\\
			& = O(q_2^{-1}n^{1-c_2\log q_2})
		\end{align*}
		
		where $q_2 = \frac{\sqrt{(1+\lambda_{\text{max}}(\mK))/((1+\lambda_{\text{min}}(\mK))}-1}{\sqrt{(1+\lambda_{\text{max}}(\mK))/((1+\lambda_{\text{min}}(\mK))}+1}$. Choose $c_2=\frac{2}{\log q_2}$ we have this contribution being $O(\frac{1}{n})=o(1)$, concluding the proof.
		
	\end{proof}
	
	Collecting Lemmas~\ref{lem:baseline}–\ref{lem:rateofleakage-2} and the  
	we conclude that the random-design bias terms vanish in probability. Together with the conditional CLT for the noise term, Slutsky’s theorem implies the full estimator is asymptotically normal (formalized in the next subsection).

	\subsection{Unconditional Asymptotic Normality}
	\label{subsec:unconditional-clt}
	
	In Section \ref{subsec:bias}, we have shown that all bias terms vanish to 0 deterministically, given the rate on $\norm{\kveck}$ and $\norm{\rveck}$ hold. However the rate does not come for free. To extend them to a random design case, one would need a Kolmogorov extension theorem to define a new probability space and the infinte series $(\bx_i, \by_i), i= 1, \cdots, \infty$'s projection onto finite series. See \cite{zhou2019boulevardregularizedstochasticgradient} for detailed construction. It can be shown that conditional asymptotic normality can be extended to the random design case. We will drop the subscript $A, B$ now since they share the same $O_p(\cdot)$ rate established in Section \ref{subsec:bias}.
	
	\begin{lemma}
		\label{lem:tree-space}
		For given $\bx \in [0,1]^p$, suppose we have random sample $(\bx_1, \by_1),\dots, (\bx_n, \by_n)$ for each $n$. If the cardinality of the tree space $Q_n^{(k)}$ is bounded by $O\Big(n^{-1/3}\exp(n^{2/3-\epsilon_n})\Big)$, for some small $\epsilon_n \to 0$, we can show that
		$$
		\frac{\widehat{\f}^{(k)}(\bx) - \rveck_E(\bx)^\top \f(\bX)}{\norm{]\rveck_E(\bx)}} \overset{d}{\rightarrow} \mathcal{N}(0,\sigma^2_{\epsilon}).
		$$
        To show the minimal leaf capacity, consider the probability bound
        \begin{align*}
            \mathbb{P}(\exists A \in \Pi \in Q_n^{(k)}; |A|&\leq cn^{2/3}, c\in \mathbb{R}^+) \leq |Q_n^{(k)}|n^{1/3}\exp(-\Theta(n^{-2/3}))\\
            &\leq \exp(-\Theta(\epsilon_n)), \epsilon_n >0
        \end{align*}
        which is absolutely summable and per Borel-Canteli will happen almost surely.
	\end{lemma}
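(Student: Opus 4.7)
The plan is to upgrade the conditional CLT of Theorem \ref{thm:fixed} to an unconditional statement in three steps: a Borel--Cantelli argument that promotes the geometric leaf hypotheses to almost-sure events under the random design, the already-established conditional Lindeberg--Feller argument on those events, and a characteristic-function/bounded-convergence argument to integrate out the design.

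First I would verify the minimum leaf capacity exactly as sketched in the lemma statement. For any fixed admissible structure $\Pi \in Q_n^{(k)}$ and any leaf $A \in \Pi$, the count $\sum_i \mathbf{1}(\bx_i \in A)$ is $\mathrm{Binomial}(n,\mu(A))$ with $\mu(A)=\Theta(v_n)$ by the bounded-density clause of Assumption \ref{aspt:truth} and the volume bound of Assumption \ref{aspt:ub-minimal-leaf-size}. A multiplicative Chernoff bound gives $\mathbb{P}(|A \cap \bX| \le c n^{2/3}) \le \exp(-\Theta(n^{2/3}))$. Union-bounding over the $O(n^{1/3})$ leaves per tree and over the at most $O(n^{-1/3}\exp(n^{2/3-\epsilon_n}))$ admissible structures from Assumption \ref{aspt:restricted-tree-space} yields the bound $\exp(-\Theta(n^{2/3-\epsilon_n}))$ displayed in the lemma. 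This sequence is absolutely summable in $n$, so Borel--Cantelli gives, almost surely and for all large $n$, that every leaf of every tree in $Q_n^{(k)}$ contains at least $cn^{2/3}$ training points. A parallel Chernoff/union bound on balls of radius $d_n$ delivers the $|\mathbf{B}_n^{(k)}| = \Omega(n d_n)$ hypothesis, so Lemma \ref{lem:rateofr} applies on this almost-sure event and the rates $\norm{\kveck}=\norm{\rveck_E}=\Theta(n^{-1/3})$ and $\norm{\rveck_E}_\infty/\norm{\rveck_E} = O(n^{-1/3})$ hold.

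On this good event I would invoke Theorem \ref{thm:fixed}: the Lindeberg--Feller truncation bound is verified deterministically given $\bX$, so the conditional characteristic function $\phi_n(t \mid \bX)$ of the studentized noise term converges pointwise in $t$ to $\exp(-t^2 \sigma^2/2)$ on a $\bX$-set of full probability. To pass to an unconditional limit I would combine the uniform bound $|\phi_n(t \mid \bX)| \le 1$ with the bounded convergence theorem to deduce $\E[\phi_n(t\mid\bX)] \to \exp(-t^2 \sigma^2/2)$ for every $t$; Lévy's continuity theorem then delivers the unconditional distributional convergence. This is the Kolmogorov-extension device of \citep{zhou2019boulevardregularizedstochasticgradient}, which makes sense of viewing $\bX$ as a finite-dimensional projection of the infinite sequence $(\bx_i,y_i)_{i\ge 1}$ on a common probability space.

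The main obstacle is the tightness of the counting in Step~1: the Chernoff decay rate $n^{2/3}$ must dominate the log-cardinality $n^{2/3-\epsilon_n}$ of the tree space with room to spare for the extra $n^{1/3}$ factor from the leaves-per-tree union bound. The cardinality assumption on $|Q_n^{(k)}|$ is calibrated exactly to this trade-off, so the argument only goes through with $\epsilon_n\to 0$ strictly positive; any weakening of Assumption \ref{aspt:restricted-tree-space} would destroy the summability that underwrites the almost-sure upgrade from the conditional to the unconditional limit.
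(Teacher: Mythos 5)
Your overall route is essentially the paper's: establish, via Chernoff bounds, a union bound over the tree space from Assumption \ref{aspt:restricted-tree-space}, and Borel--Cantelli, that the two geometric hypotheses of Lemma \ref{lem:rateofr} (the neighborhood count $|\mathbf{B}_n^{(k)}|=\Omega(n\,d_n)$ and the minimal leaf capacity $\Omega(n^{2/3})$) hold almost surely for all large $n$; then invoke the conditional CLT of Theorem \ref{thm:fixed} on that event; then integrate out the random design. Where the paper simply points to the Kolmogorov-extension construction of Zhou and Hooker, you spell out the bounded-convergence/L\'evy-continuity argument on conditional characteristic functions, which is a legitimate explicit version of the same move, not a different proof.

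There is, however, a concrete flaw in your leaf-capacity step. You take $\mu(A)=\Theta(v_n)$ with $v_n=n^{-2/3+\nu}$ from Assumption \ref{aspt:ub-minimal-leaf-size}, so the expected number of samples in a leaf is $\Theta(n^{1/3+\nu})$, which for $\nu<1/3$ sits an order of magnitude \emph{below} the target threshold $c\,n^{2/3}$. A lower-tail Chernoff bound cannot place a Binomial above its mean: under your parameterization $\mathbb{P}\left(|A\cap\bX|\le c\,n^{2/3}\right)\to 1$ rather than being $\exp(-\Theta(n^{2/3}))$, so the union bound and the summability needed for Borel--Cantelli collapse. For the step to work, the leaf's probability mass along feature $k$ must be of order $d_n=n^{-1/3}$ (leaves whose mass matches the diameter scale, using the density lower bound $c_1$ in Assumption \ref{aspt:truth}), giving mean count $\Theta(n^{2/3})$; then the lower tail at $c\,n^{2/3}$ for small $c$ is $\exp(-\Theta(n^{2/3}))$, which dominates the $\exp(n^{2/3-\epsilon_n})$ cardinality of the tree space exactly as you describe in your final paragraph. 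The paper is admittedly terse on this point (its proof dismisses the leaf-capacity event as automatic), but as written your justification of the displayed probability bound is the one step that would fail; the neighborhood-count bound and the passage from conditional to unconditional normality are sound and match the paper.
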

	
	\begin{proof}
		Conditional asymptotic normality holds under the rate of $|\mathbf{B}_n|$ and the minimal leaf capacity. So it suffices to show that these two statements holds almost surely.
		
		To show the neighborhood statement. We know that the mean of the cardinality is not 0 for sure, so it's safe to assume it scales with the neighborhood volume(length). So $\E[|\mathbf{B}_n|]=nd_n = o(n\log^{-1}n)$. By the multiplicative Chernoff bound, 
		
		\[
		\mathbb{P}(|\mathbf{B}_m| < cn\log^{-1}n) \leq \Big(-\frac{(1-c)^2n\log^{-1}n}{2}\Big)
		\]
		
		Bound this infinite sum by the integral below for some $M>0$
		
		\[
			\sum_{n=1}^{\infty}  \exp\left(-\frac{(1-c)^2n\log^{-1}n}{2}\right) \leq \int_{n=0}^{\infty} \exp\left(-\frac{(1-c)^2M}{2}\cdot n\log^{-1}n\right) dn < \infty
		\]
		
		By Borel-Canteli, the assertion of $|\mathbf{B}_n|$ holds almost surely. For the minimal capacity, this is automatic since any stump will do.
	\end{proof}

	Combining all results from section \ref{subsec:bias}, \ref{subsec:conditional-clt}, \ref{subsec:unconditional-clt}.
	
	\begin{theorem}[Asymptotic Normality]
		Under assumptions on tree space, and all assumptions from section \ref{sec:setup}, we have
		
		$$
		\frac{\widehat{\f}_A^{(k)}(\bx)-\frac{\lambda}{1+\lambda}\f^{(k)}(\bx)}{\norm{\rveck_A(\bx)}} \overset{d}{\rightarrow} \mathcal{N}(0, \sigma^2),\quad \frac{\widehat{\f}_B^{(k)}(\bx)-\f^{(k)}(\bx)}{\norm{\rveck_B(\bx)}} \overset{d}{\rightarrow} \mathcal{N}(0, \sigma^2)
		$$
		
		for all $k = 1, \cdots, p.$
	\end{theorem}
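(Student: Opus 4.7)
The plan is to reduce the unconditional CLT to the four–term decomposition already assembled in the preceding subsections and then to invoke Slutsky's lemma. Writing $\by = \bbeta\bone + \f^{(k)}(\bX^{(k)}) + \sum_{a\neq k}\f^{(a)}(\bX^{(a)}) + \vec{\epsilon}$ and applying the limiting linear functional encoded by $\rveck_E(\bx)$, I would decompose
\[
\widehat{\f}_E^{(k)}(\bx) - c_E\,\f^{(k)}(\bx^{(k)}) \;=\; \rveck_E(\bx)^\top \bbeta\bone \;+\; \bigl[\rveck_E(\bx)^\top \f^{(k)}(\bX^{(k)}) - c_E\f^{(k)}(\bx^{(k)})\bigr] \;+\; \rveck_E(\bx)^\top \sum_{a\neq k}\f^{(a)}(\bX^{(a)}) \;+\; \rveck_E(\bx)^\top \vec{\epsilon},
\]
where $c_A = \lambda/(1+\lambda)$ and $c_B = 1$. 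After dividing through by $\norm{\rveck_E(\bx)}$ I need the first three deterministic pieces to vanish after normalization and the noise piece to converge weakly to $\mathcal{N}(0,\sigma^2)$.

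Each term is then dispatched by an already–proven ingredient. For the noise, Theorem \ref{thm:fixed} delivers the conditional CLT under a fixed design, and Lemma \ref{lem:tree-space} upgrades this to the random–design setting via a Borel--Cantelli argument on the restricted tree–space cardinality from Assumption \ref{aspt:restricted-tree-space}, so that $\rveck_E(\bx)^\top \vec{\epsilon}/\norm{\rveck_E(\bx)} \overset{d}{\to} \mathcal{N}(0,\sigma^2)$ along almost every sample path of the covariates. Lemma \ref{lem:baseline} bounds the baseline contribution by $O(1/n)$; Lemmas \ref{lem:samefeature1} and \ref{lem:samefeature2} bound the same–feature bias by $O(1/n)$; and Lemmas \ref{lem:rateofleakage-1} and \ref{lem:rateofleakage-2} decompose the cross–feature leakage into an inner–rim term of size $O(d_n \log n)$ and a peripheral term of size $O(1/n)$. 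Combining each of these with the lower bound $\norm{\rveck_E(\bx)} = \Omega(n^{-1/3})$ from Lemma \ref{lem:rateofr} yields normalized rates that vanish, and Slutsky's lemma then gives
\[
\frac{\widehat{\f}_E^{(k)}(\bx) - c_E\,\f^{(k)}(\bx^{(k)})}{\norm{\rveck_E(\bx)}} \;\overset{d}{\to}\; \mathcal{N}(0,\sigma^2),
\]
as claimed in both the $E=A$ and $E=B$ regimes.

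The main obstacle, or at least the subtlest point, is ensuring each deterministic bias is genuinely $o\bigl(\norm{\rveck_E(\bx)}\bigr) = o(n^{-1/3})$ rather than merely $o(1)$. The $O(1/n)$ baseline and same–feature bounds are clearly absorbed by $n^{-1/3}$, but the $O(d_n \log n)$ inner–rim piece of the leakage is tight enough to matter, relying critically on Assumption \ref{aspt:leaf-diameter} in its sharpened form $d_n = o(1/\log n)$; likewise the peripheral piece relies on the exponential resolvent decay of $\bI+\jmat\bK$ (Algorithm \ref{alg:brebm_ident}) or $\bI+\mK$ (Algorithm \ref{alg:ebm_bratp}), which in turn needs the spectral bounds from Corollary \ref{cor:spec-P1} and Lemmas \ref{lem:agg-K}--\ref{lem:spec-P2} to hold on the centered subspace. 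The careful bookkeeping is to arrange the Borel--Cantelli event from Lemma \ref{lem:tree-space} so that the $\Theta(n^{-1/3})$ two–sided control of $\norm{\rveck_E(\bx)}$ and every bias bound hold simultaneously on a single probability–one event, after which Slutsky delivers the result uniformly over both algorithm variants.
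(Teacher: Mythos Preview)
Your proposal is correct and follows essentially the same approach as the paper: the same four-term decomposition into baseline, same-feature bias, cross-feature leakage, and noise; the same invocation of Lemmas \ref{lem:baseline}, \ref{lem:samefeature1}--\ref{lem:samefeature2}, \ref{lem:rateofleakage-1}--\ref{lem:rateofleakage-2} for the bias pieces and Theorem \ref{thm:fixed} with Lemma \ref{lem:tree-space} for the noise; and the same Slutsky combination at the end. Your extra bookkeeping on the $o(n^{-1/3})$ versus $o(1)$ distinction and on arranging a single almost-sure event is more explicit than the paper's terse ``all terms except the last are $o_p(1)$'', but the underlying argument is identical.
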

	\begin{proof}
		We show the proof for Algorithm \ref{alg:brebm_ident} as an example. We break the numerator into three terms.
		
		$$
		\frac{\widehat{\f}_A^{(k)}(\bx)-\frac{\lambda}{1+\lambda}\f^{(k)}(\bx)}{\norm{\rveck_A(\bx)}} = \frac{1}{\norm{\rveck_A(\bx)}}(\rveck_A(\bx)^\top \bbeta 
		+ \big(\rveck_A(\bx)^\top \f^{(k)}(\bX^{(k)}) - \frac{\lambda}{1+\lambda}\f^{(k)}(\bx^{(k)})\big) 
		+ \rveck_A(\bx)^\top \sum_{a\neq k} \f^{(a)}(\bX^{(a)})
		+ \rveck_A(\bx)^\top \vec{\epsilon})
		$$
		
		Under restricted tree space assumptions, all terms except the last term are $o_p(1)$, and the last term is conditionally asymptotically normal. By Slutsky's theorem, the sum has a limiting distribution being $\mathcal{N}(0, \sigma^2)$. The proof for Algorithm \ref{alg:ebm_bratp} follows the same arguments.
	\end{proof}
	
	The CLT holds trivially for the baseline vector as well, by definition of the fixed point it converge to.
	
	\begin{theorem}[CLT for Baseline Vector]
		\label{thm:baseline-clt}
		Let $\widehat\bbeta=\tfrac{1}{n}\sum_{i=1}^n \by_i$ denote the baseline estimator
		returned by Algorithm~\ref{alg:brebm_ident}. 
		Under Assumption \ref{aspt:integrity}~\ref{aspt:proj-orth}, the random vector for Algorithm \ref{alg:brebm_ident} and Algorithm \ref{alg:ebm_bratp}
		$\sqrt{n}\,(\widehat\bbeta-\bbeta)$ converges in distribution to a Gaussian:
		$$
		\sqrt{n}\,(\widehat\bbeta-\bbeta)
		\;\;\overset{d}{\longrightarrow}\;\; \mathcal N(0,\sigma^2),
		$$
		where $\beta=\E[Y]$ is the population intercept.
	\end{theorem}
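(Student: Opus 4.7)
The plan is to reduce this claim to the classical Lindeberg--Levy CLT for a sample mean. The first observation is that the algorithmic output $\widehat{\bbeta}$ coincides with the ordinary sample mean of the responses: by Theorem~\ref{thm:finite-sample-convergence-1} (for Algorithm~\ref{alg:brebm_ident}) and Theorem~\ref{thm:finite-sample-convergence-2} (for Algorithm~\ref{alg:ebm_bratp}), we already have $\widehat{\bbeta}_b \overset{a.s.}{\longrightarrow} \bar{\by} := \tfrac{1}{n}\bone^\top\by$ as the boosting round $b\to\infty$ with the sample $(\bX_n,\by_n)$ held fixed. It therefore suffices to establish the $n\to\infty$ CLT for $\bar{\by}$ alone; none of the tree geometry or kernel machinery from Section~\ref{sec:theory} actually enters.

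Given this reduction, under Assumption~\ref{aspt:truth} the responses $y_i = \bbeta + \sum_k \f^{(k)}(\bx_i^{(k)}) + \varepsilon_i$ are i.i.d., with $\E[y_i]=\bbeta$ (using the \emph{population} centering $\E[\f^{(k)}(\bx^{(k)})]=0$, which is implied together with the \emph{sample} centering convention stated just before Assumption~\ref{aspt:truth}) and with finite second moment, since each $\f^{(k)}$ is $L$-Lipschitz on a compact, bounded-density support and $\varepsilon$ is sub-Gaussian with variance $\sigma^2$. Classical Lindeberg--Levy then delivers $\sqrt{n}(\bar{\by} - \bbeta)\overset{d}{\longrightarrow}\mathcal{N}(0,\mathrm{Var}(y_1))$. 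Combining with Slutsky and any consistent response-variance estimator $\hat\sigma^2$ (exactly the ``in-sample or out-of-bag'' estimator already prescribed for the other intervals) yields the usual Wald-type interval $\widehat{\bbeta}\pm z_{1-\alpha/2}\hat\sigma/\sqrt{n}$ quoted in the inference section.

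The main subtlety worth flagging is the variance constant: strictly one has $\mathrm{Var}(y_1) = \sigma^2 + \mathrm{Var}\!\left(\sum_k \f^{(k)}(\bx^{(k)})\right)$ rather than the noise variance alone, so the statement ``$\mathcal{N}(0,\sigma^2)$'' should be read under the convention that $\sigma^2$ refers to the total response variance (matching how $\hat\sigma^2$ is actually computed). The only bookkeeping issue beyond this convention is the order of limits---the finite-sample fixed-point results converge in $b$ for each fixed $n$, so we take $b\to\infty$ first (reducing the algorithmic output to $\bar{\by}$ at every $n$) and then $n\to\infty$ via Lindeberg--Levy. Because this order is natural and no joint $(b,n)$ rate is needed, the argument is clean and requires no machinery beyond what is already available in the excerpt.
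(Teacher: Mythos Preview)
Your proposal is correct and follows essentially the same route as the paper: reduce $\widehat\bbeta$ to the sample mean $\bar\by$ via the finite-sample fixed-point results, then invoke the classical i.i.d.\ CLT. The paper's own proof is no more than this.

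Your flag about the variance constant is well taken and in fact sharper than the paper's argument: the paper writes $y_i=f(\bx_i)+\epsilon_i$, asserts $\E[y_i]=\bbeta$, and then concludes with limiting variance $\sigma^2$, silently dropping the $\mathrm{Var}\big(\sum_k\f^{(k)}(\bx^{(k)})\big)$ contribution exactly as you note. So the discrepancy you identify is present in the paper too; your reading of $\sigma^2$ as the total response variance (consistent with how $\hat\sigma^2$ is computed for the intercept interval) is the right reconciliation.
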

	
	\begin{proof}
		By construction of the algorithm, the intercept is always updated by the
		global mean of residuals. At convergence this coincides with the empirical
		mean of the responses:
		$$
		\widehat\bbeta = \frac{1}{n}\sum_{i=1}^n \by_i.
		$$
		Since $\by_i=f(\bx_i)+\epsilon_i$ with $\E[\epsilon_i]=0$ and $\text{Var}(\epsilon_i)=\sigma^2<\infty$,
		we have $\E[\by_i]=\bbeta$.
		Because the $\{\by_i\}$ are i.i.d.\ sub–Gaussian with finite variance,
		the Lindeberg–Feller central limit theorem applies, yielding
		$$
		\sqrt{n}\,(\widehat\bbeta-\bbeta) 
		= \frac{1}{\sqrt{n}}\sum_{i=1}^n (\by_i-\E[\by_i])
		\;\;\overset{d}{\longrightarrow}\;\; \mathcal N(0,\sigma^2).
		$$
	\end{proof}
	
	\begin{proof}
		By construction of the algorithm, the intercept is always updated by the
		global mean of residuals. At convergence this coincides with the empirical
		mean of the responses:
		$$
		\widehat\bbeta = \frac{1}{n}\sum_{i=1}^n \by_i.
		$$
		Since $\by_i=f(\bx_i)+\epsilon_i$ with $\E[\epsilon_i]=0$ and $\text{Var}(\epsilon_i)=\sigma^2<\infty$,
		we have $\E[\by_i]=\beta$.
		Because the $\{\by_i\}$ are i.i.d.\ sub–Gaussian with finite variance,
		the Lindeberg–Feller central limit theorem applies, yielding
		$$
		\sqrt{n}\,(\widehat\bbeta-\bbeta) 
		= \frac{1}{\sqrt{n}}\sum_{i=1}^n (\by_i-\E[\by_i])
		\;\;\overset{d}{\longrightarrow}\;\; \mathcal N(0,\sigma^2).
		$$
	\end{proof}

	\section{Stochastic Contraction Mapping Theorem} \label{app:ContractionMapping}
    
	\begin{theorem}
		\label{thm:stochastic-contraction-mapping}
		Given $\mathbb{R}^d$-valued stochastic process $\{\mathbf{z}_t\}_{t \in \mathbb{N}}$, a sequence of $0 < \lambda_t \leq 1$, define
		\begin{gather*}
			\mathcal{F}_0 = \emptyset, \mathcal{F}_t = \sigma(\mathbf{z}_1,\dots, \mathbf{z}_t), \\
			\mathbf{\epsilon}_t =  \mathbf{z}_t - \mathbb{E}[\mathbf{z}_t | \mathcal{F}_{t-1}].
		\end{gather*}
		We call $\mathbf{z}_t$ a stochastic contraction if the following properties hold
		\begin{enumerate}
			\item Vanishing coefficients $$\sum_{t=1}^{\infty} (1-\lambda_t) = \infty, \mbox{ i.e. } \prod_{t=1}^{\infty}\lambda_t = 0.$$
			\item Mean contraction $$||\mathbb{E}[\mathbf{z}_t|\mathcal{F}_{t-1}]|| \leq \lambda_t \norm{\mathbf{z}_{t-1}}, a.s..$$
			\item Bounded deviation $$\sup \norm{\epsilon_t} \to 0, \quad \sum_{t=1}^{\infty}\mathbb{E}[\norm{\epsilon_t}^2] < \infty.$$
		\end{enumerate}
		In particular, a multidimensional stochastic contraction exhibits the following behavior
		\begin{enumerate}
			\item Contraction $$\mathbf{z}_t \overset{a.s.}{\rightarrow} 0.$$
			\item Kolmogorov inequality
			\begin{align}
				\label{fml:kolmax}
				P\left( \sup_{t \geq T}\norm{\mathbf{z}_t} \leq \norm{\mathbf{z}_T} + \delta  \right) \geq 1- \frac{4\sqrt{d}\sum_{t=T+1}^{\infty}\mathbb{E}[\epsilon_t^2]}{\min\{\delta^2, \beta^2\}}
			\end{align}
			holds for all $T, \delta > 0$ s.t. $\beta = \norm{\mathbf{z}_T} + \delta - \sqrt{d} \sup_{t > T} \norm{\epsilon_t} > 0$.
		\end{enumerate}
	\end{theorem}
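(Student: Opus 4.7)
The plan is to reduce the multidimensional statement to a scalar Lyapunov recursion on $V_t := \|\mathbf{z}_t\|^2$. First I would decompose $\mathbf{z}_t = \mathbb{E}[\mathbf{z}_t \mid \mathcal{F}_{t-1}] + \epsilon_t$, expand the squared norm, and use $\mathbb{E}[\epsilon_t \mid \mathcal{F}_{t-1}] = 0$ together with the mean-contraction hypothesis to obtain
\[
\mathbb{E}[V_t \mid \mathcal{F}_{t-1}] \;\leq\; \lambda_t^2 \, V_{t-1} \;+\; \mathbb{E}[\|\epsilon_t\|^2 \mid \mathcal{F}_{t-1}].
\]
Since $\sum_t \mathbb{E}[\|\epsilon_t\|^2] < \infty$, Fubini gives $\sum_t \mathbb{E}[\|\epsilon_t\|^2 \mid \mathcal{F}_{t-1}] < \infty$ almost surely, so the recursion fits the Robbins--Siegmund template with $\alpha_t = 0$, $\beta_t = \mathbb{E}[\|\epsilon_t\|^2 \mid \mathcal{F}_{t-1}]$, and $\gamma_t = (1-\lambda_t^2) V_{t-1}$.

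Robbins--Siegmund then delivers two consequences: $V_t$ converges almost surely to some nonnegative limit $V_\infty$, and $\sum_t (1-\lambda_t^2)\, V_{t-1} < \infty$ almost surely. To force $V_\infty = 0$, I would use $1-\lambda_t^2 \geq 1-\lambda_t$ together with the vanishing-coefficient hypothesis $\sum_t (1-\lambda_t) = \infty$: on any event where $V_\infty > 0$, the tail values $V_{t-1}$ are bounded below by a positive constant, which forces $\sum_t (1-\lambda_t^2) V_{t-1} = \infty$, a contradiction. This yields the first conclusion $\mathbf{z}_t \to 0$ almost surely.

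For the Kolmogorov-type tail bound I would pass to the Doob decomposition $V_t = M_t + A_t$, where $M_t$ is a martingale and $A_t \leq \sum_{s \leq t} \mathbb{E}[\|\epsilon_s\|^2 \mid \mathcal{F}_{s-1}]$ is predictable. A one-sided Doob maximal inequality applied to $M_t - M_T$ controls $\mathbb{P}(\sup_{t \geq T} V_t \geq V_T + \eta^2)$ by $\sum_{s > T} \mathbb{E}[\|\epsilon_s\|^2]/\eta^2$. To convert from the $V_t$-scale to the $\|\mathbf{z}_t\|$-scale and to expose the quantity $\beta = \|\mathbf{z}_T\| + \delta - \sqrt{d}\sup_{t>T}\|\epsilon_t\|$ in the denominator, I would bound $(\|\mathbf{z}_T\|+\delta)^2 \geq V_T + 2\|\mathbf{z}_T\|\delta + \delta^2$ and split on whether $\sup_{t>T}\|\epsilon_t\| \leq \beta/\sqrt{d}$; the factor $\sqrt{d}$ appears when comparing $\|\mathbf{z}_t\|_2$ to coordinate-wise excursions of the martingale part. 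The minimum $\min\{\delta^2,\beta^2\}$ then arises naturally as the smaller of the quadratic excursion budget and the linear one compatible with the contraction.

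The main obstacle I anticipate is the Kolmogorov inequality, not the almost-sure convergence. The convergence is a clean application of Robbins--Siegmund once the squared-norm recursion is in hand, but the uniform-in-$t$ probabilistic bound couples additive noise, the dimension factor $\sqrt{d}$, and the contraction in a single inequality. Making both branches of $\min\{\delta^2,\beta^2\}$ fall out simultaneously requires a truncation event compatible with the supermartingale bound, and verifying that the residual noise mass $\sum_{t>T}\mathbb{E}[\epsilon_t^2]$ dominates the combined excursion probability on both branches is the delicate accounting step.
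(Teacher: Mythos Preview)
The paper does not actually prove this theorem. Appendix~\ref{app:ContractionMapping} states the result and immediately moves on to the Demko--Moss--Smith proposition; the stochastic contraction mapping theorem is treated as a known tool (it is inherited from \citet{zhou2019boulevardregularizedstochasticgradient}) and invoked, not established, in the finite-sample convergence proofs. So there is no ``paper's own proof'' to compare against.

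On the merits of your plan: the almost-sure convergence argument via the Lyapunov function $V_t=\|\mathbf z_t\|^2$ and Robbins--Siegmund is correct and is the standard route. Your derivation of $\E[V_t\mid\mathcal F_{t-1}]\le \lambda_t^2 V_{t-1}+\E[\|\epsilon_t\|^2\mid\mathcal F_{t-1}]$ is clean, the passage from $\sum_t\E[\|\epsilon_t\|^2]<\infty$ to a.s.\ finiteness of the conditional sum is right, and the contradiction using $1-\lambda_t^2\ge 1-\lambda_t$ together with $\sum_t(1-\lambda_t)=\infty$ is exactly how one forces $V_\infty=0$.

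The Kolmogorov-type maximal inequality is, as you suspect, the part that needs real work. Your outline (Doob decomposition of $V_t$, one-sided maximal inequality on the martingale part, then a scale conversion) is the right shape, but the appearance of $\sqrt d$ and of $\min\{\delta^2,\beta^2\}$ in the stated bound points to a coordinate-wise argument rather than a purely $\ell_2$ one: the original proof controls each coordinate of $\mathbf z_t$ separately, applies a scalar Kolmogorov/Doob inequality per coordinate, and then union-bounds, which is where $\sqrt d$ enters both in the constant $4\sqrt d$ and in the definition of $\beta$. Your ``split on whether $\sup_{t>T}\|\epsilon_t\|\le\beta/\sqrt d$'' is heuristically in the right direction, but as written it does not yet produce the stated constants; you would need to track the two cases (large excursion driven by a single $\epsilon_t$ versus accumulated martingale fluctuation) separately and show each is controlled by $4\sqrt d\sum_{t>T}\E[\epsilon_t^2]$ divided by the corresponding scale. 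That accounting is doable but is not yet on the page.
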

	
	\begin{proposition}[Demko--Moss--Smith, General Sparse Exponential Decay]
		\label{prop:DMS}
		Let $A$ be positive definite, bounded, and boundedly invertible on $\ell^2(S)$. 
		For any such $A$, define the support sets
		$$
		S_n(A) := \bigcup_{k=0}^n \{(i,j): (A^k)_{ij}\neq 0\}, 
		\qquad 
		D_n(A) := (S\times S)\setminus S_n(A).
		$$
		Then there exist constants $C_0<\infty$ and $q\in(0,1)$, depending only on the spectral bounds of $A$, such that
		$$
		\sup\{\,|A^{-1}(i,j)| : (i,j)\in D_n(A)\,\} \;\le\; C_0\, q^{\,n+1}.
		$$
		
		Here the decay factor $q$ is given explicitly by
		$$
		q \;=\; \frac{\sqrt{r}-1}{\sqrt{r}+1}, 
		\qquad r = \frac{b}{a},
		$$
		where $[a,b]$ is the smallest interval containing the spectrum of $A$.
	\end{proposition}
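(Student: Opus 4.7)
The plan is to prove this via Chebyshev polynomial approximation of $1/x$ on the spectral interval of $A$, following the classical Demko--Moss--Smith argument. The governing identity is that for \emph{any} polynomial $p$ of degree at most $n$, the matrix $p(A)$ has support contained in $S_n(A)$: indeed, $p(A)=\sum_{k=0}^n c_k A^k$, so $(p(A))_{ij}\neq 0$ only if some $(A^k)_{ij}\neq 0$ for $k\le n$. Hence for any $(i,j)\in D_n(A)$ we have $(p(A))_{ij}=0$, and consequently
\[
\bigl|A^{-1}(i,j)\bigr| \;=\; \bigl|(A^{-1}-p(A))(i,j)\bigr| \;\le\; \bigl\|A^{-1}-p(A)\bigr\|_{\mathrm{op}}.
\]

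Next, I would invoke the functional calculus for the bounded self-adjoint operator $A$ (here positive definite with spectrum in $[a,b]$):
\[
\bigl\|A^{-1}-p(A)\bigr\|_{\mathrm{op}} \;=\; \sup_{\lambda\in\sigma(A)} \Bigl|\tfrac{1}{\lambda}-p(\lambda)\Bigr| \;\le\; \sup_{\lambda\in[a,b]} \Bigl|\tfrac{1}{\lambda}-p(\lambda)\Bigr|.
\]
Now optimize over $p$ of degree $\le n$: this is the classical best-uniform-approximation problem for the function $1/x$ on $[a,b]$. A direct computation (mapping $[a,b]$ to $[-1,1]$ via $\lambda=\tfrac{a+b}{2}+\tfrac{b-a}{2}t$ and expanding $1/\lambda$ in Chebyshev polynomials $T_k(t)$) yields the geometric series
$\tfrac{1}{\lambda} \;=\; \tfrac{2}{\sqrt{ab}}\sum_{k\ge 0}(-1)^k\rho^k T_k(t)$
with $\rho = \tfrac{\sqrt{b}-\sqrt{a}}{\sqrt{b}+\sqrt{a}} = \tfrac{\sqrt{r}-1}{\sqrt{r}+1} = q$.
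Truncating after degree $n$ and using $\|T_k\|_\infty=1$ bounds the remainder by a geometric tail of ratio $q$, giving
\[
\inf_{\deg p\le n}\sup_{\lambda\in[a,b]}\Bigl|\tfrac{1}{\lambda}-p(\lambda)\Bigr|\;\le\;\frac{2}{\sqrt{ab}}\cdot\frac{q^{\,n+1}}{1-q}\;=:\;C_0\,q^{\,n+1},
\]
with $C_0$ depending only on the spectral bounds $a,b$. Combining with the first display gives the claim.

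The main obstacle, and the only non-bookkeeping step, is establishing the explicit decay constant $q=(\sqrt{r}-1)/(\sqrt{r}+1)$ rather than a generic $q<1$. This is pinned down by the Chebyshev expansion above, whose geometric rate reflects the radius of the largest Bernstein ellipse on which $1/\lambda$ is analytic (i.e.\ the one that just avoids the origin after the affine rescaling of $[a,b]$ to $[-1,1]$). The constants in the Loewner/sandwich bounds for $\bK$ and $\mK$ from Lemmas~\ref{lem:spec-of-K}--\ref{lem:spec-P2} feed directly into $r=b/a$, which is exactly how the proposition is consumed in Lemmas~\ref{lem:rateofleakage-1} and~\ref{lem:rateofleakage-2}. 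Everything else (operator functional calculus, support of $p(A)$, truncation of the Chebyshev series) is standard and mechanical once the approximation rate is in hand.
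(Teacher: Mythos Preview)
Your proof is correct and follows the classical Demko--Moss--Smith argument via Chebyshev polynomial approximation of $1/x$ on the spectral interval. The paper does not supply a proof of this proposition at all: it is stated as a known result in Appendix~\ref{app:ContractionMapping} and simply invoked in Lemmas~\ref{lem:rateofleakage-1}--\ref{lem:rateofleakage-2}, so there is nothing to compare against on the paper's side. Your derivation of the rate $q=(\sqrt{r}-1)/(\sqrt{r}+1)$ via the Bernstein ellipse for $1/\lambda$ after the affine rescaling is the standard route, and the support observation $(p(A))_{ij}=0$ for $(i,j)\in D_n(A)$ together with the entrywise bound $|B_{ij}|\le\|B\|_{\mathrm{op}}$ is exactly what is needed.
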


	\section{Additional Figures} \label{app:Figures}
	
	\subsection{Study of Obesity Data \cite{estimation_of_obesity_levels_based_on_eating_habits_and_physical_condition__544}}
	\begin{figure}[H]
		\centering
		\includegraphics[width=\linewidth]{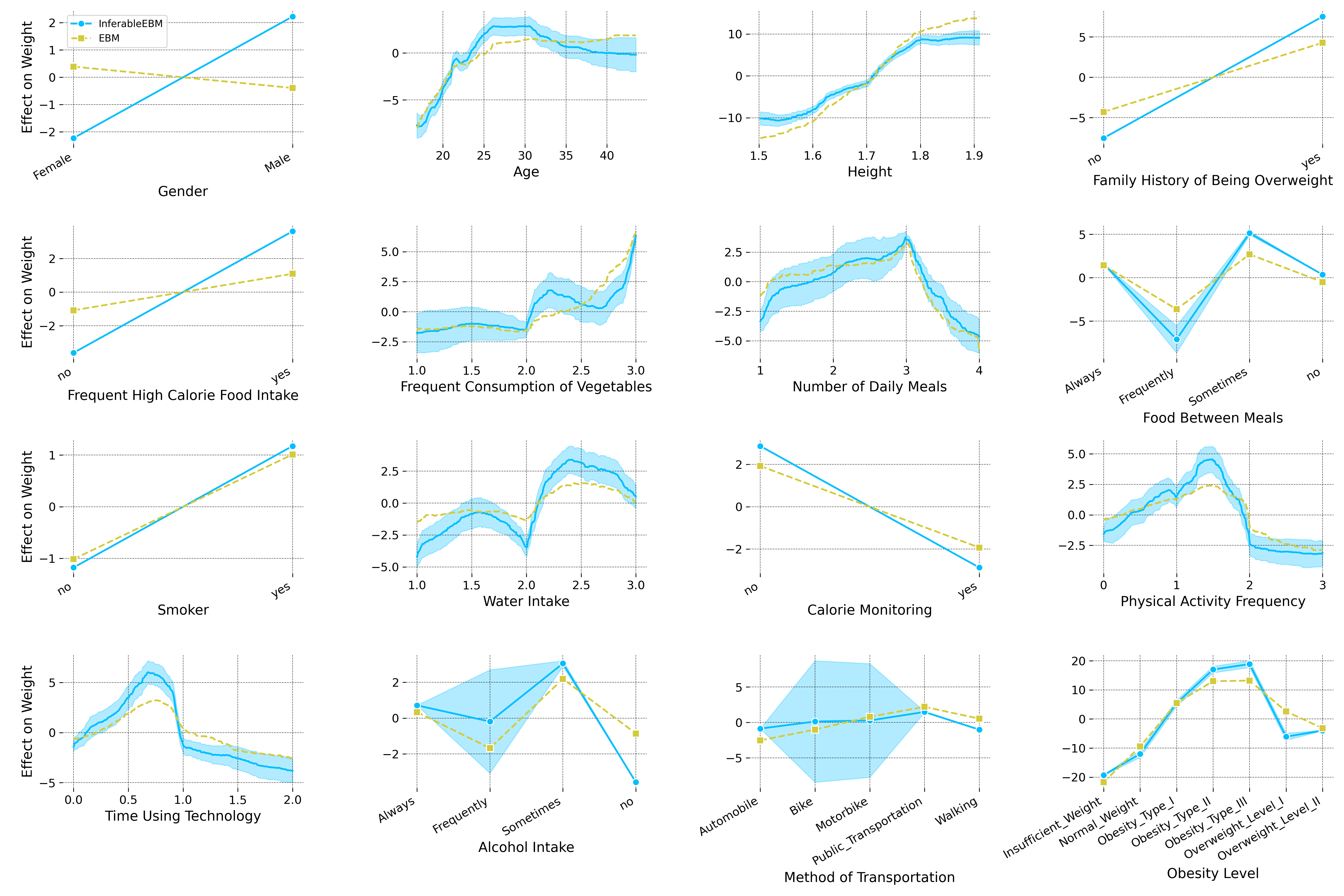}
		\caption{Full set of estimated feature effects for Algorithm \ref{alg:brebm_ident} and EBM proper \cite{nori2019interpretmlunifiedframeworkmachine}, when regressing weight on the given set of covariates. Algorithm \ref{alg:brebm_ident} by-and-large yields qualitatively similar estimates, with confidence intervals for each term displayed in the shaded area. The terms roughly agree apart from the effect of gender on weight, which we argue should be higher for men than for women.}
		\label{fig:obesity-feature-effects}
	\end{figure}
	
\end{document}